\DeclareMathOperator*{\argmin}{arg\,min}
\theoremstyle{plain}
\newtheorem{thm}{Theorem}%[section]
\theoremstyle{definition}
\newtheorem*{defn*}{Definition}%[section]
\theoremstyle{remark}
\begin{document}
\title{Homotopic Convex Transformation: A New Landscape Smoothing Method for the Traveling Salesman Problem}
\date{}

\author{Jialong~Shi,
        Jianyong~Sun,
        Qingfu~Zhang
        ~and~Kai~Ye
        \thanks{The work was partially supported by the National Natural Science Foundation of China (NSFC) under Grants 11991023, 61903294 and 61876163 and the China Postdoctoral Science Foundation under Grant 2018M643656. \emph{Corresponding author: Jianyong Sun.}}
        \thanks{Jialong Shi and Jianyong Sun are with the School of Mathematics and Statistics, Xi'an Jiaotong University, Xi'an, China (e-mail: \{jialong.shi, jy.sun\}@xjtu.edu.cn).}
        \thanks{Qingfu Zhang is with the Department of Computer Science, City University of Hong Kong, Hong Kong (e-mail: qingfu.zhang@cityu.edu.hk).}
        \thanks{Kai Ye is with MOE Key Lab for Intelligent Networks \& Networks Security, School of Electronics and Information Engineering, Xi'an Jiaotong University, Xi'an, 710049, China, and the First Affiliated Hospital of Xi'an Jiaotong University, Xi'an, 710061, China (e-mail: kaiye@xjtu.edu.cn).}
        }

\markboth{}
{Shi \MakeLowercase{\textit{et al.}}: }

\maketitle

\begin{abstract}
  This paper proposes a novel landscape smoothing method for the symmetric Traveling Salesman Problem (TSP). We first define the Homotopic Convex (HC) transformation of a TSP as a convex combination of a well-constructed simple TSP and the original TSP. The simple TSP, called the convex-hull TSP, is constructed by transforming a known local or global optimum. We observe that controlled by the coefficient of the convex combination, with local or global optimum, (i) the landscape of the HC transformed TSP is smoothed in terms that its number of local optima is reduced compared to the original TSP; (ii) the fitness distance correlation of the HC transformed TSP is increased. Further, we observe that the smoothing effect of the HC transformation depends highly on the quality of the used optimum. A high-quality optimum leads to a better smoothing effect than a low-quality optimum. We then propose an iterative algorithmic framework in which the proposed HC transformation is combined within a heuristic TSP solver. It works as an escaping scheme from local optima aiming to improve the global search ability of the combined heuristic. Case studies using the 3-Opt and the Lin-Kernighan local search as the heuristic solver show that the resultant algorithms significantly outperform their counterparts and two other smoothing-based TSP heuristic solvers on most of the test instances with up to \text{20,000} cities.
\end{abstract}

\begin{IEEEkeywords} Homotopic convex transformation, traveling salesman problem, landscape smoothing, local search, combinatorial optimization\end{IEEEkeywords}

\IEEEpeerreviewmaketitle

\section{Introduction}

As a well-known ${\cal NP}$-hard combinatorial optimization problem, the Traveling Salesman Problem (TSP)~\cite{applegate2006traveling} has been extensively studied for years. Algorithms, including both exact and metaheuristics, have been developed in a number of literatures. It's been well acknowledged that exact algorithms struggle to find the global optima of large size TSPs within a reasonable time. Metaheuristics, while not guaranteeing to find the global optimum, can provide near-optimal solutions within tolerable time for medium and large size TSPs. The hardness of the TSP is due to the ruggedness and irregularity of its fitness landscape. It is extremely hard for a metaheuristic to find the global optimum of a TSP when there are many local optima in its search landscape.

In this paper, we propose a novel method to smooth the TSP landscape. For a TSP with $n$ cities, we first construct a 2D Euclidean TSP in which the $n$ cities lie on the convex hull of the TSP graph and the order of the cities follows the order they appear in a known local optimum of the original TSP. The generated TSP is named as the \emph{convex-hull TSP}. It can be proved that the convex-hull TSP is unimodal, which means that a search process starting from any initial solution will always end in the global optimum of the convex-hull TSP. The original TSP can then be smoothed by a convex combination of the convex-hull TSP and the original TSP with coefficient $\lambda\in [0,1]$. When $\lambda=0$, the transformed TSP is just the original TSP, while $\lambda=1$ means the convex-hull TSP. This actually defines a \emph{homotopic} transformation from the original TSP to the transformed TSP. We thus call the proposed combination as the {\em Homotopic Convex (HC) transformation}.

To investigate the smoothing effects of the HC transformation, we conduct landscape analysis experiments on 12 TSP instances from the well-known TSPLIB~\cite{reinelt1991tsplib}. In our experiment, the ruggedness of the TSP landscape is measured by four metrics, including local optimum density, escaping rate, Fitness Distance Correlation (FDC) and runtime. These metrics are computed from the results of 1000 runs of Iterated Local Search (ILS)~\cite{lourencco2010iterated} with the 3-Opt local search and the double bridge perturbation. In our experiments, different values of $\lambda$ are tested and their influence to the smoothing effect is analyzed.

We observe from the experimental results that the HC transformation exhibits the following three features. First, the landscape of the transformed TSP has less number of local optima than the original TSP, which implies a smoother TSP landscape. Second, in the transformed TSP, the FDC is higher than that of the original TSP.  As stated in~\cite{merz2000fitness,merz2004advanced}, a high FDC implies that a search heuristic can find a path to the global optimum by observing the fitness change of the encountered local optima. Third, the strength of smoothing can be controlled by the coefficient $\lambda$. In addition, we observe that the runtime for ILS to find the global optimum is significantly reduced after the transformation.% The observations imply that the transformed TSP has a reduced hardness, which is in accord with the intuition that a smoother landscape means an easier TSP.

We thereby make use of these features to propose a general iterative algorithmic framework. In the framework, the HC transformation is combined within a TSP heuristic. During the search, the heuristic works on the transformed TSP, while the transformed TSP is iteratively updated by the newly-found best solution with respect to the original TSP. As case studies, we embed the 3-Opt local search and the Lin-Kernighan (LK) local search~\cite{lin1973effective} into the proposed framework. The instantiated algorithms are called Landscape Smoothing Iterated Local Search (LSILS) and LSILS-LK, respectively. In the experimental study, LSILS and LSILS-LK are compared against the original ILS and two existing landscape smoothing based algorithms on 17 TSP instances with up to 20,000 cities. Comparison results show that LSILS and LSILS-LK significantly outperform their counterparts on most of the test instances. Note that we do not claim the proposed LSILS is better than the state-of-the-art TSP solvers. Our goal is to show the potential of the HC transformation on improving the existing TSP heuristics.

The rest of this paper is organized as follows. Section~\ref{sec:TSP} presents the related work. Section~\ref{sec:HCT} presents the definition of the HC transformation and the experimental studies on the investigation of the characteristics of the HC transformation. In Section~\ref{sec:LSILS} the framework and the instantiated algorithms are presented. Experimental results against its counterparts are also provided in this section. Section~\ref{sec:conclu} concludes the paper and discusses future work.

\section{Related Work}\label{sec:TSP}

Given a set of cities and the cost between every pair of cities, the TSP is to find the most cost-effective tour that visits every city exactly once and returns to the starting city. It is ${\cal NP}$-hard~\cite{garey1979computers} and one of the most widely used testbeds in combinatorial optimization. The formal definition of the TSP can be stated as follows. Let ${\cal G} =({\cal V}, {\cal E})$ be a fully connected graph with cities as vertexes, where $\cal V$ is the vertex set and $\cal E$ the edge set. Denote $c_{i,j}>0$ the cost of the edge between vertex $i$ and vertex $j$, the objective function of a TSP is defined as
\begin{equation}
\begin{split}
\mbox{minimize} ~&~ f(x)= c_{x_{(n)},x_{(1)}} + \sum_{i=1}^{n-1}c_{x_{(i)},x_{(i+1)}},\\
\mbox{subject to} ~&~ x = (x_{(1)},x_{(2)},\dots,x_{(n)}) \in \mathcal{P}_n,
\end{split}
\end{equation}where $f:\mathcal{P}_n \to \mathbb{R}$ is the objective function and $\mathcal{P}_n$ is the permutation space of $\{1, 2, \cdots, n\}$. In this paper we focus on symmetric TSPs, i.e., $c_{i,j} = c_{j,i}$ for all $i,j \in \{1,2,\dots,n\}$.

The hardness of a TSP is usually characterized by its fitness landscape. Formally, the fitness landscape of a combinatorial optimization problem is defined by $(\mathcal{S},f,d)$, where $\mathcal{S}$ is the solution space, $f:\mathcal{S}\to \mathbb{R}$ is the objective function (fitness) and $d$ is a distance measure on $\mathcal{S}$. For the TSP, the distance between any two solutions $x_1$ and $x_2$ is defined as the number of edges contained in one solution but not in the other:
\begin{equation}\label{eq:tsp_dist}
  d(x_1,x_2) = |\{e\in {\cal E}|e\in x_1 \land e\notin x_2\}|
\end{equation}where $e\in x$ means edge $e$ is in the solution $x$.

A great amount of efforts have been made on the TSP fitness landscape analysis. Among these studies, Stadler and Schnabl~\cite{stadler1992landscape} first investigated the landscape of symmetric and asymmetric TSPs by random walk. Boese~\cite{boese1995cost} first observed that there is a strong correlation between the distance of a solution to the global optimum and its cost when he investigated the landscape of a specific TSP instance att532. He called this phenomenon the \emph{big valley structure}. Hains et al.~\cite{hains2011revisiting} confirmed the existence of the big valley structure on some TSP instances. However, they also argued that the big valley breaks down into several ``funnels'' around some local optima with function values very close to that of the global optimum. Ochoa and Veerapen~\cite{ochoa2016deconstructing} analyzed the big valley structure assumption by their local optima network model. They concluded that in the four studied instances the big valley decomposes into a number of sub-valleys. However, Ochoa and Veerapen's conclusion depends strongly on the solution perturbation method used in their study. Shi et al.~\cite{shi2018eb} gave a more detailed definition of the big valley structure. In their definition, a TSP instance exhibits a big valley structure if (i) all the global optima of a TSP are located in a relatively small region in the solution space (this defines the ``bottom'' of the big valley), and (ii) there is a strong correlation between the fitness of a solution and its distance to the nearest global optimum (this defines the ``slope'' of the big valley). Empirical analysis on the landscape of 10 TSPLIB instances showed that 9 of the 10 test instances meet the detailed definition of the big valley structure.

In the work of Fonlupt et al.~\cite{fonlupt1997fitness}, it is found that the TSP landscape defined by 2-Opt-move has a higher FDC value than the TSP landscape defined by city-swap. Merz and Freisleben~\cite{merz2001memetic} studied the ruggedness and the FDC of the TSP landscape using the 3-Opt local search and the Lin-Kernighan (LK) local search~\cite{lin1973effective}. They observed that local optima are frequently close to each other and also close to the global optimum. Tayarani-N and Pr\"ugel-Bennett~\cite{tayarani2014landscape,tayarani2016analysis} also used the 3-Opt local search as a tool to analyze the characteristics of the fitness landscape on different classes of TSPs. They found that high-quality local optima are more likely to be found by a local search process than low-quality local optima and the probability of finding a global optimum decreases exponentially with increasing problem size. They also found that the Euclidean TSPs have relatively high FDC values and the value is relatively low for random TSP. For more detailed survey, interested readers please see~\cite{reidys2002combinatorial,watson2010introduction,pitzer2012comprehensive}.

Existing TSP metaheuristics fall basically into four categories. One kind is constructive based where a solution to a TSP is constructed gradually from partial to whole. Examples of such including Ant Colony Optimization (ACO)~\cite{dorigo2006ant}, Greedy Randomized Adaptive Search Procedure (GRASP)~\cite{feo1995greedy} and others. The second kind is local search based. Local search first defines a neighborhood structure on the solution space of the TSP. Starting from an initial solution, local search iteratively explores the neighborhood of the current solution for a better solution. Local search can only guarantee to find local optima. The most widely-used local search for the TSP is the $k$-Opt local search ($k\geq 2$), in which the neighborhood structure is defined by edge exchange. In each descent move of a $k$-Opt local search, $k$ edges are replaced by another $k$ edges. The well-known 2-Opt local search, 3-Opt local search~\cite{applegate2006traveling} and the LK local search all belong to this class. The third category is the population based metaheuristics, e.g. Genetic Algorithm (GA)~\cite{davis1991handbook}, Particle Swarm Optimization (PSO)~\cite{shi1999empirical} and others. In this category, solutions are created through recombination operators working in the permutation space. The fourth category is the Memetic Algorithm (MA)~\cite{krasnogor2005tutorial}, in which the local search based heuristics are hybridized with heuristics from other categories. %In the population based algorithms, a population of solutions is processed by genetic operators at each generation. Members of the population will be replaced by offspring generated by the genetic operators.

To escape from local optima, existing local search based metaheuristics for the TSP are usually embedded with global optimization strategies such as criterion, solution or problem perturbation and landscape smoothing. In criterion perturbation, rather than only accepting better solution in the neighborhood, a worse solution could be accepted in probability (e.g. Simulated Annealing~\cite{aarts2005simulated,charon1996mixing}) or if it prevents from returning to previously visited areas (e.g. Tabu Search~\cite{glover1999tabu}). In the solution perturbation methods, the current local optimum is perturbed to obtain a new solution and the local search process restarts from that new solution. The well-known ILS~\cite{lourencco2010iterated,applegate2003chained} belongs to this category.

The problem perturbation methods modify the problem itself so that the current local optimum is no longer locally optimal to the modified problem. In the Guided Local Search (GLS) proposed by Voudouris and Tsang~\cite{voudouris1999guided}, the original TSP is changed by adding penalties on some selected edges. Whenever the GLS is trapped in a local optimum, edges from the local optimum are evaluated and penalized so that the local optimum is no longer optimal in the next round of search. Shi et al.~\cite{shi2018eb} improved the GLS based on the big valley assumption. Their GLS maintains an elite solution and the edges in the elite solution will be protected from the penalization. Walshaw~\cite{walshaw2004multilevel} introduced a multilevel refinement approach to simplify the TSP. In each level, several selected cities are matched and the edges between them is fixed so that the problem is coarsened to an easier problem. In the backbone-guided local search proposed by Zhang and Looks~\cite{zhang2005novel}, the edge cost is reduced proportionally to the frequency that the edge appears in the historical local optima set.

Very few global search approaches have been proposed based on landscape smoothing. Gu and Huang~\cite{gu1994efficient} proposed to smooth the landscape of a TSP by edge cost manipulation. They normalize and transform the edge cost to
\begin{equation}\label{eq:gu_smooth_cost}
  c'_{i,j}(\alpha) =
  \left\{
    \begin{split}
     \bar c + (c_{i,j}-\bar c)^\alpha ~&~ \mbox{if}~ c_{i,j}\geq\bar c,\\
     \bar c - (\bar c-c_{i,j})^\alpha ~&~ \mbox{if}~ c_{i,j}<\bar c,\\
  \end{split}
  \right.
\end{equation}where $\bar c $ is the average cost of all edges and $\alpha\geq1$ is the smoothing factor. When $\alpha=1$, it is just the original TSP. Note that in Eq.~(\ref{eq:gu_smooth_cost}) the edge costs are normalized, hence $\lim_{\alpha \rightarrow +\infty} c'_{i,j} = \bar c$. That is, with increasing $\alpha$, all edge costs approach to a fixed value $\bar c$. This implies that all solutions will have the same fitness, i.e., the TSP landscape is smoothed to a plane, as sketched in Fig.~\ref{fig:smooth_gu1994}.
\begin{figure}%[H]
  \centering
  % Requires \usepackage{graphicx}
  \includegraphics[width=0.75\linewidth]{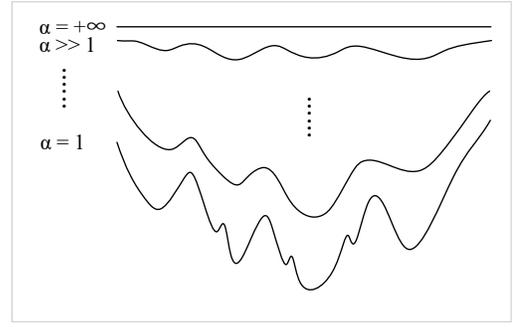}\\
  \caption{A sketch of the effect of the search space smoothing approach proposed by Gu and Huang (reproduced from~\cite{gu1994efficient}).}\label{fig:smooth_gu1994}
\end{figure}
Schneider et al.~\cite{schneider1997search} proposed several new smoothing functions (exponential, hyperbolic, sigmoidal, and logarithmic) for the TSP. Based on the work of Gu and Huang and Schneider et al., Coy et al.~\cite{coy1998see} carried out systematic experiments to investigate the smoothing effect of the smoothing functions to the fitness landscape. They claimed that the smoothing method can help local search escape from poor local optima by moving uphill occasionally and by not taking a downhill move occasionally. Further study by Coy et al.~\cite{coy2000computational} shows that a sequential smoothing algorithm with alternated convex and concave smoothing function can achieve satisfactory performance. Dong et al.~\cite{dong2006stochastic} proposed a stochastic local search metaheuristic based on the exponential smoothing method proposed by Schneider et al.. Hasegawa and Hiramatsu~\cite{hasegawa2013mutually} combined Gu and Huang's smoothing method within a MA and found that it is beneficial to apply the smoothing method in the MA. %The idea of the landscape smoothing is similar to the concept of \emph{reoptimization}~\cite{bockenhauer2008hardness}, which discusses the process of optimizing a problem instance using the optimum obtained in a similar problem instance. In \cite{bockenhauer2008hardness}, it has been shown that reoptimization can help improve metaheuristics on some ${\cal NP}$-hard problems.

\section{Homotopic Convex Transformation}\label{sec:HCT}

In this section, we present a new landscape smoothing method called the Homotopic Convex (HC) Transformation. It transforms the original TSP by combining it with a well-designed unimodal TSP, which is named as the \emph{convex-hull TSP}. In the following, we first define the convex-hull TSP and the HC transformation. The effects of the HC transformation to landscape smoothing are illustrated afterwards.

\subsection{Convex-Hull TSP}

Although the TSP is ${\cal NP}$-hard, some TSP instances are easy to solve. For example, a 2D Euclidean TSP with all the cities lying on the convex hull of the TSP graph, which we name it as the convex-hull TSP, is very easy to solve~\cite{macgregor1996human}. Fig.~\ref{fig:convex_hull} shows an example of the convex-hull TSP and its global optimum. The only global optimum of a convex-hull TSP is the convex hull itself.

\begin{figure}%[H]
  \centering
  \subfigure[a convex-hull TSP]{
    \label{fig:convex-hull_TSP} %% label for first subfigure
    \includegraphics[width=0.25\linewidth]{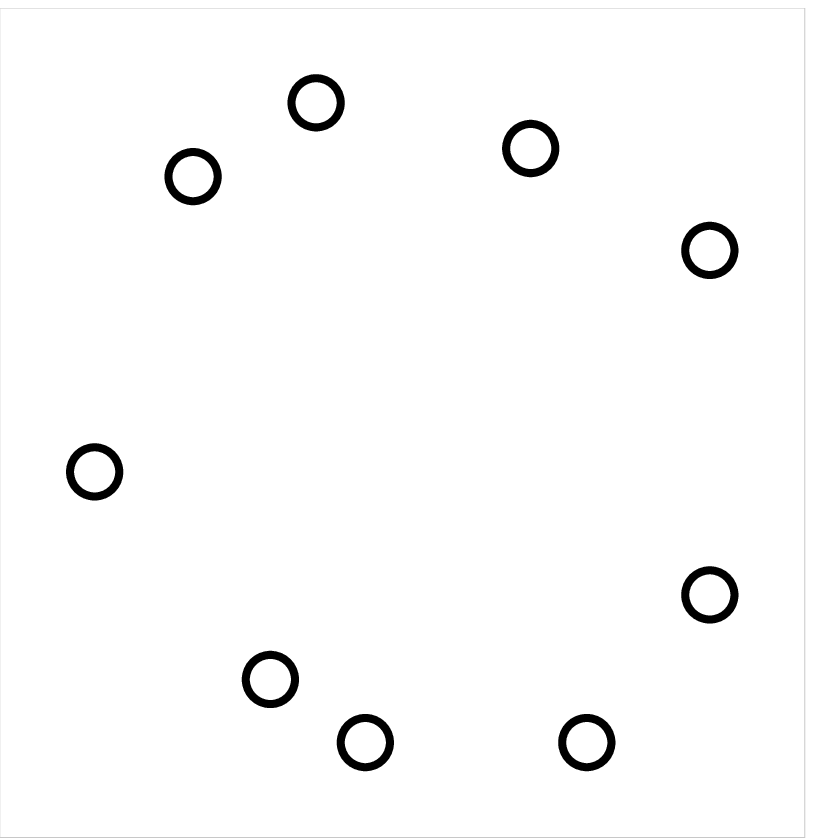}}
    \hspace{0.1\linewidth}
  \subfigure[the corresponding global optimum]{
    \label{fig:convex-hull_TSP_sol} %% label for first subfigure
    \includegraphics[width=0.25\linewidth]{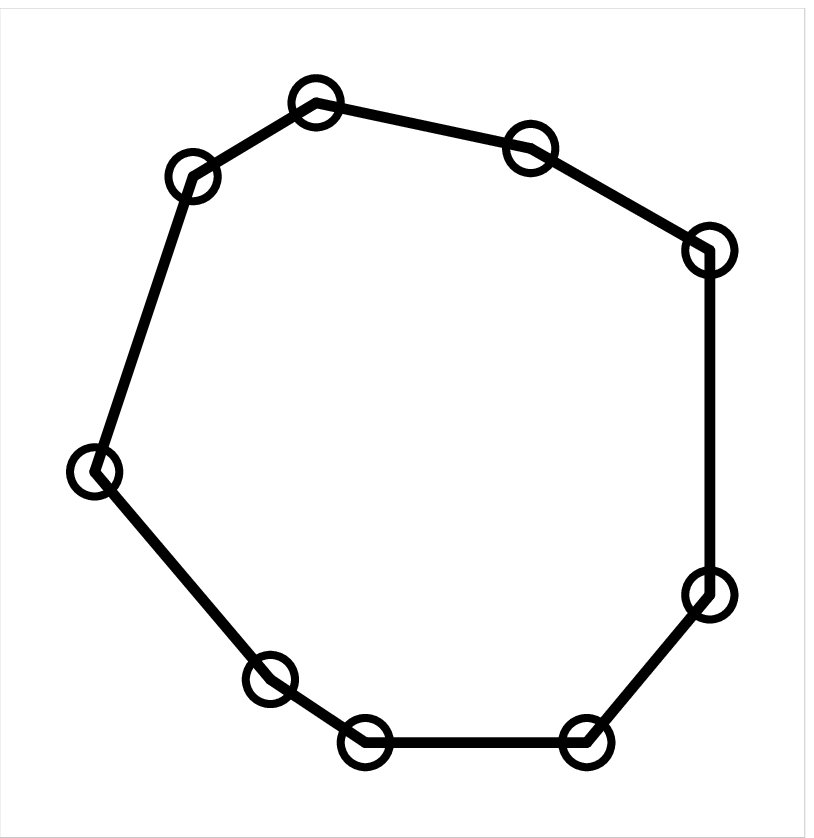}}
 \caption{An example of the convex-hull TSP and the corresponding global optimum}\label{fig:convex_hull}
\end{figure}

Here we prove that the convex-hull TSP is unimodal for any $k$-Opt local search, i.e., the TSP tour formed by the edges on the convex hull is the only $k$\emph{-optimal} tour~\cite{lin1965computer} of such TSP.
\begin{defn*}
A tour is said to be $k$\emph{-optimal} if it is impossible to obtain a tour with smaller cost by replacing any $k$ of its edges by any other set of $k$ edges.
\end{defn*}
The property of the $k$\emph{-optimal} tour is summarized in Theorem~\ref{thm:C_k} reproduced from~\cite{lin1965computer}.
\begin{thm}\label{thm:C_k}
Denote $C_k$ the set of all $k$-optimal tours, we have $C_1 \supset C_2 \supset \dots \supset C_n$. In other words, a $k$-optimal tour is also a $k'$-optimal tour for $k'<k$.
\end{thm}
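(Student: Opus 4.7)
The plan is to establish the inclusion $C_k \subseteq C_{k'}$ for every $k' < k$ by contradiction, via a simple padding argument that turns a hypothetical $k'$-improvement of the tour into a $k$-improvement of the same tour. Once this is done, chaining the inclusions for $k'=k-1, k-2, \ldots, 1$ yields the full chain $C_1 \supseteq C_2 \supseteq \cdots \supseteq C_n$, and the strict containment follows from standard examples of tours that are $k'$-optimal but not $(k'+1)$-optimal.

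For the core step, I would proceed as follows. Fix a $k$-optimal tour $T$ and assume, for contradiction, that $T$ is not $k'$-optimal for some $k' < k$. By definition, there exist an edge set $E \subseteq T$ with $|E|=k'$ and a distinct edge set $F$ with $|F|=k'$ such that the tour $T' := (T \setminus E) \cup F$ is Hamiltonian and has strictly smaller cost. Now I enlarge the exchange to size $k$ by padding: choose any $k-k'$ additional edges $G \subseteq T \setminus E$ (this is possible because a tour has $n$ edges and $n - k' \geq k - k'$), and define $\widetilde{E} := E \cup G$ and $\widetilde{F} := F \cup G$. Then $|\widetilde{E}| = |\widetilde{F}| = k$, and replacing $\widetilde{E}$ by $\widetilde{F}$ in $T$ produces exactly $(T \setminus E) \cup F = T'$, a Hamiltonian tour of strictly smaller cost. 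Moreover $\widetilde{E} \neq \widetilde{F}$: since $E \neq F$ there is some edge $e \in E \setminus F$, and because $G \cap E = \emptyset$ we have $e \notin G$, so $e \in \widetilde{E} \setminus \widetilde{F}$. This exhibits a cost-decreasing $k$-exchange on $T$, contradicting its $k$-optimality.

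The only substantive point — and what I expect to be the main obstacle — is a careful reading of the phrase ``replacing any $k$ of its edges by any other set of $k$ edges'' in the definition of $k$-optimality. The padding argument relies on the convention that ``other'' means merely distinct, not disjoint, so that the padded $k$-sets $\widetilde{E}$ and $\widetilde{F}$ (which share the edges in $G$) still constitute a legitimate $k$-exchange. Under the definition as stated in the excerpt this is the natural interpretation, and the proof goes through without complication; were one to adopt a stricter convention requiring the removed and added sets to be disjoint, the statement would need to be re-examined. Given the stated definition, the padding trick is both necessary and sufficient, so no deeper combinatorial argument is required.
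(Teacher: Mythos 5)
Your padding argument is correct and is the standard way to establish the chain; note, however, that the paper itself offers no proof of this theorem at all --- it is reproduced verbatim from Lin (1965) and then used as a black box in the proof of Theorem~2 --- so there is no in-paper argument to compare against, and what you have written is a legitimate filling-in of that gap. The core step is sound: given an improving $k'$-exchange $(E,F)$ on a $k$-optimal tour $T$, any $G\subseteq T\setminus E$ with $|G|=k-k'$ yields a $k$-exchange $(E\cup G,\,F\cup G)$ producing the same improved tour, and your reading of ``any other set of $k$ edges'' as \emph{distinct but not necessarily disjoint} is the one Lin intends and the only one under which the chain holds. One detail worth making explicit: $|F\cup G|=k$ requires $F\cap G=\emptyset$, which is automatic because $F$ must be disjoint from $T\setminus E$ (otherwise $(T\setminus E)\cup F$ would contain fewer than $n$ edges and could not be a Hamiltonian tour) and $G\subseteq T\setminus E$. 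The one genuine slip is your closing claim that strict containment ``follows from standard examples'': strictness cannot hold for every instance --- the paper's own Theorem~2 shows that for a convex-hull TSP one has $C_2=C_3=\dots=C_n=\{x_c\}$ --- so the symbol $\supset$ in the statement must be read as non-strict inclusion, which is exactly what the theorem's second sentence asserts and all that your main argument (correctly) delivers.
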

%\begin{proof} The proof can be found in \cite{lin1965computer}.\end{proof}

Based on Theorem~\ref{thm:C_k}, we prove that the convex-hull TSP is unimodal to any $k$-Opt local search.  Theorem~\ref{thm:unimodal} summarizes the result.
\begin{thm}\label{thm:unimodal}
For a convex-hull TSP, let $x_c$ denote the convex hull tour, then $x_c$ is the only $k$-optimal tour in the solution space of the convex-hull TSP for any $k\in\{2,3,\dots,n\}$. %In other words, the convex-hull TSP is a unimodal to $k$-Opt local search.
\end{thm}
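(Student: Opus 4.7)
The plan is to exploit Theorem~\ref{thm:C_k} to reduce the claim to the case $k=2$. Since Theorem~\ref{thm:C_k} gives $C_2 \supseteq C_3 \supseteq \cdots \supseteq C_n$, once I show $C_2 = \{x_c\}$ the conclusion follows immediately for every $k \geq 2$, provided that $x_c$ itself lies in every $C_k$. The latter is easy: $x_c$ is the convex hull traversal, which is well known to be the unique globally optimal tour of a TSP whose vertices are in convex position, so it is trivially $k$-optimal for every $k$.

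The substantive step is therefore to prove that any tour $x \neq x_c$ fails to be $2$-optimal. I would proceed geometrically. Let $v_1, v_2, \dots, v_n$ be the cities enumerated in the cyclic order in which they appear on the convex hull. If $x$ visits these vertices in a cyclic order different from the hull order, then $x$ must contain two edges that cross as line segments in the plane; this is the standard fact that a non-crossing Hamiltonian cycle through points in convex position is unique and equals the hull cycle. I would write out this observation by picking two consecutive hull vertices $v_i, v_{i+1}$ that are \emph{not} adjacent in $x$, following $x$ from $v_i$ until it first ``jumps over'' the chord $v_i v_{i+1}$, and arguing that the two edges involved in this jump must cross.

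Once crossing edges $(a,b)$ and $(c,d)$ are identified with intersection point $p$, I would perform the 2-Opt swap that uncrosses them while preserving the Hamiltonian structure, replacing $(a,b),(c,d)$ by $(a,c),(b,d)$ (or $(a,d),(b,c)$, whichever keeps a single tour). Applying the triangle inequality twice,
\begin{equation*}
|ac| + |bd| < \bigl(|ap|+|pc|\bigr) + \bigl(|bp|+|pd|\bigr) = |ab| + |cd|,
\end{equation*}
gives a strictly shorter tour, contradicting 2-optimality of $x$. Hence $x \notin C_2$, so $C_2 \subseteq \{x_c\}$, and combined with $x_c \in C_2$ this yields $C_2 = \{x_c\}$. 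Invoking Theorem~\ref{thm:C_k} then yields $C_k = \{x_c\}$ for every $k \in \{2,\dots,n\}$.

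The main obstacle I anticipate is the step asserting that any non-hull tour on points in convex position must contain a pair of crossing edges. The rest is essentially the classical uncrossing argument for Euclidean 2-Opt. I would make the crossing step rigorous either by induction on the number of vertices that are out of hull order, or by the combinatorial argument sketched above (tracing $x$ across the chord $v_i v_{i+1}$); the choice between these two is really only a question of exposition, not of difficulty.
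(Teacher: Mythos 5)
Your proposal is correct and follows essentially the same route as the paper: both reduce $k$-optimality to $2$-optimality via Theorem~\ref{thm:C_k}, observe that any tour other than $x_c$ on points in convex position must contain a pair of crossing edges, and uncross them with the triangle inequality to contradict $2$-optimality. The paper simply asserts the crossing-edge step without the justification you sketch, so your version is if anything slightly more complete.
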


\begin{proof} It is sure that $x_c$ is $k$-optimal for any $k\in\{2,3,\dots,n\}$. We here prove the uniqueness of $x_c$ by contradiction. Assume there is another $k$-optimal tour $x_\ast$ for $f_c$. By Theorem~\ref{thm:C_k}, we known that $x_\ast$ is also 2-optimal. Since $x_\ast \neq x_c$, there are at least two edges in $x_\ast$ that cross each other. According to the triangle inequality, it is always possible to replace the crossed edges with non-interacting edges which are less cost. This indicates that $x_\ast$ is not 2-optimal, which contradict the assumption.
\end{proof}

\subsection{Definition of the HC Transformation}

Given a TSP $f_o$~\footnote{In the sequel, without causing confusion, we do not differentiate the name of a TSP and its objective function.} with known local or global optimum $x^*$, a convex-hull TSP, denoted by $f_c$, is firstly constructed based on $x^*$. The construction makes sure that $f_c$ has the same optimum $x^*$ as $f_o$. Assume that the original TSP $f_o$ has $n$ cities, to construct the corresponding convex-hull TSP $f_c$, the HC transformation simply let $n$ cities uniformly distribute on a circle following the order they appear in $x^*$ of $f_o$. To level the scale of $f_c$ and $f_o$, the city interval on the circle of $f_c$ is set to be
\begin{equation}\label{eq:neighbor_dist}
\mbox{city interval}=\frac{1}{n}\sum_{i=1}^{n}c_{i,m(i)},
\end{equation}
where $c_{i,m(i)}$ denotes the edge cost in the original TSP $f_o$ and $m(i)$ denotes the index of the nearest city to city $i$ in all the cities of $f_o$, i.e.,
\begin{equation}\label{eq:mi}
m(i)=\argmin\limits_{j\in \{1,\dots,n\},j\neq i}c_{i,j}.
\end{equation}The city interval is used to determine the distance between adjoint pair of cities in the circle.
Once all the cities are arranged on the circle, the distance, denoted as $\hat{c}_{i,j}$, between each city pair $(i,j)$ is computed and considered as the edge cost in the convex-hull TSP. Fig.~\ref{fig:circle51} shows an example convex-hull TSP (and its global optimum) constructed for the TSP instance eil51 (Fig.~\ref{fig:eil51}).
\begin{figure}%[h]
    \centering
    %\hspace{-0.2in}
    \subfigure[the global optimum of eil51]{
      \label{fig:eil51}
      \includegraphics[width=0.5\linewidth]{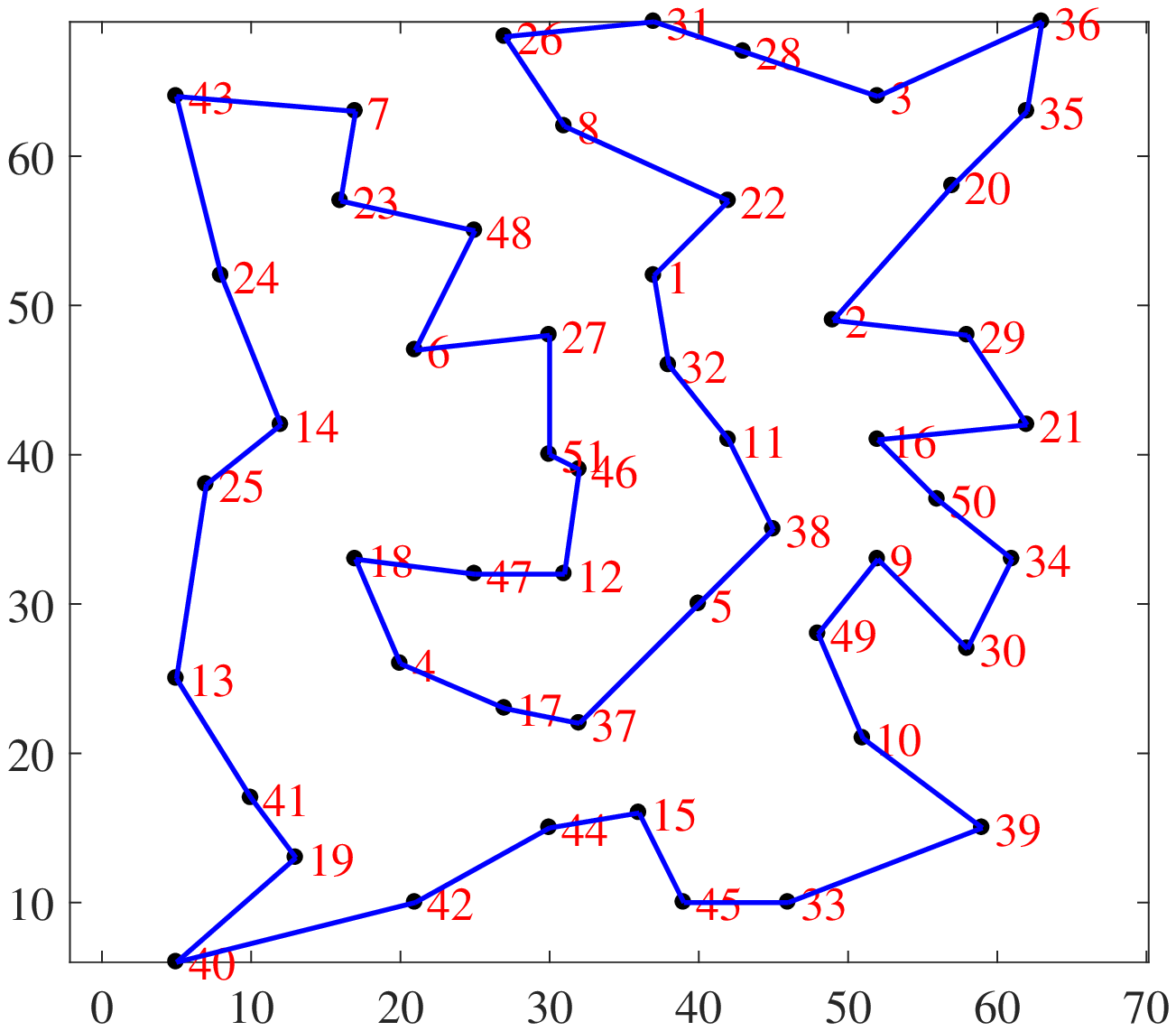}}
      \hspace{-0.2in}
    \subfigure[the convex-hull TSP for eil51]{
      \label{fig:circle51}
      \includegraphics[width=0.5\linewidth]{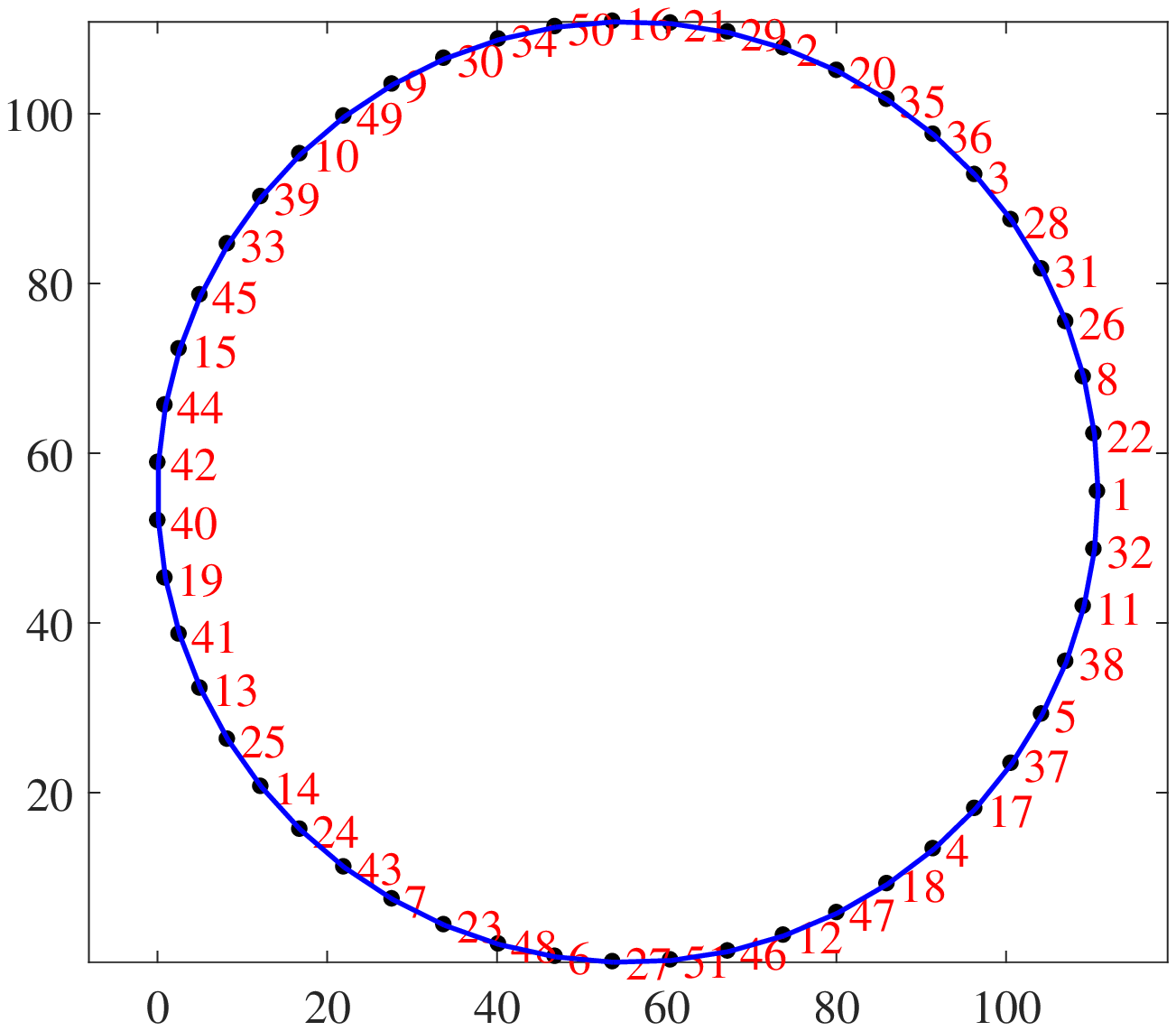}}
      \caption{The global optimum of the TSP instance eil51 and the corresponding convex-hull TSP.}\label{fig:eil51_circle51}%epm_201806011746_work
\end{figure}

Once the convex-hull TSP is constructed, the HC transformation is defined by combining the convex-hull TSP with the original TSP by a coefficient $\lambda\in[0,1]$. In the transformed TSP, the edge cost $c'_{i,j}$ between city $i$ and city $j$ is calculated by
\begin{equation}\label{eq:trans_tsp_edge}
  c'_{i,j}(\lambda) = (1-\lambda) c_{i,j} + \lambda \hat c_{i,j}.
\end{equation}
The objective function of the transformed TSP, denoted as $g$, can then be expressed as
\begin{equation}\label{eq:trans_tsp}
  g(x|\lambda) = (1-\lambda) f_o(x) + \lambda f_c(x).
\end{equation}

It is seen that when $\lambda=0$ the transformed TSP $g$ degenerates to the original TSP $f_o$ and when $\lambda=1$ it is smoothed to the convex-hull TSP $f_c$. With increasing $\lambda$, the original TSP $f_o$ can continuously deform into the convex-hull TSP $f_c$. This actually defines a \emph{homotopy} transformation from $f_o$ to $f_c$ (cf. homotopy in topology~\cite{hatcher2002algebraic}). We thus call it the homotopic convex transformation. The sketch of the HC transformation is illustrated in Fig.~\ref{fig:HCT}.
\begin{figure}%[H]
  \centering
  % Requires \usepackage{graphicx}
  \includegraphics[width=0.6\linewidth]{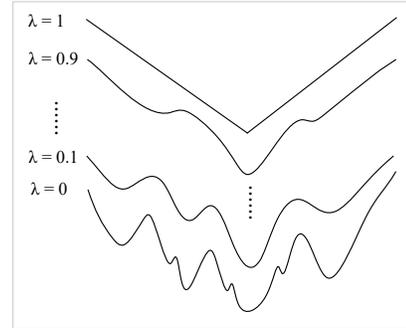}\\
  \caption{A sketch of the effect of the proposed HC transformation.}\label{fig:HCT}
\end{figure}Comparing Fig.~\ref{fig:smooth_gu1994} with Fig.~\ref{fig:HCT}, it is seen that in the TSP smoothing approach proposed in~\cite{gu1994efficient}, with $\alpha \rightarrow \infty$, the landscape becomes flat. While in the HC transformation, with $\lambda \rightarrow 1$, the landscape becomes unimodal. The key difference is that the optimum can be preserved after smoothing by the HC transformation. Note that the transformed TSP is still $\cal {NP}$-hard. For a detailed description, an example of the HC transformation can be found in the supplementary material of this paper.

\subsection{Effects of the HC transformation}

To illustrate the effects of the HC transformation on landscape smoothing, in the following we conduct landscape analysis experiments to study the changes of the TSP landscape after the HC transformation. In the experiments, we execute ILS on the transformed TSP and use the following four metrics to characterize the landscape:
\begin{itemize}
  \item \emph{Local Optimum Density}: It is the number of local optima encountered by an ILS process per 100 moves. Here one move means the local search algorithm moves from the current solution to a new solution.
       \item \emph{Escaping Rate}: It is the success rate that a new local optimum is reached by ILS starting from a new solution obtained by perturbing the current local optimum.
  \item \emph{Fitness Distance Correlation (FDC)}: To calculate the FDC, 1000 local optima (denoted as $x_{LO}$) obtained by ILS are randomly selected. Their function values $f(x_{LO})$ and their distances to the nearest global optimum $d_{opt}$ are counted. Then the FDC is define as
      \begin{equation}\label{eq:FDC}
        \mbox{FDC}(f(x_{LO}),d_{opt}) = \frac{\mbox{cov}(f(x_{LO}),d_{opt})}{\sigma(f(x_{LO}))\sigma(d_{opt})},
      \end{equation}where $\mbox{cov}(\cdot)$ denotes the covariance and $\sigma(\cdot)$ denotes the standard deviation.
  \item \emph{Runtime}: It is the average runtime (CPU time) for ILS to find the global optimum.
\end{itemize}

The first two metrics, local optimum density and escaping rate, measure the ruggedness of the landscape of a TSP. A lower local optimum density means a smoother TSP landscape. It is intuitive that the escaping rate is negatively correlated to the average diameter of the attractive basins of local optima. A large size of attractive basin implies a few local optima on the TSP landscape. Hence a lower escaping rate means a smoother TSP landscape. The third metric, FDC, measures the overall trend of the TSP landscape. A higher FDC means a more regular TSP landscape which implies that a search algorithm can find a path to the global optimum by observing the fitness change of the encountered local optima~\cite{merz2000fitness,merz2004advanced}. The last metric, runtime, directly reflects the hardness of a TSP for ILS. A larger runtime means a harder TSP.

In the experiments, ILS is used to sample local optima on the TSP landscape. As shown in Alg.~\ref{alg:ILS}, ILS iteratively executes a local search procedure and a perturbation procedure till the stopping criterion is met. In our experiments, the 3-Opt local search and double bridge perturbation (please see Fig.~\ref{fig:double_bridge} for its illustration) are selected as the local search method and the perturbation method of ILS, respectively. In Alg.~\ref{alg:ILS},  LocalSearch($x_j^\prime$, $x_{best}$) takes $x_j^\prime$ and the current best $x_{best}$ as input and returns a local optimum $x_{j+1}$ and a new $x_{best} = \arg\min \{f(x_{j+1}), f(x_{best})\}$.
\begin{figure}%[H]
  \centering
  % Requires \usepackage{graphicx}
  \includegraphics[width=0.4\linewidth]{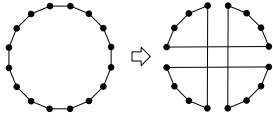}\\
  \caption{An example of the double bridge perturbation on the TSP.}\label{fig:double_bridge}
\end{figure}

\begin{algorithm}%[th!]
\small
    $x_0 \gets $ random or heuristically generated solution.\;
    $x_0 \gets $ LocalSearch($x_0$)\;
    $j \gets 0$\;
    \While{stopping criterion is not met}{
        $x_j^\prime \gets$ Perturbation($x_j$)\;
        $\{x_{j+1},x_{best}\} \gets$ LocalSearch($x_j^\prime$, $x_{best}$)\;
        $j\gets j+1$\;
    }
    \KwRet{$x_{best}$}
\caption{Iterated Local Search}
\label{alg:ILS}
\end{algorithm}

A total of 12 instances are selected from the TSPLIB as the test instances. The characteristics of the test instances are shown in Table~\ref{tbl:ins}. The third column of Table~\ref{tbl:ins} shows the edge cost type of each test instance, in which \emph{EUC\_2D} means the edge costs are Euclidean distances in a 2D space, \emph{GEO} means the edge costs are geographical distances and \emph{EXPLICIT} means the edge costs are listed explicitly in the TSP description file. The reasons we select these 12 instances are: (1) their sizes are not too small to lose the complexity and ruggedness of the landscape; (2) their sizes are not too large to find their global optima; (3) they belong to different TSP types.
To conduct the analysis, $\lambda$ is set to be $0, 0.01,\dots, 0.09, 0.1$ respectively, for each TSP instance. Here we set $\lambda\leq 0.1$ since our preliminary results show that when $\lambda >0.1$ the property of the transformed TSPs are nearly the same to the property of the convex-hull TSP, which means ILS can find the global optimum immediately.

In the following, we investigate the effects of the HC transformation by using either global optimum or local optimum. The code is implemented in GNU C++ with O2 optimizing compilation. The computing platform is two 6-core 2.00GHz Intel Xeon E5-2620 CPUs (24 Logical Processors) under Ubuntu OS.

\subsection{Effects of the HC transformation with global optimum}\label{sec:known_go}

Since the test instances are from the TSPLIB, their global optima are known. First, we use the global optima of the test instances to construct the convex-hull TSPs and conduct HC transformation with different $\lambda$ values. On each transformed TSP, 1000 runs of ILS are executed from random solutions and terminate only when the global optima are reached. We record all the local optima ever encountered by ILS and other relating information in the entire search history. Then we compute the four metrics for each transformed TSP, respectively.

%\begin{table}
%\caption{Test Instances} \label{tbl:ins}\centering
%\begin{tabular}{l l l}\hline
%Instance & Size & Edge type \\
%\hline
%eil51 & 51 & EUC\_2D \\
%berlin52 & 52 & EUC\_2D \\
%st70 & 70 & EUC\_2D \\
%pr76 & 76 & EUC\_2D \\
%rat99 & 99 & EUC\_2D \\
%rd100 & 100 & EUC\_2D \\
%ch130 & 130 & EUC\_2D \\
%kroA150 & 150 & EUC\_2D \\
%gr96 & 96 & GEO \\
%brazil58 & 58 & EXPLICIT \\
%gr120 & 120 & EXPLICIT \\
%si175 & 175 & EXPLICIT \\
%\hline
%\end{tabular}
%\end{table}

\begin{table*}
\caption{Test Instances} \label{tbl:ins}\centering
\resizebox{\textwidth}{!}{
\begin{tabular}{l | l l l l l l l l l l l l}
\hline
Instance & eil51 & berlin52 & st70 & pr76 & rat99 & rd100 & ch130 & kroA150 & gr96 & brazil58 & gr120 & si175 \\
\hline
Size & 51 & 52 & 70 & 76 & 99 & 100 & 130 & 150 & 96 & 58 & 120 & 175 \\
\hline
Edge type & EUC\_2D & EUC\_2D & EUC\_2D & EUC\_2D & EUC\_2D & EUC\_2D & EUC\_2D & EUC\_2D & GEO & EXPLICIT & EXPLICIT & EXPLICIT \\
\hline
\end{tabular}
}
\end{table*}

%In Fig.~\ref{fig:FLA_LO_den}, the box-plot data is obtained by the 1000 runs of ILS and the black solid line shows the averaged local optimum density in the 1000 runs on each transformed TSP.

Fig.~\ref{fig:FLA_LO_den} shows how the local optimum density of the transformed TSP landscape changes against $\lambda$. From Fig.~\ref{fig:FLA_LO_den} we can see that, on the instances with Euclidean edge cost type (denoted by ``EUC''), in general the local optimum density decrease as $\lambda$ increases, which means that the HC transformation can smooth the TSP landscape and its smoothing effect is controlled by $\lambda$. However, on the instances with non-Euclidean edge cost type (``GEO'' and ``EXP''), the local optimum density is not always negatively related to $\lambda$. For example, it is seen that  the local optimum density of the TSP landscape for brazil58 increases when $0.01\leq\lambda\leq0.08$.

The escaping rates of ILS on different transformed TSPs are shown in Fig.~\ref{fig:FLA_escapeR}. As stated before, a lower escaping rate means a smoother TSP landscape. From Fig.~\ref{fig:FLA_escapeR} we can see that, on all of the Euclidean TSP instances, in general the escaping rate decrease as $\lambda$ increases, which exhibits similar phenomena as the local optimum density. On the non-Euclidean TSP instances, it seems that the escaping rate also decrease as $\lambda$ increases. However, the decline of the escaping rate is not very obvious on brazil58 and si175 when $\lambda\geq0.01$. Based on the above observations, we may conclude that the HC transformation can indeed smooth the landscapes of all the Euclidean TSP and some non-Euclidean TSP instances.% under the assumption that the global optimum is known.

Fig.~\ref{fig:FLA_FDC} shows how the FDC of the transformed TSP changes against $\lambda$. From Fig.~\ref{fig:FLA_FDC} we can see that, although the FDC curve has fluctuations, in general the FDC increases as $\lambda$ increases on most test instances. An exception is found on the non-Euclidean instance si175, on which the lowest FDC value appears when $\lambda=0.02$ and when $\lambda\geq0.04$ the FDC is always 1. Based on the above observations, we may conclude that the HC transformation can increase the FDC value on most of the test instances.

On different transformed TSPs, the runtime for ILS to find the global optimum is shown in Fig.~\ref{fig:FLA_runtime}. From Fig.~\ref{fig:FLA_runtime} we can see that, in general, the runtime of ILS decreases as $\lambda$ increases on most of the test instances. An exception is found on the non-Euclidean instance brazil58, on which the runtime first increases then decreases. Based on the above observation, we conclude that the HC transformation can reduce the running time of ILS to find the global optimum simply because the fitness landscape of the transformed TSP is smoothed with fewer number of local optima and higher FDC than the original TSP.  Of course, this claim is based on the premise that the instance size is relatively small.

\begin{figure*}%[H]
  %\centering
  \subfigure[LO density]{
    \label{fig:FLA_LO_den} %% label for first subfigure
    \includegraphics[height=0.202\linewidth]{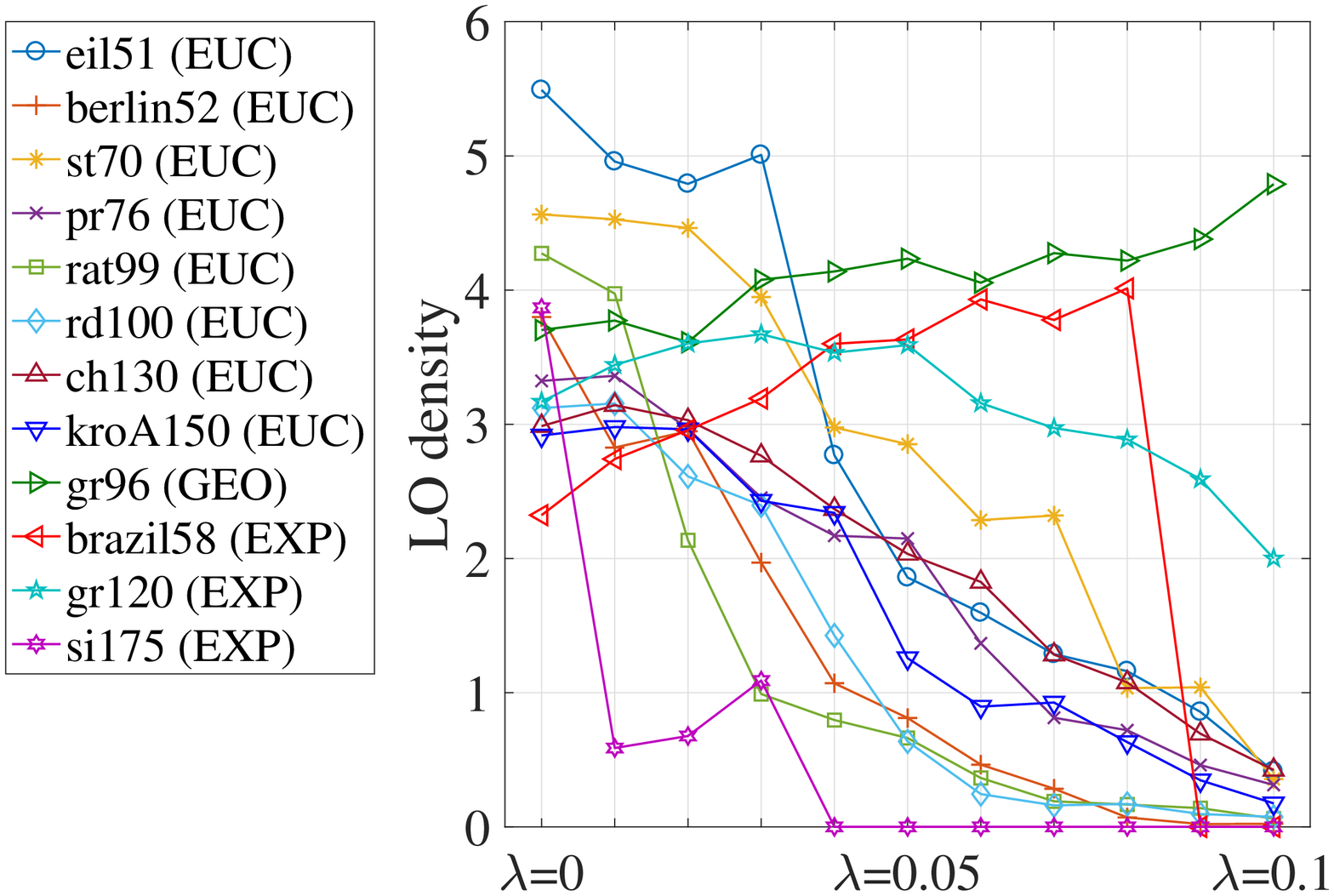}}
    \hspace{-0.05in}
  \subfigure[Escaping rate]{
    \label{fig:FLA_escapeR} %% label for first subfigure
    \includegraphics[height=0.202\linewidth]{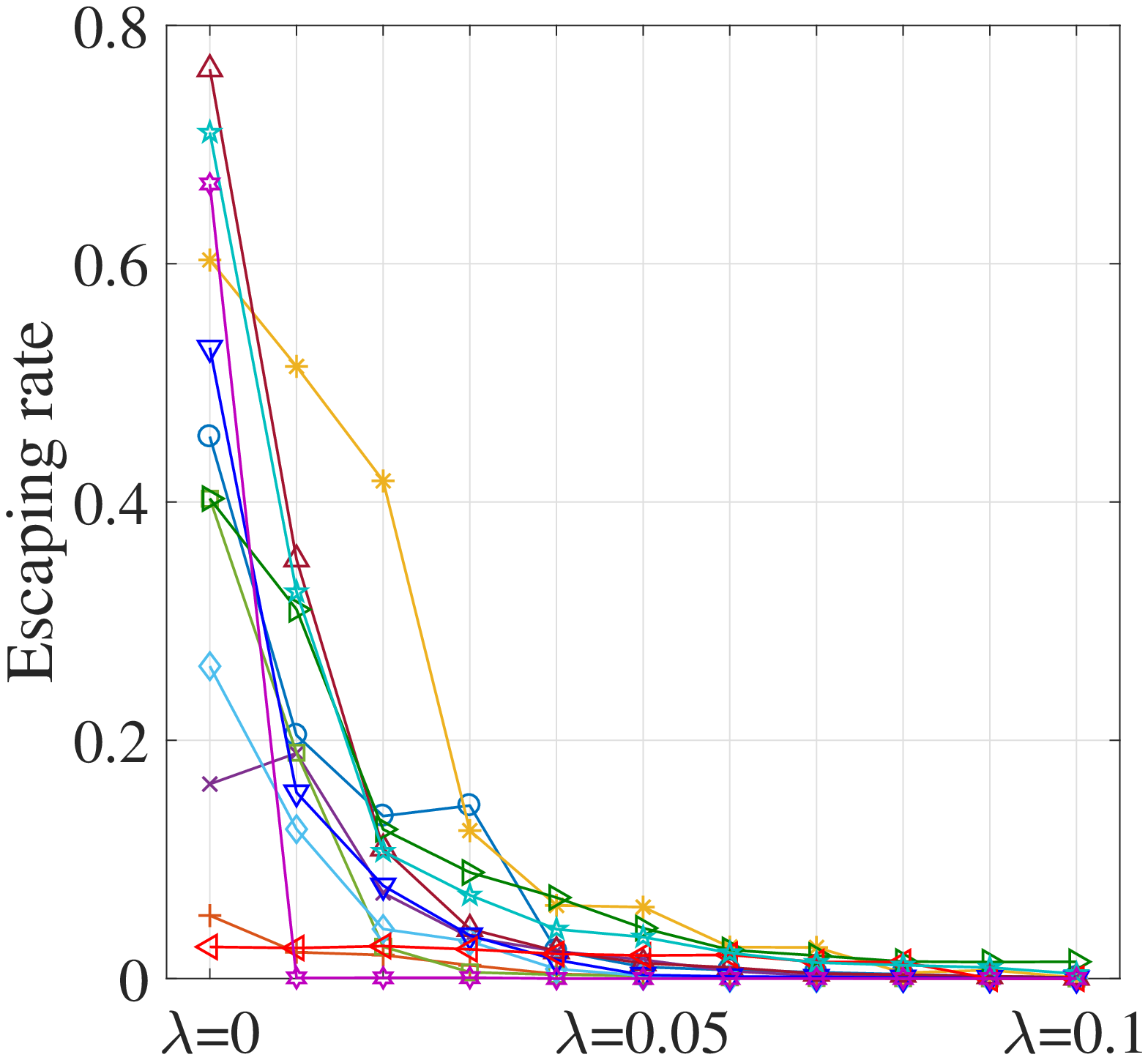}}
    \hspace{-0.05in}
  \subfigure[FDC]{
    \label{fig:FLA_FDC} %% label for first subfigure
    \includegraphics[height=0.202\linewidth]{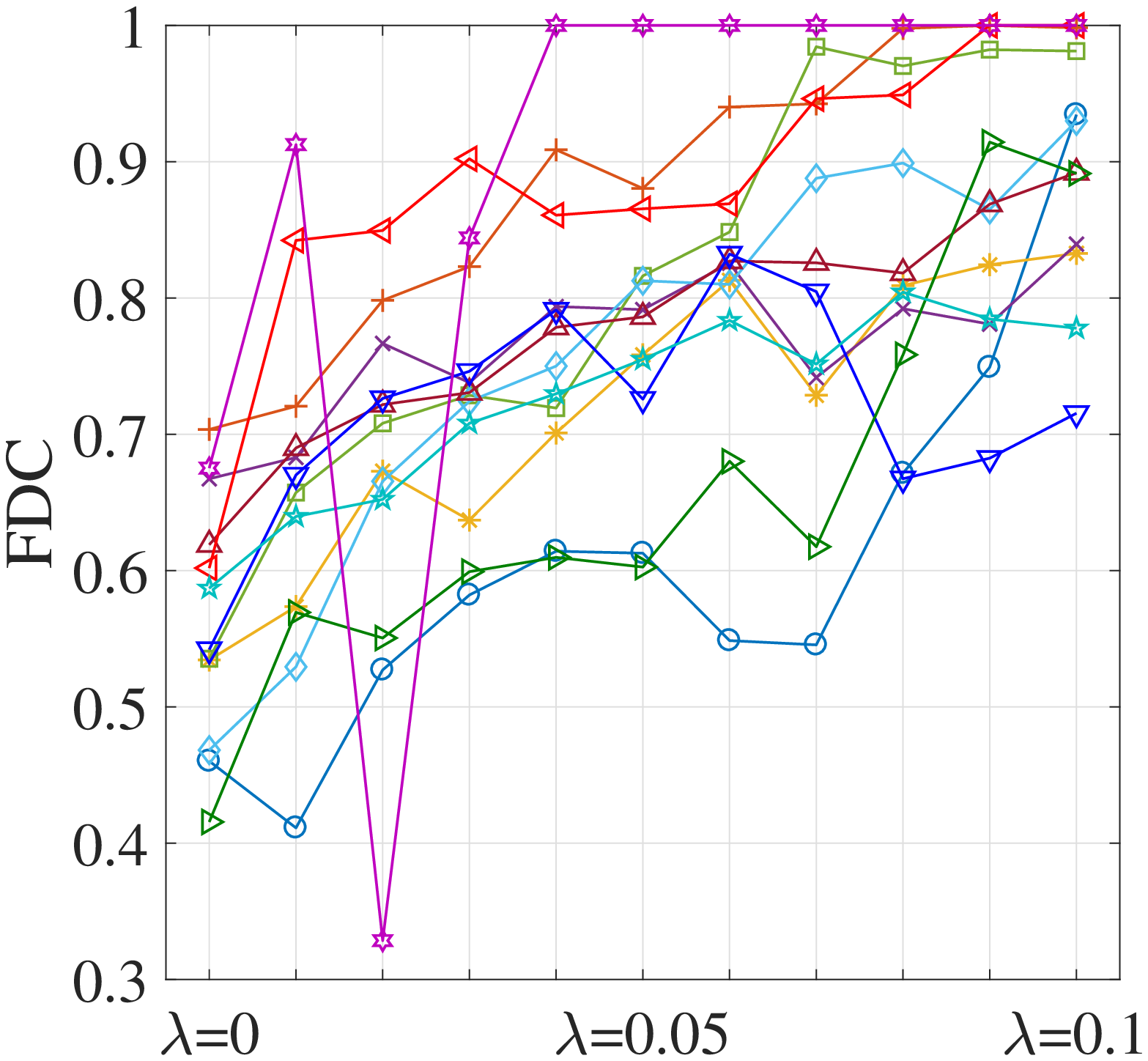}}
    \hspace{-0.05in}
  \subfigure[ILS Runtime]{
    \label{fig:FLA_runtime} %% label for first subfigure
    \includegraphics[height=0.202\linewidth]{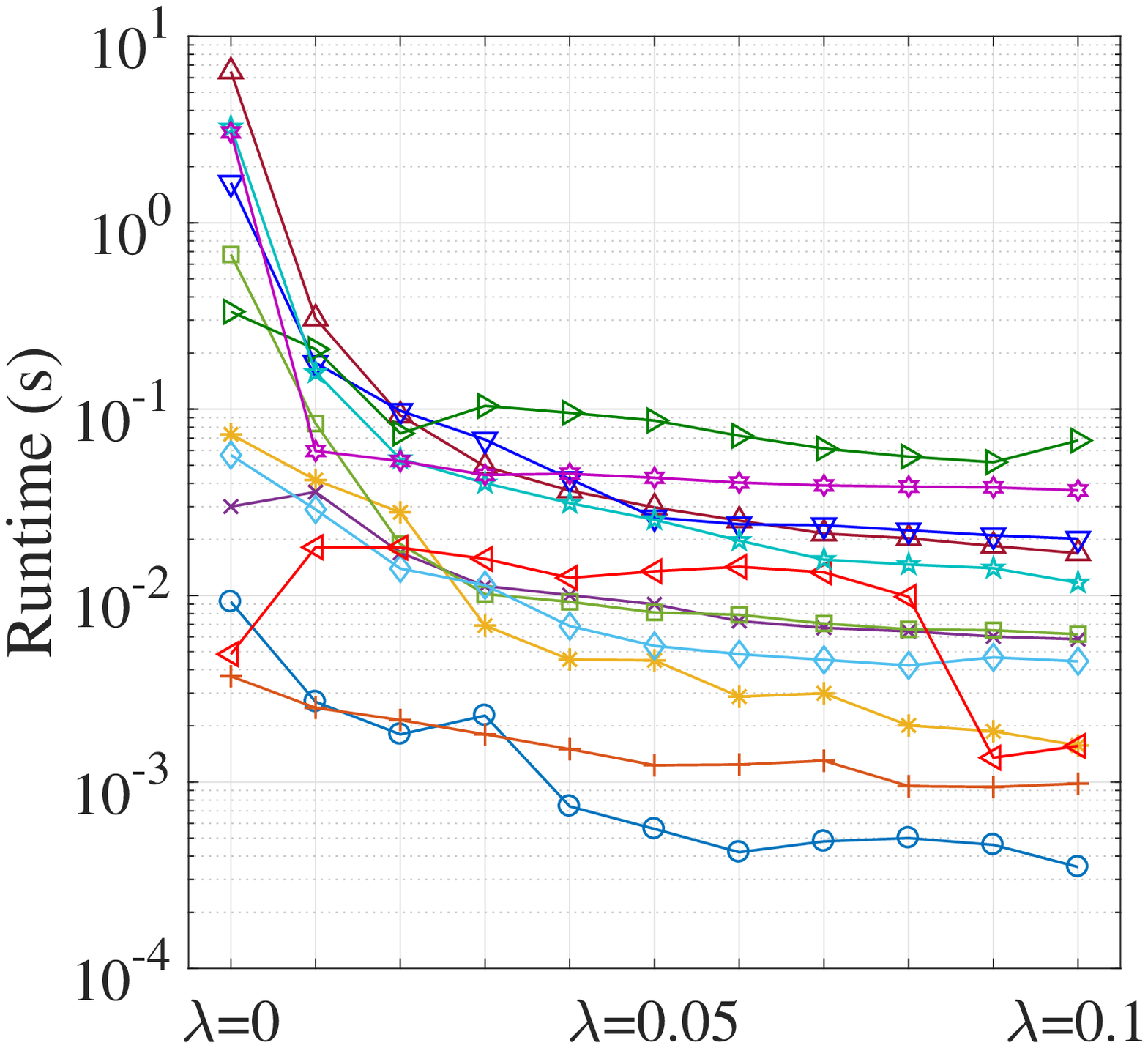}}
  \caption{Landscape analysis results of the transformed TSPs with different $\lambda$ values based on the global optimum of the original TSP.}\label{fig:FLA}%epm_201811121030 epm_201806021016
\end{figure*}
\begin{figure*}%[H]
  %\centering
  \subfigure[LO density]{
    \label{fig:FLA_LO_den_LOc} %% label for first subfigure
    \includegraphics[height=0.202\linewidth]{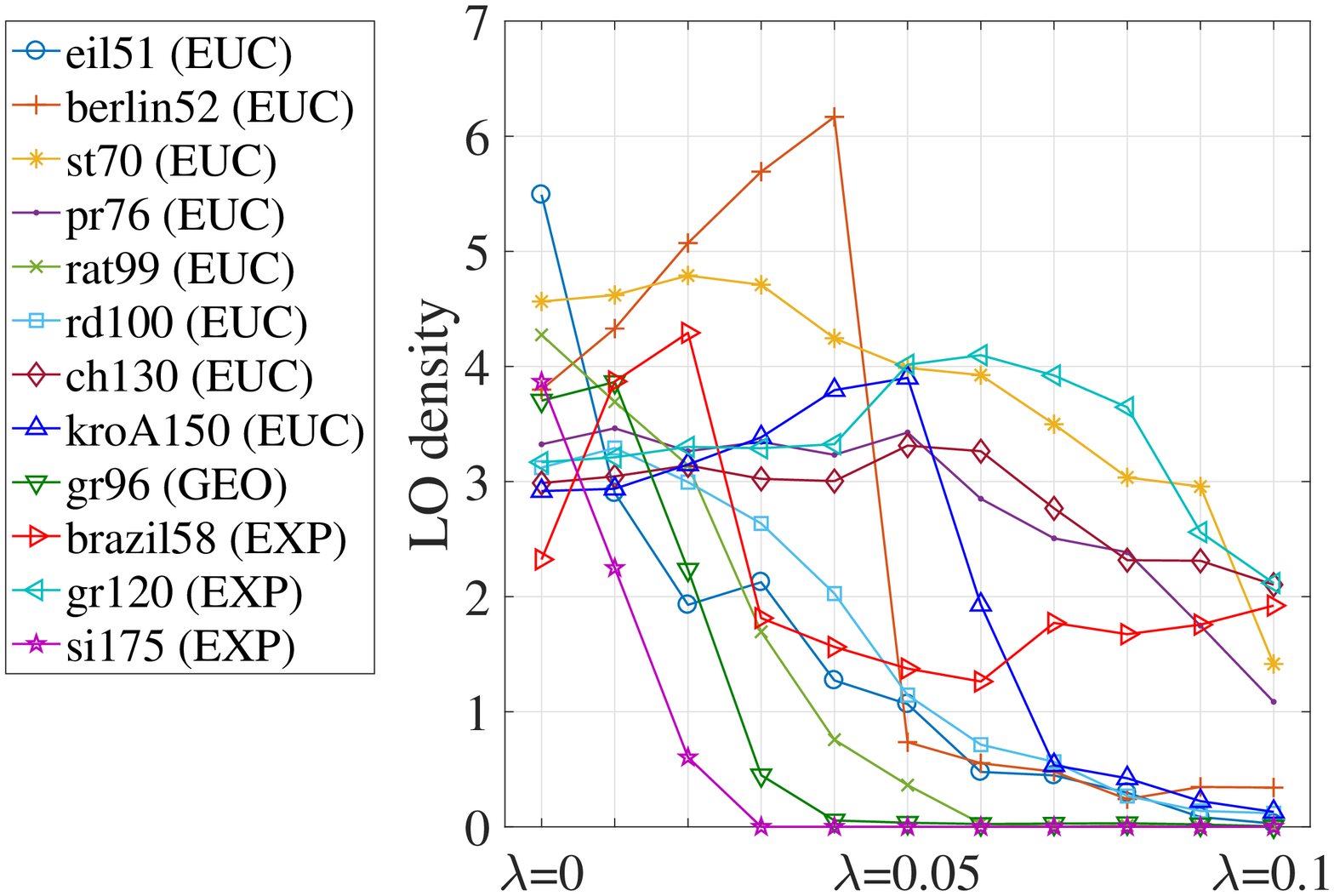}}
    \hspace{-0.05in}
  \subfigure[Escaping rate]{
    \label{fig:FLA_escapeR_LOc} %% label for first subfigure
    \includegraphics[height=0.202\linewidth]{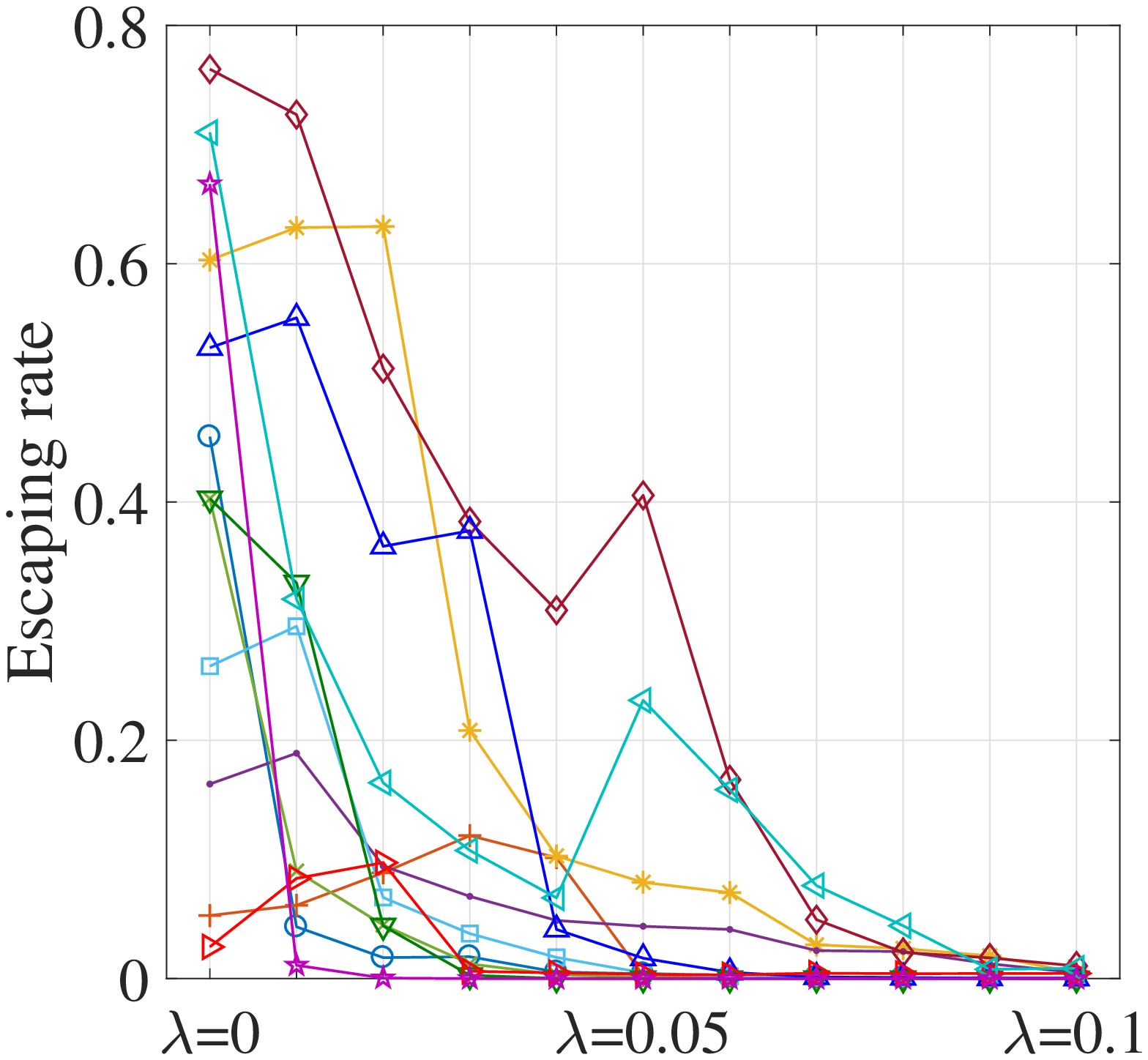}}
    \hspace{-0.05in}
  \subfigure[FDC]{
    \label{fig:FLA_FDC_LOc} %% label for first subfigure
    \includegraphics[height=0.202\linewidth]{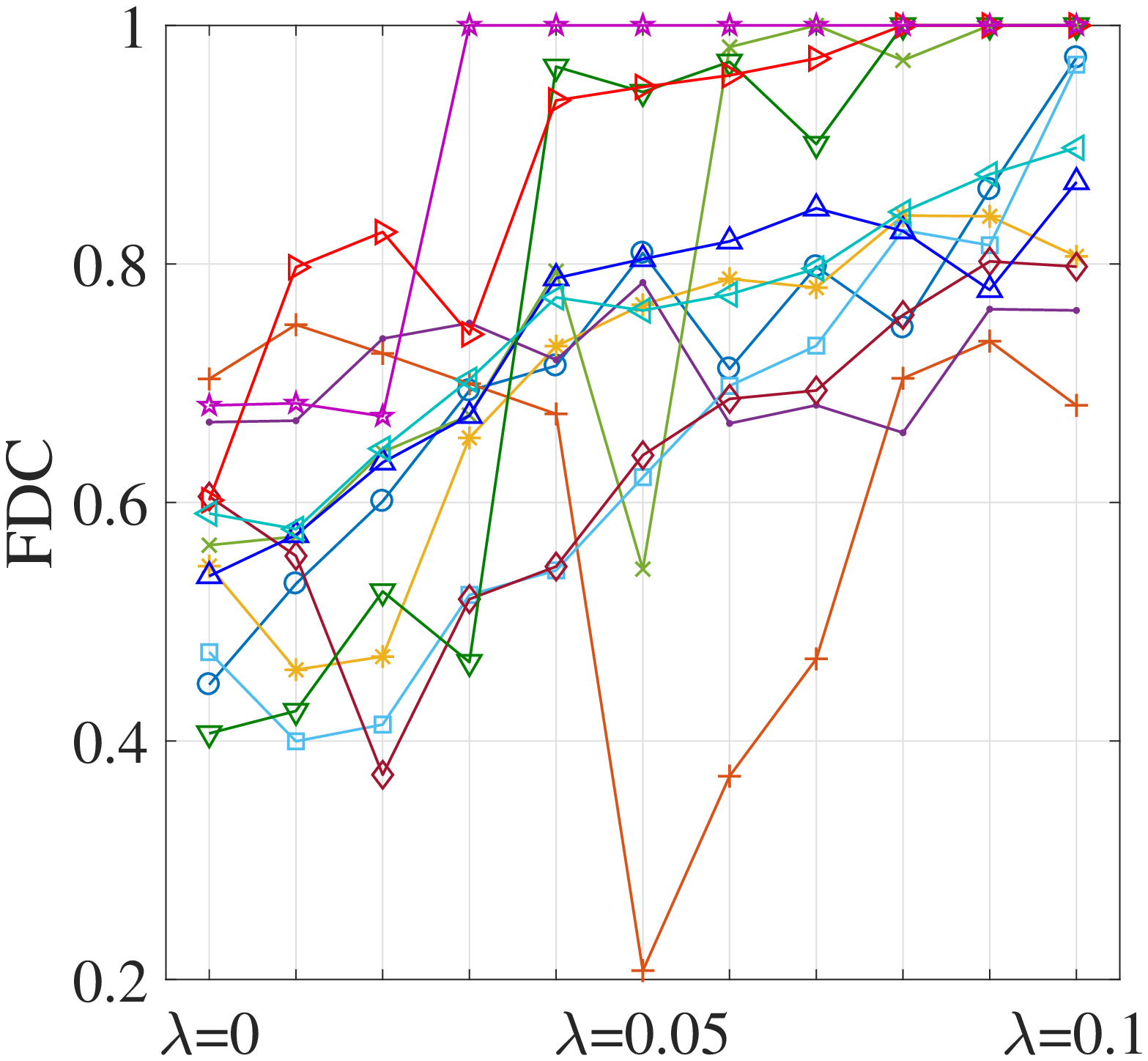}}
    \hspace{-0.05in}
  \subfigure[Runtime]{
    \label{fig:FLA_runtime_LOc} %% label for first subfigure
    \includegraphics[height=0.202\linewidth]{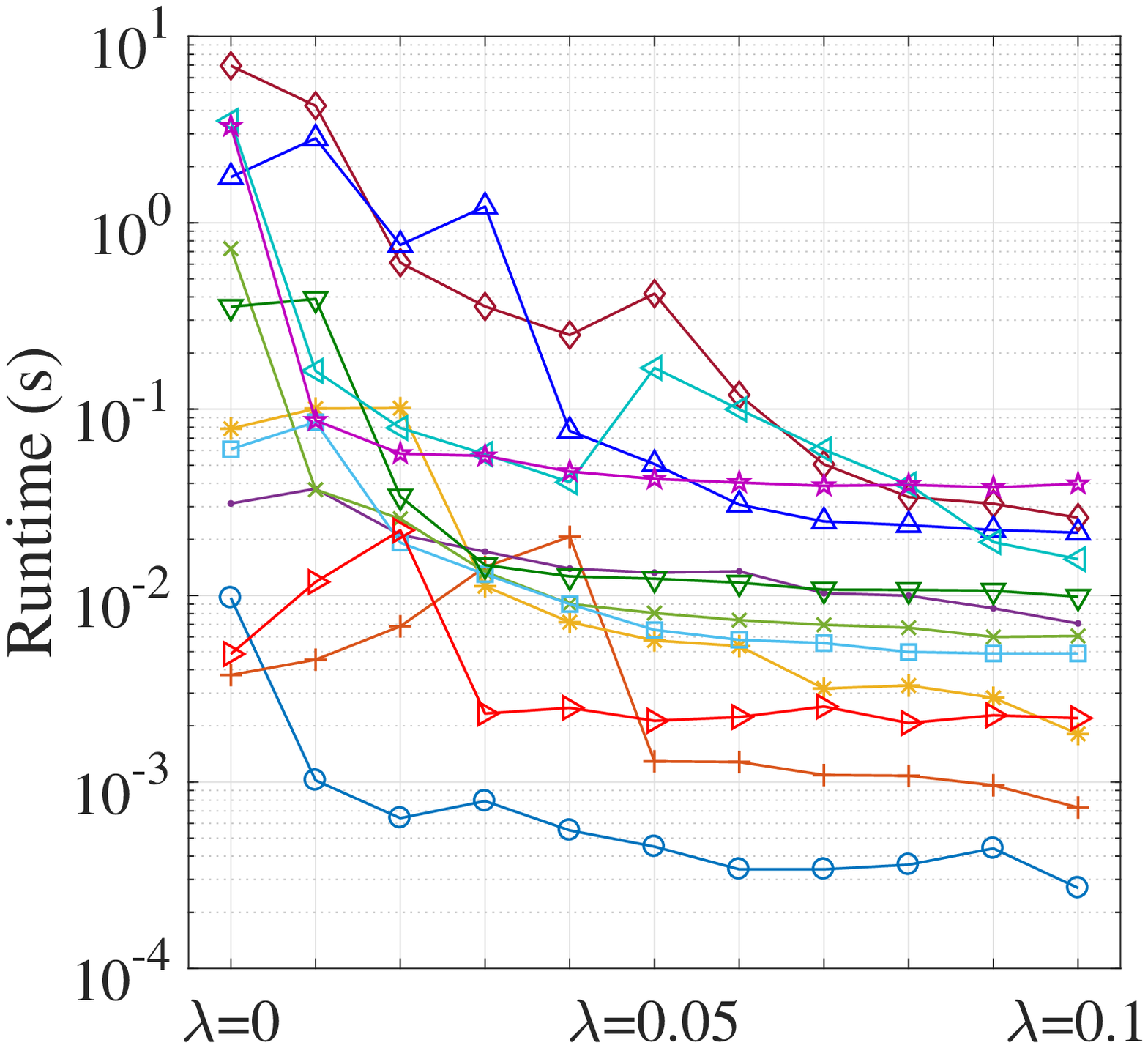}}
  \caption{The landscape analysis results of the transformed TSPs with different $\lambda$ values based on a local optimum of the original TSP.}\label{fig:FLA_LOc}%epm_201812032014 epm_201806042128
\end{figure*}

In summary, we may conclude that in most cases the proposed HC transformation can indeed smooth the TSP landscape and the strength on the smoothing can be controlled by $\lambda$. This effect is more significant in the instances with the Euclidean edge cost type, which is the most common edge cost type in the TSP, than with other TSP types.

\subsection{Effects of the HC Transformation with local optimum}\label{sec:unknown_go}

In practice, the global optimum is not known. We therefore investigate the effects of the HC transformation when a local optimum is used to construct the convex-hull TSP.  The same test instances in Table~\ref{tbl:ins} are used. For each instance, we run a round of 3-Opt local search to obtain a local optimum $x_{LO}$. Then we use this local optimum to construct the convex-hull TSP with different $\lambda$ values. The function value of the used local optima are listed in Table~\ref{tbl:ins_LOc}. Table~\ref{tbl:ins_LOc} also lists the excess of the used local optima, which is defined in Eq.~(\ref{eq:excess}): \begin{equation}\label{eq:excess}
    \mbox{excess} = \frac{f(x_{LO}) - f(x_{opt})}{f(x_{opt})} \times 100\%,
\end{equation}where $x_{opt}$ is the global optimum. The other experimental settings are the same as those in Section~\ref{sec:known_go}.

\begin{table}
\caption{Costs of the selected optima, where $x_{opt}$ and $x_{LO}$ denotes the global and local optimum, respectively.} %epm_201806042128 epm_201812032014
\centering
\label{tbl:ins_LOc}
%\resizebox{\linewidth}{!}{
\begin{tabular}{l l l l}
\hline
Instance & $f(x_{opt})$ & $f(x_{LO})$ & excess (\%) \\
\hline
eil51 & 426 & 428 & 0.4695\\
berlin52 & 7542 & 7902 & 4.7733\\
st70 & 675 & 684 & 1.3333\\
pr76 & 108159 & 109190 & 0.9532\\
rat99 & 1211 & 1217 & 0.4955\\
rd100 & 7910 & 8054 & 1.8205\\
ch130 & 6110 & 6363 & 4.1408\\
kroA150 & 26524 & 27040 & 1.9454\\
gr96 & 55209 & 55507 & 0.5398\\
brazil58 & 25395 & 25643 & 0.9766\\
gr120 & 6942 & 7066 & 1.7862\\
si175 & 21407 & 21485 & 0.3644\\
\hline
\end{tabular}
%}
\end{table}

Since the HC transformation is based on local optima, the global optima of the transformed TSPs may be different from the original global optima. To conduct landscape analysis experiments, we first use the LKH software~\cite{helsgaun2000effective} to obtain the global optima of the transformed TSPs. On each transformed TSP, we run the LKH 100 times and record the best solution as the global optimum. Since the LKH is one of the state-of-the-art algorithms for the TSP and the size of the transformed TSPs is relatively small, we believe the TSP solutions found by the LKH are indeed the global optima of the transformed TSPs.

The local optimum density values of different transformed TSPs are shown in Fig.~\ref{fig:FLA_LO_den_LOc}. From Fig.~\ref{fig:FLA_LO_den_LOc} we can see that in 11 of the 12 test instances, the local optimum density is approximately negatively related to $\lambda$. Specifically, on seven instances (eil51, st70, pr76, rat99, rd100, gr96 and si175) the local optimum density decreases as $\lambda$ increases. On four instances (berlin52, ch130, kroA150 and gr120), the local optimum density first increases a little then starts to decrease.

Experimental results for the escaping rate are shown in Fig.~\ref{fig:FLA_escapeR_LOc}. From Fig.~\ref{fig:FLA_escapeR_LOc} we can see that in 9 of the 12 test instances, the escaping rate is approximately negatively related to $\lambda$. Specifically, on five instances (eil51, rat99, ch130, gr96 and gr120), in general the escaping rate decreases as $\lambda$ increases in spite of the fact that on some instances the curve of the escaping rate is not very smooth. On four instances (st170, pr76, rd100 and kroA150), the escaping rate first increases a little then starts to decrease. Through comparing Fig.~\ref{fig:FLA_LO_den_LOc} and Fig.~\ref{fig:FLA_escapeR_LOc} with Fig.~\ref{fig:FLA_LO_den} and Fig.~\ref{fig:FLA_escapeR}, we conclude that, on most of the test instances, the HC transformation based on local optima can achieve approximately the same smoothing effect as the HC transformation based on global optima.

Fig.~\ref{fig:FLA_FDC_LOc} shows the FDC versus $\lambda$ when the HC transformation is based on local optima. From Fig.~\ref{fig:FLA_FDC_LOc} we can see that in 9 of the 12 test instances, the FDC is approximately positively related to $\lambda$. Specifically, on four instances (eil51, kroA150, gr96 and brazil58), there is a general trend that the FDC increases as $\lambda$ increases, in spite of the fact that on some instances the FDC curve is not very smooth. On four instances (st70, rd100, ch130 and gr120), the FDC first decreases a little then starts to increase. On rat99, although the lowest FDC appears when $\lambda=0.05$, but in general the FDC is increase as $\lambda$ increases. Through comparing Fig.~\ref{fig:FLA_FDC_LOc} to Fig.~\ref{fig:FLA_FDC} we can conclude that, on most of the test instances, the HC transformation based on local optima can achieve approximately the same FDC-increasing effect as the HC transformation based on global optima.

Fig.~\ref{fig:FLA_runtime_LOc} shows the runtime for ILS to find the global optimum when the HC transformation is based on local optima. From Fig.~\ref{fig:FLA_runtime_LOc} we can see that in 10 of the 12 test instances, the runtime of ILS is approximately positively related to $\lambda$. Specifically, on five instances (eil51, rat99, ch130, gr120 and si175), the runtime decreases as $\lambda$ increases, while on another five instances (st70, pr76, rd100, kroA150 and gr96), the runtime first increases a little then starts to decrease. Through comparing Fig.~\ref{fig:FLA_runtime_LOc} to Fig.~\ref{fig:FLA_runtime} we can conclude that, in most of the test instances, the HC transformation based on local optima can achieve approximately the same runtime reducing effect as the HC transformation based on global optima.

Based on the above observations, we may conclude that, in most test instances, the HC transformation based on local optima can achieve approximately the same effects as the the HC transformation based on global optima.

\subsection{Influences of different local optima}\label{sec:diff_lo}

In previous experiments, on some test instances, especially on berlin52, the HC transformation based on local optima does not achieve as good smoothing and FDC-increasing effect as the HC transformation based on global optima. Recall that in Table~\ref{tbl:ins_LOc}, the local optimum used to construct the convex-hull TSP for berlin52 has a relatively low quality compared to the local optima of other instances. We thus conjecture that the effects depend highly on the quality of the local optimum used in the HC transformation. To confirm, we conduct the following experiments.

%Is this the reason that we do not get a good smoothing performance on berlin52? To answer, we conduct the following experiments.

For berlin52, using the 3-Opt local search, we find totally 89 local optima with different function values ranging from 7658 to 8385. Based on these 89 local optima and the global optimum (with function value 7542), the experimental configurations used in previous sections are applied on berlin52. The obtained results on the four metrics against different local optima (and the global optimum) and $\lambda$ values are shown in Fig.~\ref{fig:berlin52_MLO} while Fig.~\ref{fig:LO_density_berlin52_MLO_csi}-\ref{fig:runtime_berlin52_MLO_csi} show the cubic spline interpolation of the results. %Red color indicates low-quality local optima, while blue color shows high-quality local optima. \textcolor{red}{why do not you use color legend?}  %In the figure, red color indicates the local optimum with the highest function value (the worst) while blue indicates the lowest (the best).

%, where Fig.~\ref{fig:berlin52_MLO}, (a)-(d) show the metric curves w.r.t. different local optima and $\lambda$ values

From the results we can see that, when the local optima have relatively high quality (the blue curves in Fig.~\ref{fig:LO_density_berlin52_MLO}-\ref{fig:runtime_berlin52_MLO}) the general tendency of the curves achieved by the local optima are similar to that achieved by the global optimum (the  curves with the lowest function value in Fig.~\ref{fig:LO_density_berlin52_MLO}-\ref{fig:runtime_berlin52_MLO}), while when the local optima have relatively low quality (the red curves in Fig.~\ref{fig:LO_density_berlin52_MLO}-\ref{fig:runtime_berlin52_MLO}), the difference between the curves achieved by local optima and global optimum are quite clear. The cubic spline interpolation results (Fig.~\ref{fig:LO_density_berlin52_MLO_csi}-\ref{fig:runtime_berlin52_MLO_csi}) also show the same tendency. This means that the HC transformation based on a high-quality local optimum can achieve approximately the same transformation performance as the HC transformation based on the global optimum. Hence, we conclude that the HC transformation using a high-quality local optimum can smooth the TSP landscape similar to as using the global optimum.

\section{Algorithmic Framework}\label{sec:LSILS}%Landscape Smoothing Iterated Local Search}

We claim that the proposed HC transformation can be used to improve the global search ability of existing TSP heuristics. To justify, in this section, we first propose a general framework in which the HC transformation is combined with a local search for the TSP. Then two instantiated algorithms of the framework with the 3-Opt local search and the LK local search, respectively, are proposed and compared against some known TSP smoothing algorithms. Note here that our objective is not to propose a state-of-the-art algorithm for the TSP, but to show that the proposed HC transformation can be used to improve existing TSP heuristics.

\begin{figure*}%[H]
  %\centering
  \subfigure[Local optimum density]{
    \label{fig:LO_density_berlin52_MLO} %% label for first subfigure
    \includegraphics[height=0.310\linewidth]{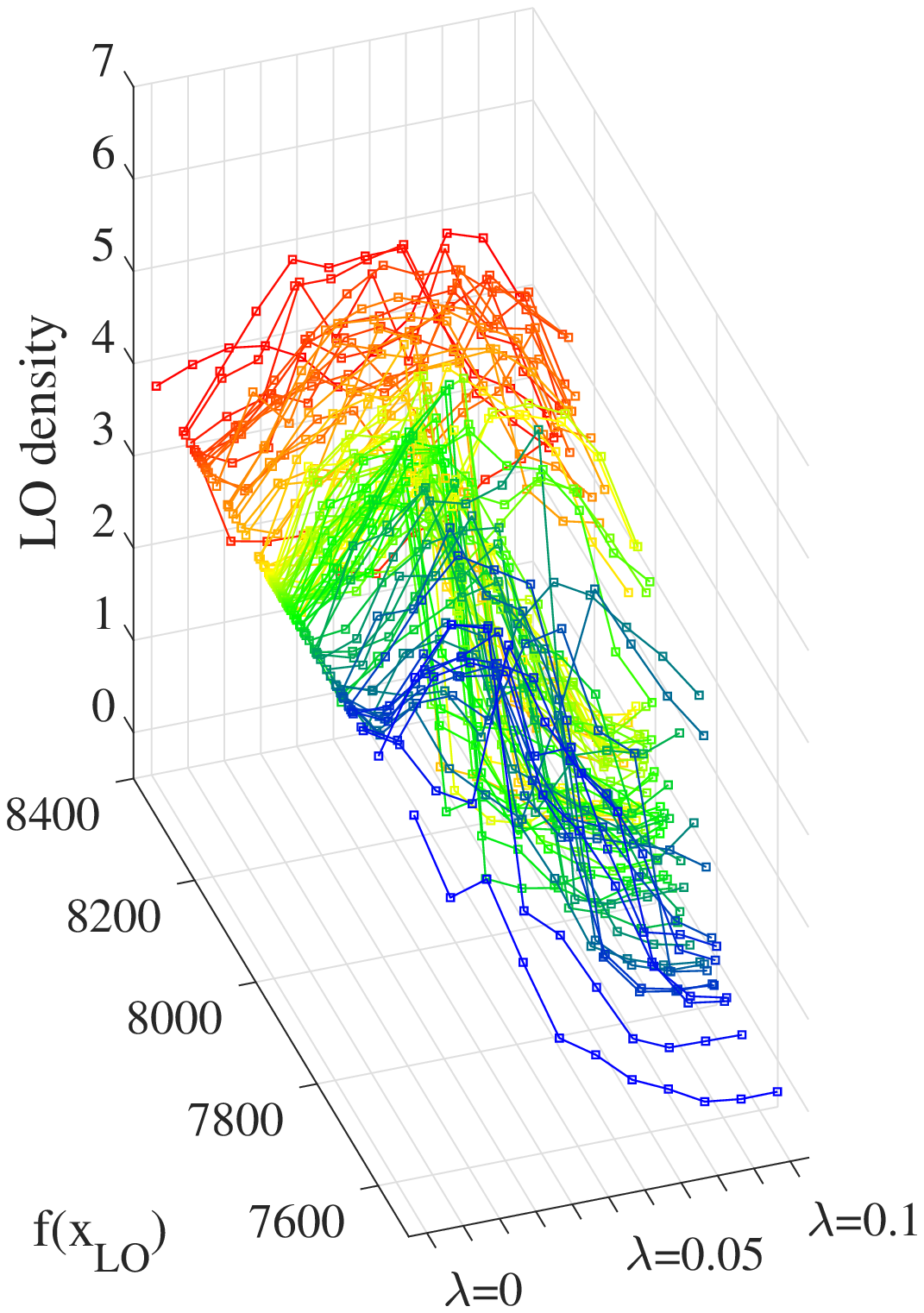}}
    \hspace{0in}
  \subfigure[Escaping rate]{
    \label{fig:escaping_rate_berlin52_MLO} %% label for first subfigure
    \includegraphics[height=0.30\linewidth]{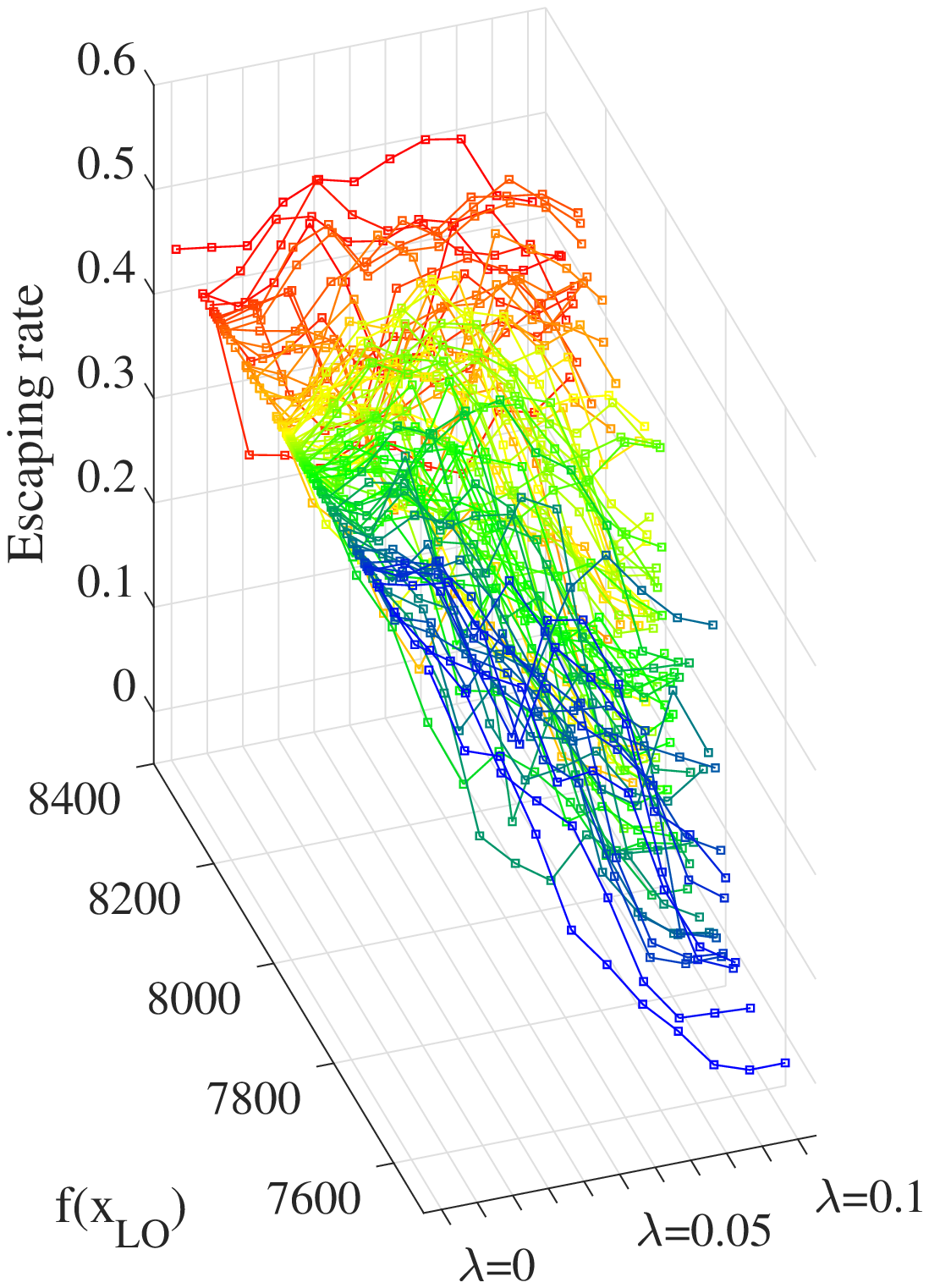}}
    \hspace{0in}
  \subfigure[FDC]{
    \label{fig:FDC_berlin52_MLO} %% label for first subfigure
    \includegraphics[height=0.30\linewidth]{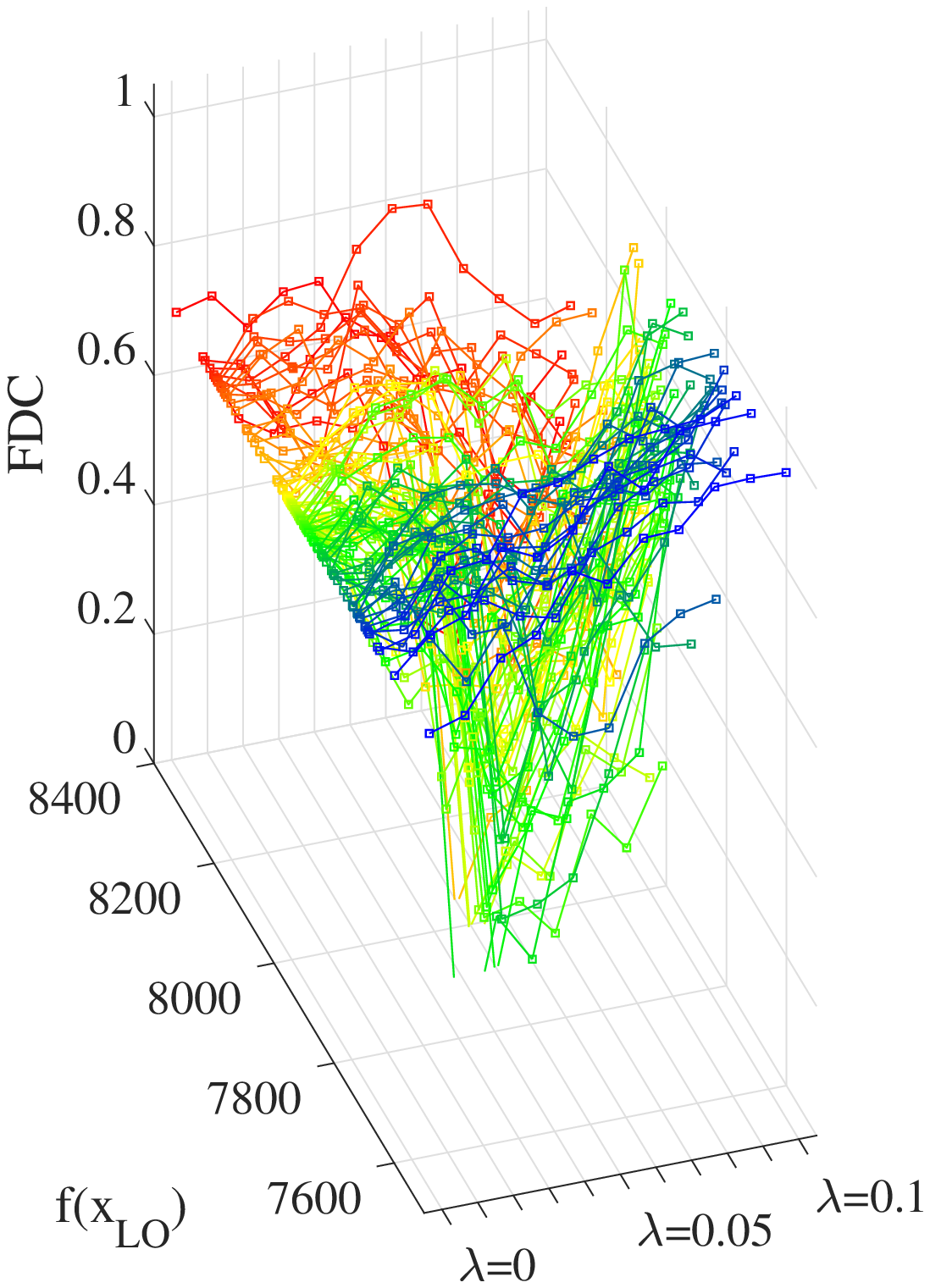}}
    \hspace{0in}
  \subfigure[Runtime]{
    \label{fig:runtime_berlin52_MLO} %% label for first subfigure
    \includegraphics[height=0.30\linewidth]{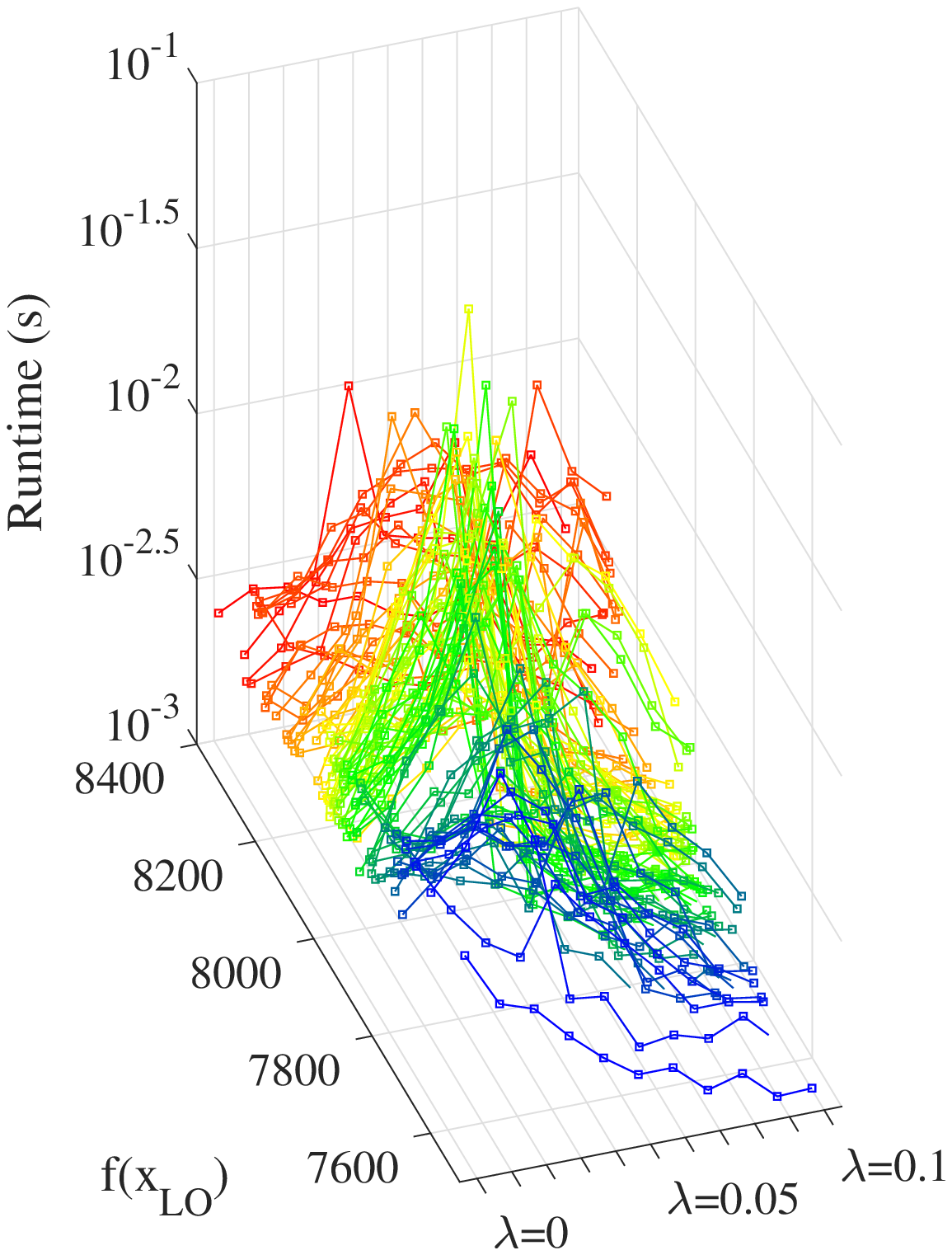}}\\
  \subfigure[Local optimum density (cubic spline interpolation)]{
    \label{fig:LO_density_berlin52_MLO_csi} %% label for first subfigure
    \includegraphics[height=0.30\linewidth]{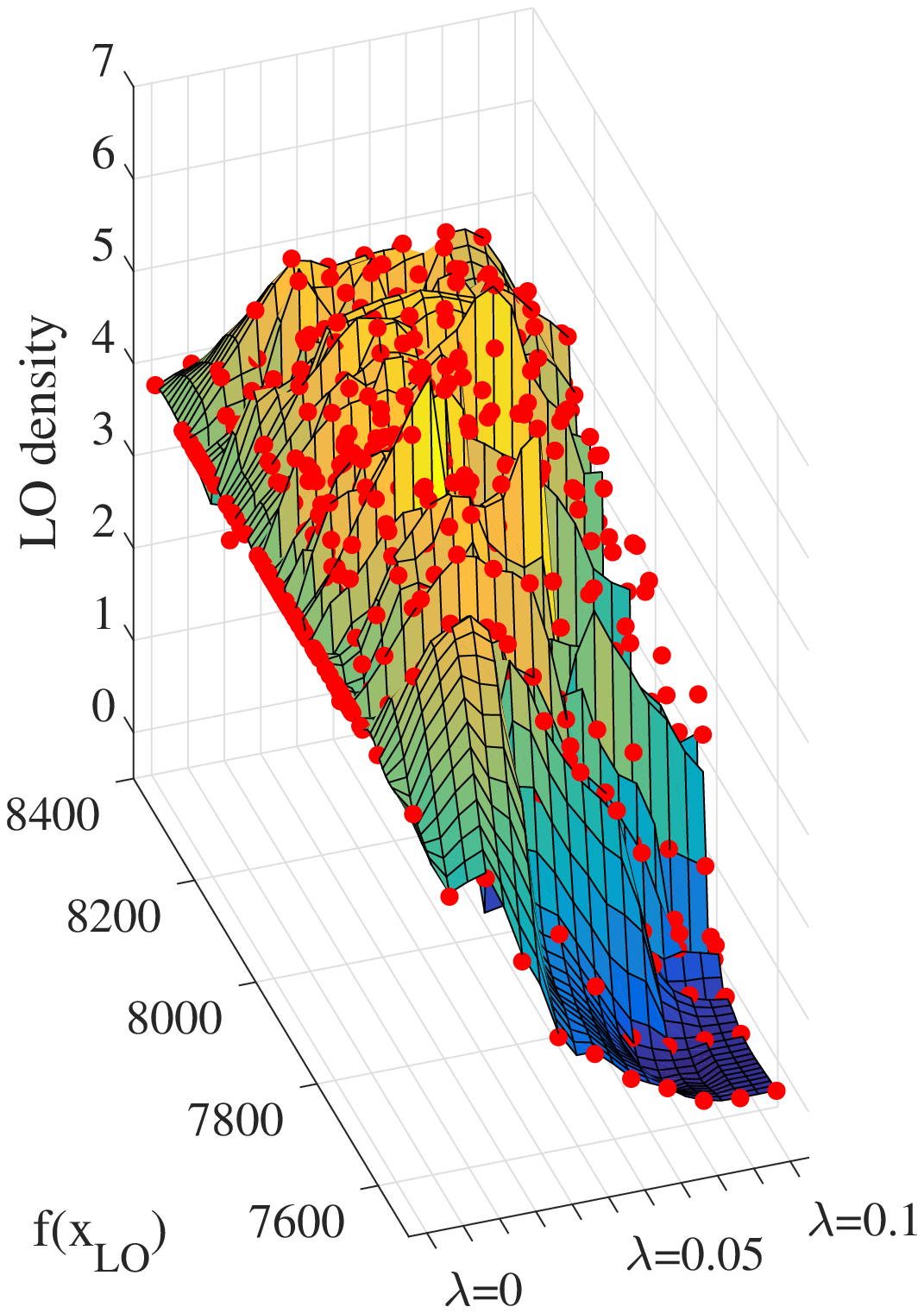}}
    \hspace{0in}
  \subfigure[Escaping rate (cubic spline interpolation)]{
    \label{fig:escaping_rate_berlin52_MLO_csi} %% label for first subfigure
    \includegraphics[height=0.30\linewidth]{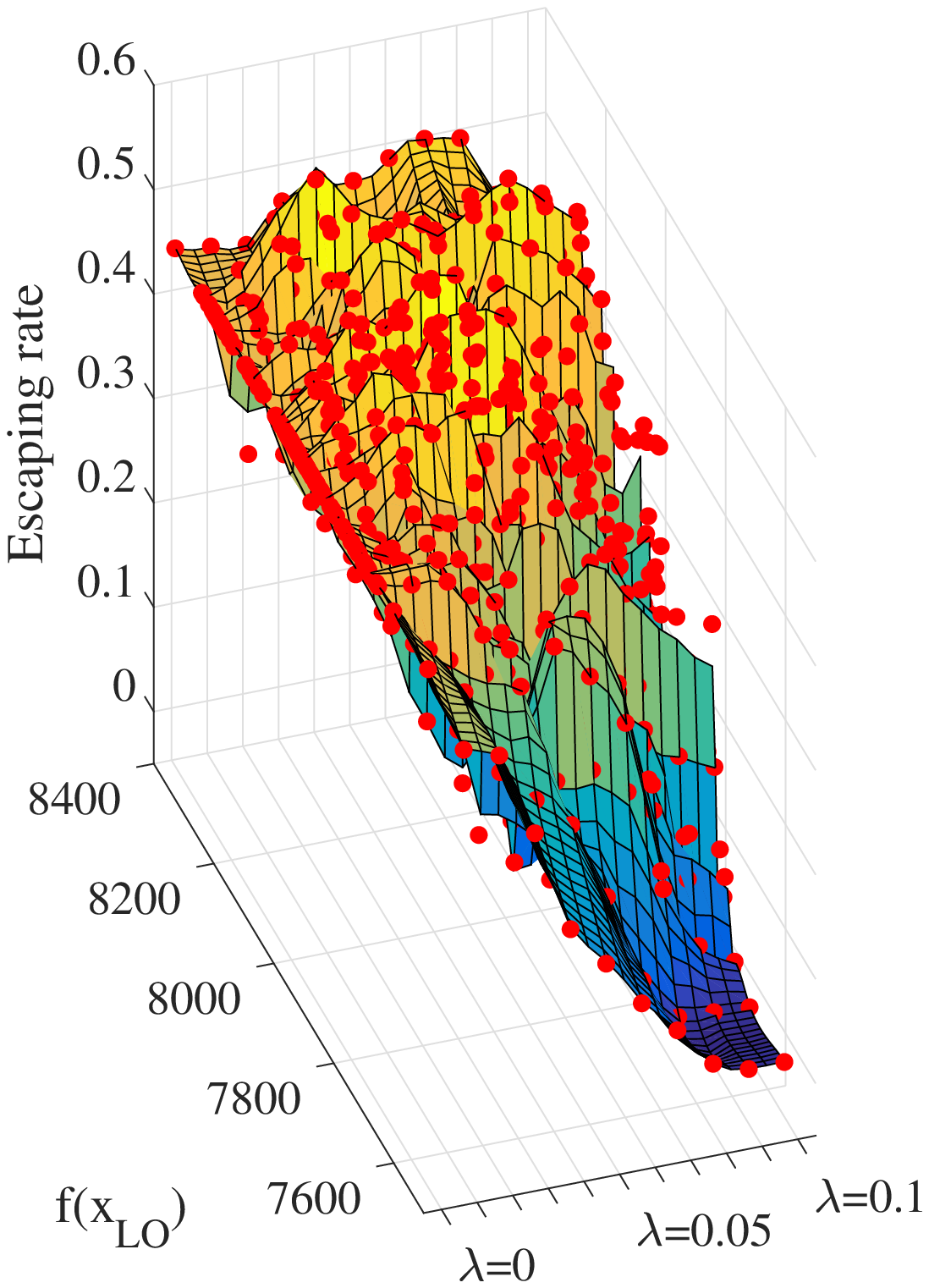}}
    \hspace{0in}
  \subfigure[FDC (cubic spline interpolation)]{
    \label{fig:FDC_berlin52_MLO_csi} %% label for first subfigure
    \includegraphics[height=0.30\linewidth]{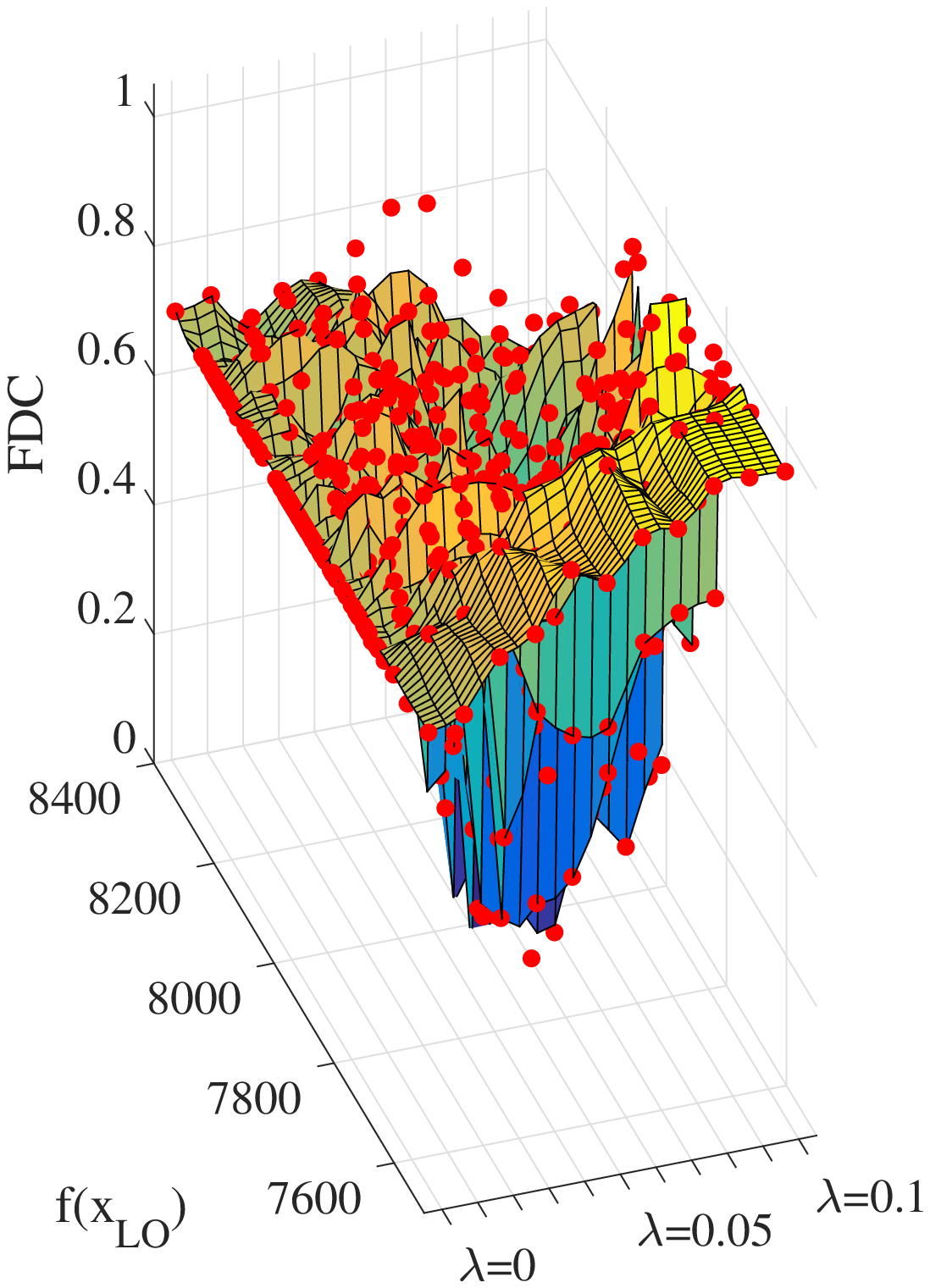}}
    \hspace{0in}
  \subfigure[Runtime (cubic spline interpolation)]{
    \label{fig:runtime_berlin52_MLO_csi} %% label for first subfigure
    \includegraphics[height=0.30\linewidth]{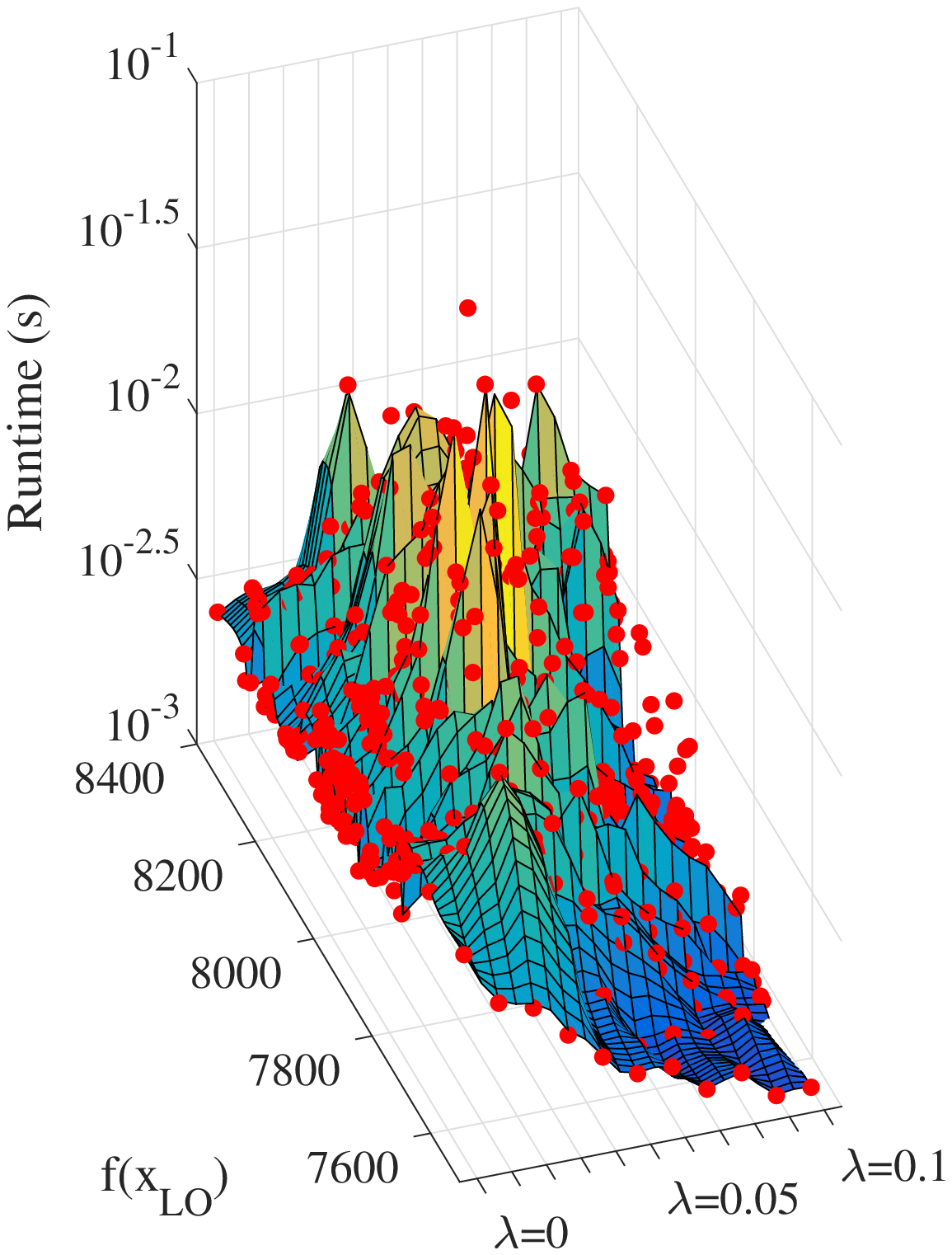}}
  \caption{Landscape analysis results of the transformed berlin52 based on 89 local optima and the global optimum with different $\lambda$ values. (a)-(d) show the metric curves of different local optima; (e)-(h) show the cubic spline interpolation of the results. Here red and blue color indicates high and low function value, respectively. }\label{fig:berlin52_MLO}% epm_201909131108 epm_201909052033
\end{figure*}

\subsection{The general framework}

The general framework is summarized in Alg.~\ref{alg:LSILS}. Similar to solution perturbation based algorithms, the framework iteratively executes a local search procedure and a perturbation procedure. The key is that the local search is performed on the transformed TSP $g = (1-\lambda)f_o + \lambda f_c$. If we set $\lambda=0$, then $g=f_o$ and Alg.~\ref{alg:LSILS} degenerates to ILS in Alg.~\ref{alg:ILS}. A local optimum, denoted as $x_0$, is first obtained by any local search algorithm (line~\ref{fmk01}) from a random initial solution $x_{ini}$ (line~\ref{fmk01}). The algorithm repeats the following procedure until stop. At each iteration, the current best solution w.r.t. $f_o$, denoted as $x_{f_o,best}$, is used to construct a convex-hull TSP (line~\ref{fmk02}) for a transformed TSP $g$ (line~\ref{fmk03}). The local search is then applied on the transformed TSP from a perturbed solution (line~\ref{fmk05}) for a new best $x_{f_o,best}$. The coefficient $\lambda$ is updated (line~\ref{fmk06}). Note that LocalSearch($x'_j,x_{f_o,best}|g$) means executing a local search from $x'_j$ on the transformed TSP $g$, meanwhile keeping updating $x_{f_o,best}$ by tracking the value change of $f_o$. It returns a local optimum $x_{j+1}$ w.r.t. the transformed TSP $g$ and the updated $x_{f_o,best}$ w.r.t. the original TSP $f_o$. In the next iteration, $x_{j+1}$ will be perturbed to generate an initial solution for the next local search. $\lambda$ is updated after local search (line~\ref{fmk06}).

\begin{algorithm}[th!]
\SetKwInput{KwInput}{Input}
\small%criptsize%footnotesize
%\SetKwInput{KwInput}{Input}
%\KwInput{$f_o$, }
    $x_{ini} \gets $ random or heuristically generated solution.\;
    $x_0 \gets $ LocalSearch($x_{ini}|f_o$)\; \label{fmk01}
    $x_{f_o,best} \gets x_0$\;
    $j \gets 0$\;
%    $\lambda \gets 1$\;
    \While{\emph{stopping-criterion} is not met}{
        Construct convex-hull TSP $f_c$ based on $x_{f_o,best}$\; \label{fmk02}
        $g \gets (1-\lambda)f_o + \lambda f_c$\; \label{fmk03}
        $x'_j \gets$ Perturbation($x_j$)\; \label{fmk04}
        $\{x_{j+1},x_{f_o,best}\} \gets$ LocalSearch($x'_j,x_{f_o,best}|g$)\; \label{fmk05}%//$x_{best}$ is the best found solution with respect to $f_o$
        $j\gets j+1$\;
        $\lambda = $Update($\lambda$)\; \label{fmk06}
    }
    \KwRet{$x_{f_o,best}$}
\caption{The landscape smoothing global search framework.}
\label{alg:LSILS}
\end{algorithm}

\subsection{Performance Comparison based on 3-Opt Local Search}

We first use the 3-Opt local search as the local search operator and the double bridge perturbation as the perturbation operator to realize the proposed framework. The resultant algorithm is named as \emph{Landscape Smoothing Iterated Local Search} (LSILS). The developed LSILS is compared against ILS, the smoothing algorithm proposed by Gu and Huang~\cite{gu1994efficient} (denoted as GH) and the sequential smoothing algorithm proposed by Coy et al.~\cite{coy2000computational} (denoted as SSA).

Seven instances are chosen from the TSPLIB as the test instances, including rd400, p654, u724, pcb1173, rl1304, vm1748 and u1817. Each algorithm is executed 100 runs on each test instance from different random initial solutions. The stopping criterion of each run is $10^8$ function evaluations. For LSILS, we test five different settings of $\lambda$. In the first three settings constant values of $\lambda$ are applied during the search and in the last two settings, $\lambda=0$ at the beginning and increases during the search, as shown in Table~\ref{tbl:LSILS_lambda}. The parameter settings are the same as in~\cite{coy2000computational}. That is, for GH the smoothing factor $\alpha = 6\to5\to4\to3\to2\to1$ in the first six local search rounds and for SSA $\alpha = 7\to5\to3\to1$ in the first four local search rounds. In both GH and SSA, if $\alpha$ has dropped to $1$ and the function evaluation budget has not been run out, the algorithm will execute ILS on the original TSP landscape (i.e. $\alpha=1$) until the function evaluation budget is run out. The 3-Opt local search and the double bridge perturbation are also used in the implementations of ILS, LSILS, GH and SSA. In addition, the 3-Opt local search implementation is speeded up by the techniques of \emph{near neighbor search} and \emph{don't look bits}~\cite{bentley1992fast}. The other experimental settings are the same to the settings in the previous section.

\begin{table*}
\caption{Settings of $\lambda$ in LSILS Implementations}
\centering
\label{tbl:LSILS_lambda}
%\resizebox{\linewidth}{!}{
\begin{tabular}{c >{\centering}p{22pt} >{\centering}p{22pt}  >{\centering}p{22pt} >{\centering}p{22pt}  >{\centering}p{22pt} >{\centering}p{22pt}  >{\centering}p{22pt} >{\centering}p{22pt}  >{\centering}p{22pt} c }
\hline
\begin{minipage}{120pt}Function Evaluation Period ($\times10^7$):\end{minipage} & $0\to1$ & $1\to2$ & $2\to3$ & $3\to4$ & $4\to5$ & $5\to6$ & $6\to7$ & $7\to8$ & $8\to9$ & $9\to10$ \\
%& $0 \to 10^7$ & $10^7 \to 2\times10^7$ & $2\times10^7 \to 3\times10^7$ & $3\times10^7 \to 4\times10^7$ & $4\times10^7 \to 5\times10^7$& $5\times10^7 \to 6\times10^7$ & $6\times10^7 \to 7\times10^7$ & $7\times10^7 \to 8\times10^7$ & $8\times10^7 \to 9\times10^7$ & $9\times10^7 \to 10^8$ \\
\hline
Setting1 & 0.02 & 0.02 & 0.02 & 0.02 & 0.02 & 0.02 & 0.02 & 0.02 & 0.02 & 0.02 \\
\hline
Setting2 & 0.04 & 0.04 & 0.04 & 0.04 & 0.04 & 0.04 & 0.04 & 0.04 & 0.04 & 0.04 \\
\hline
Setting3 & 0.06 & 0.06 & 0.06 & 0.06 & 0.06 & 0.06 & 0.06 & 0.06 & 0.06 & 0.06 \\
\hline
Setting4 & 0.00 & 0.00 & 0.01 & 0.01 & 0.02 & 0.02 & 0.03 & 0.03 & 0.04 & 0.04 \\
\hline
Setting5 & 0.00 & 0.01 & 0.02 & 0.03 & 0.04 & 0.05 & 0.06 & 0.07 & 0.08 & 0.09 \\
\hline
\end{tabular}
%}
\end{table*}

Fig.~\ref{fig:excess} shows the mean excess values to the best solutions found so far by different smoothing algorithms against time. To properly compare among TSP solvers, a reasonable method is to compare the areas under curves (AUC)~\cite{weise2014benchmarking,weise2018automatically}. The AUC values of the excess curves are listed in Table~\ref{tbl:AUC}. From Fig.~\ref{fig:excess} and Table~\ref{tbl:AUC} we can see that, in six out of seven test instances, the minimum AUC is obtained by LSILS, which means that it performs better than ILS, GH and SSA on these six instances. From Fig.~\ref{fig:excess} and Table~\ref{tbl:AUC} we can also see that, in four instances, LSILS with increasing $\lambda$ values (Setting5 in Table~\ref{tbl:LSILS_lambda}) achieves the best performance. On p654 (Fig.~\ref{fig:excess_p654}), SSA achieves the best performance.

\begin{figure*}%[H]
  \raggedleft
  \subfigure[rd400]{
    \label{fig:excess_rd400} %% label for first subfigure
    \includegraphics[height=0.210\linewidth]{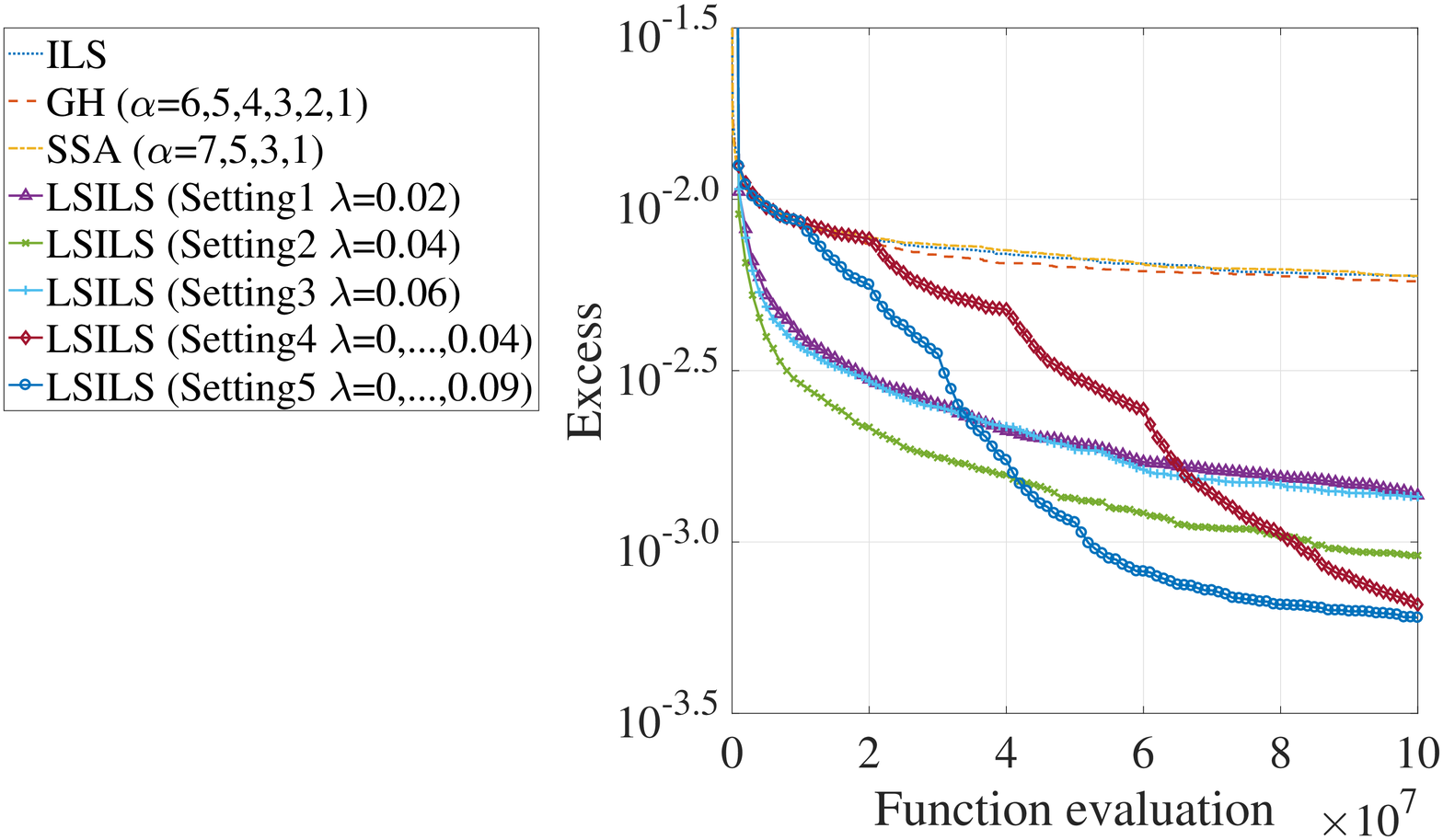}}
  \subfigure[p654]{
    \label{fig:excess_p654} %% label for first subfigure
    \includegraphics[height=0.210\linewidth]{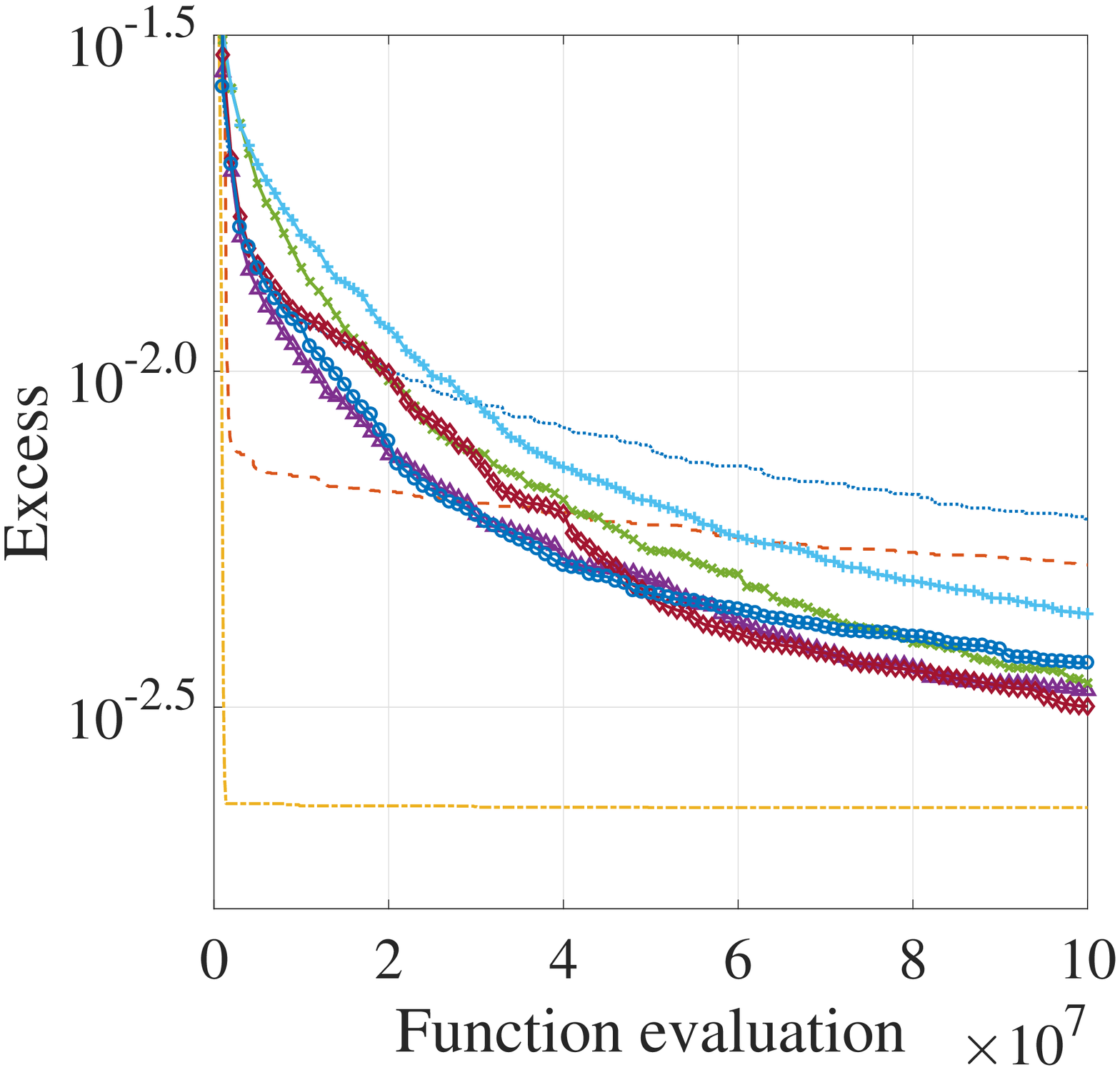}}
  \subfigure[u724]{
    \label{fig:excess_u724} %% label for first subfigure
    \includegraphics[height=0.210\linewidth]{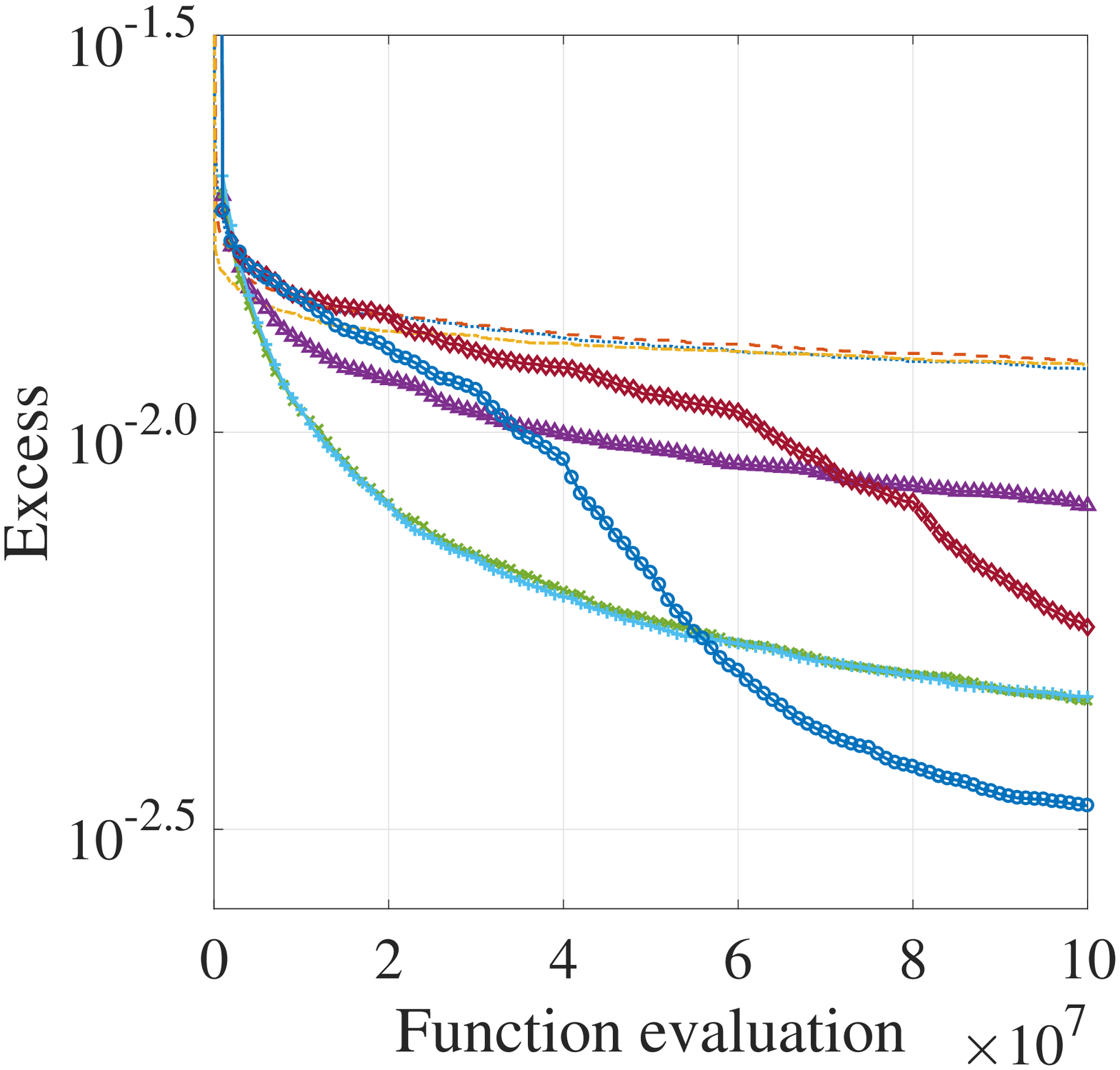}}\\
  \subfigure[pcb1173]{
    \label{fig:excess_pcb1173} %% label for first subfigure
    \includegraphics[height=0.210\linewidth]{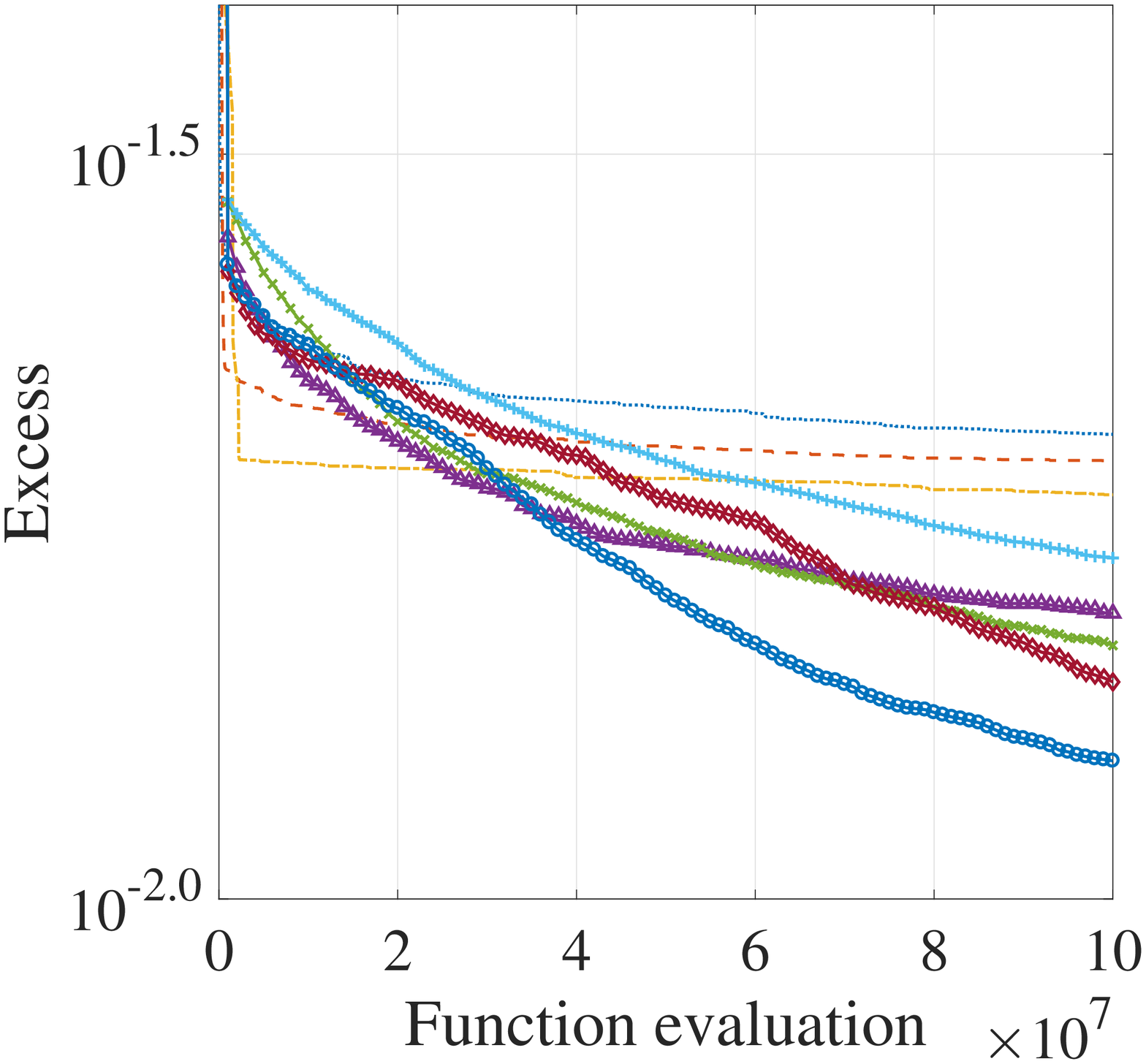}}
  \subfigure[rl1304]{
    \label{fig:excess_rl1304} %% label for first subfigure
    \includegraphics[height=0.210\linewidth]{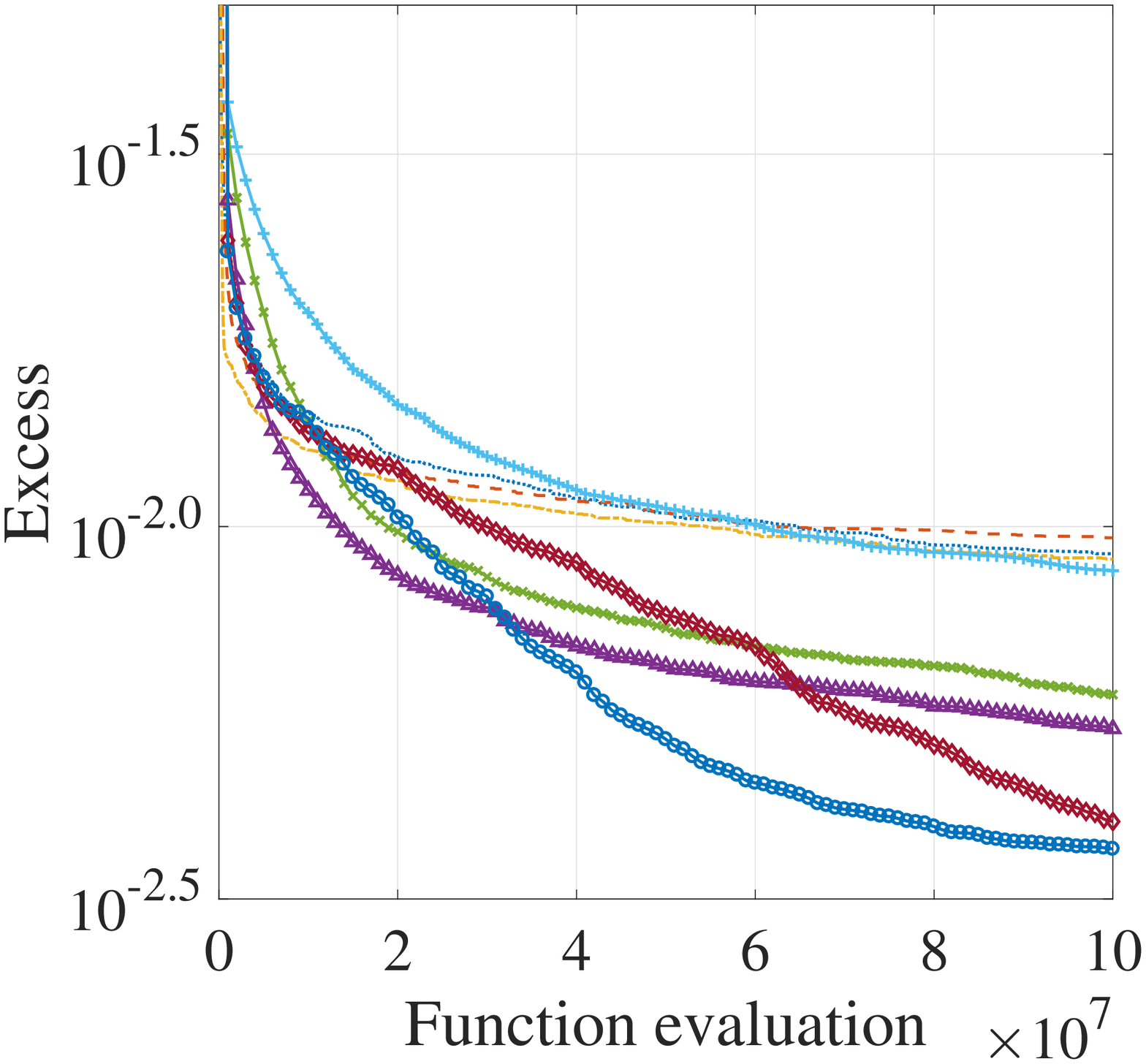}}
  \subfigure[vm1748]{
    \label{fig:excess_vm1748} %% label for first subfigure
    \includegraphics[height=0.210\linewidth]{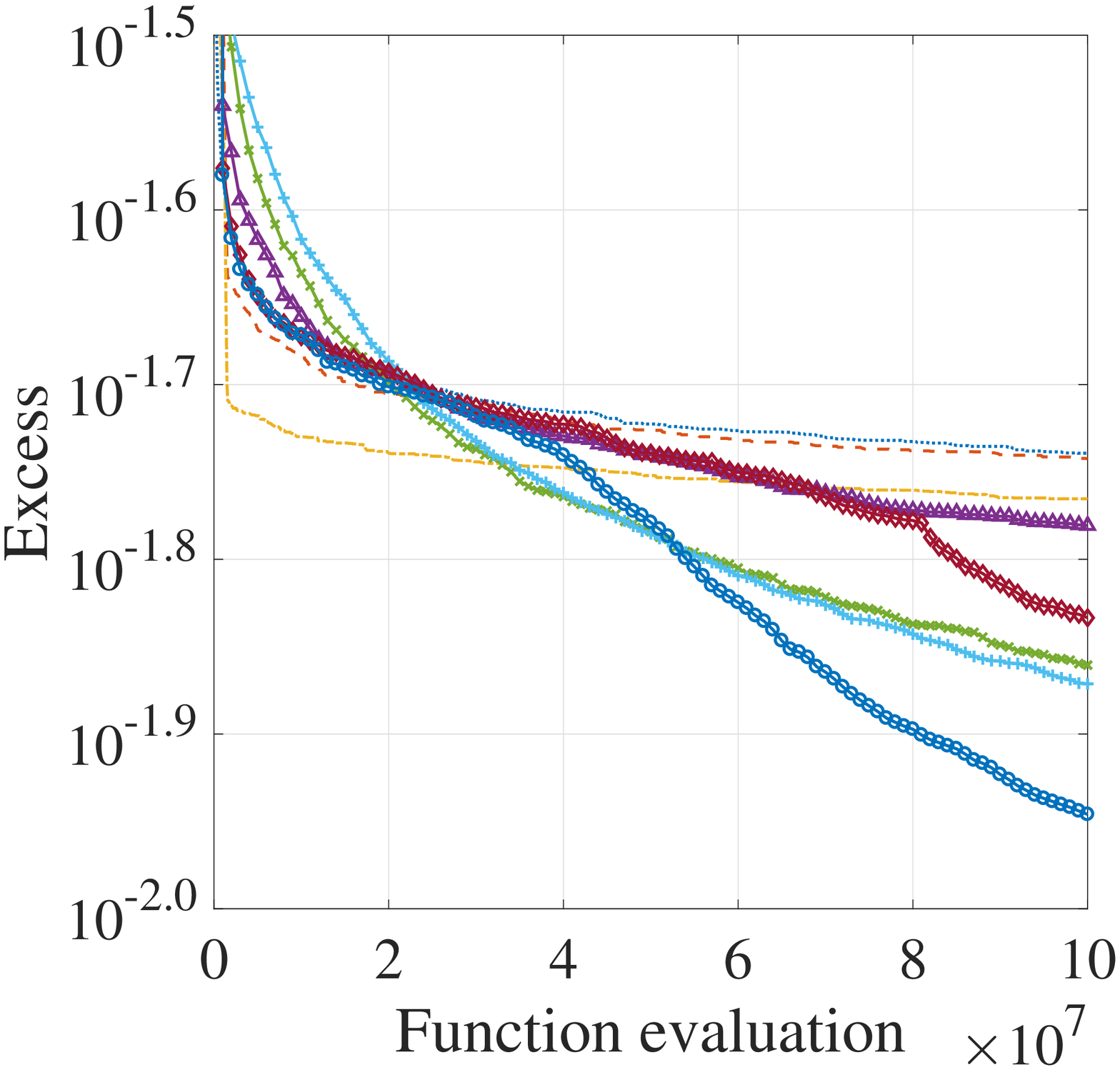}}
  \subfigure[u1817]{
    \label{fig:excess_u1817} %% label for first subfigure
    \includegraphics[height=0.210\linewidth]{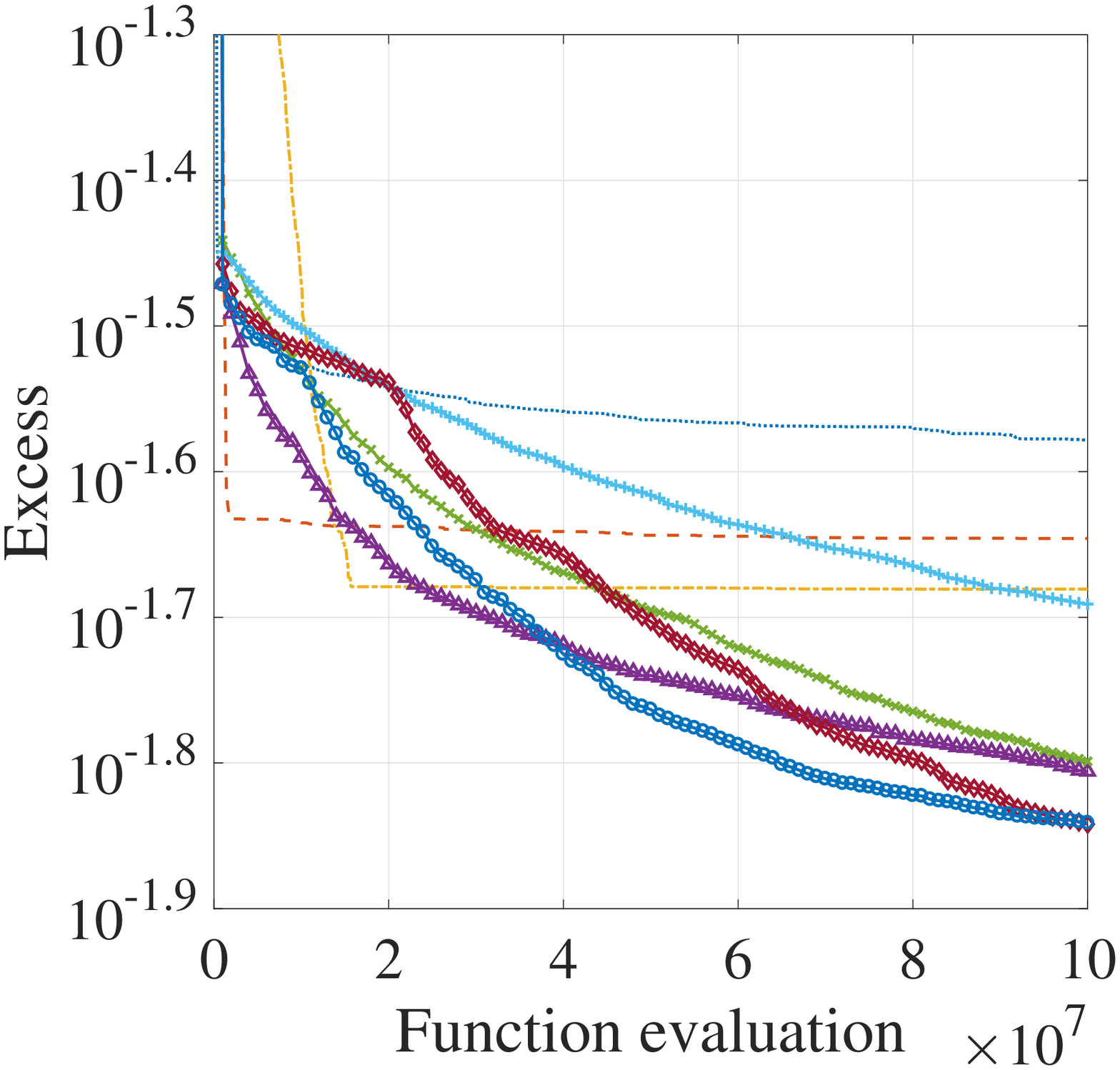}}
  \caption{Comparison result between LSILS, ILS, GH and SSA based on 3-Opt local search and double bridge perturbation.}\label{fig:excess}% epm_201904010939 epm_201903182029 epm_201903151606
\end{figure*}

However, we notice that on p654 the performance of LSILS is close to that of SSA at termination. We conjecture LSILS can perform better if there are more computational budget. To justify, we run LSILS on p654 with an increased function evaluation budget ($5\times 10^8$). The results are shown in Fig.~\ref{fig:p654_new}, from which we can see that at termination LSILS outperforms SSA in all settings. This confirms our conjecture.

\begin{figure}%[H]
  \centering
  % Requires \usepackage{graphicx}
  \includegraphics[width=0.8\linewidth]{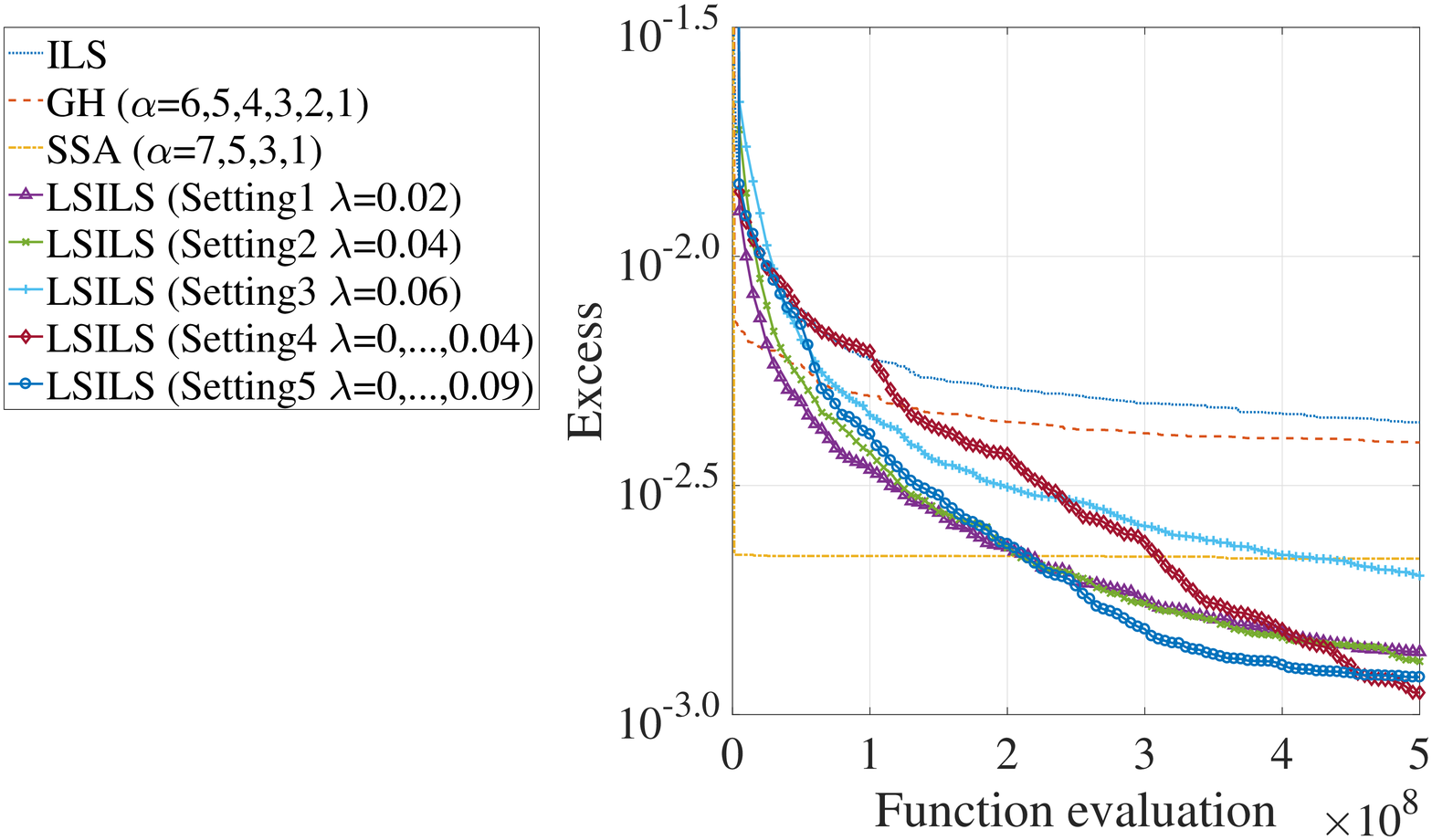}\\
  \caption{Comparison results on p654 with more function evaluations.}\label{fig:p654_new} %epm_202001221133
\end{figure}

On average, we see that LSILS performs significantly better than ILS, GH and SSA in terms of excess on most of the test instances. This clearly indicates that the proposed HC transformation can truly improve the performance of ILS.

However, we can also see that, the performance of LSILS is relatively poor at the beginning of the search. In fact, in most cases, the descent rate of GH and SSA is much higher than that of LSILS at the beginning of the search. This is because that the smoothing effect of the HC transformation depends highly on the quality of the local optimum used to construct the convex-hull TSP. As stated in Section~\ref{sec:diff_lo}, with a low-quality local optimum, the smoothing effect of the HC transformation is not as good as a high-quality local optimum. As search goes by, better solutions are found, the performance of LSILS becomes better and exceeds the performance of GH and SSA eventually in most cases.

Table~\ref{tbl:cpu_time} lists the mean CPU time used by each algorithm in the experiments. Table~\ref{tbl:cpu_time} shows that, LSILS with Setting3 (i.e. $\lambda$ is constant and equals to $0.06$) has the shortest CPU time in six out of seven test instance and on most instances, the runtime of LSILS with increasing $\lambda$ value (i.e. Setting4 and Setting5) is lower than that of GH and SSA. This means that LSILS with increasing $\lambda$ value can find better solutions and requires less time than those of GH and SSA in most cases.

\begin{table*}
\caption{Areas Under the Excess Curves} % epm_201904010939 epm_201903182029 epm_201903151606
\centering
\label{tbl:AUC}
%\resizebox{\linewidth}{!}{
\begin{tabular}{l c c c c c c c c }
\hline
\multirow{2}{*}{Instance} & \multirow{2}{*}{ILS} & \multirow{2}{*}{GH} & \multirow{2}{*}{SSA} & \multicolumn{5}{c}{LSILS} \\
\cline{5-9}
 & & & & Setting1 & Setting2 & Setting3 & Setting4 & Setting5 \\
% & & & &$\lambda=0.02$ & $\lambda=0.04$ & $\lambda=0.06$ & $\lambda=0\to0.01\to\cdots\to0.04$ & $\lambda=0\to0.01\to\cdots\to0.09$ \\
\hline
rd400 & 6.930e+05 & 6.692e+05 & 7.005e+05 & 2.368e+05 & \textbf{1.684e+05} & 2.266e+05 & 3.900e+05 & 2.706e+05\\
p654 & 8.422e+05 & 6.072e+05 & \textbf{2.227e+05} & 5.856e+05 & 7.223e+05 & 8.285e+05 & 6.388e+05 & 6.084e+05\\
u724 & 1.307e+06 & 1.321e+06 & 1.283e+06 & 1.004e+06 & 6.724e+05 & \textbf{6.701e+05} & 1.085e+06 & 7.906e+05\\
pcb1173 & 2.149e+06 & 2.012e+06 & 1.910e+06 & 1.801e+06 & 1.832e+06 & 2.038e+06 & 1.862e+06 & \textbf{1.690e+06}\\
rl1304 & 1.109e+06 & 1.093e+06 & 1.039e+06 & 7.521e+05 & 8.690e+05 & 1.229e+06 & 8.329e+05 & \textbf{6.904e+05}\\
vm1748 & 1.922e+06 & 1.894e+06 & 1.767e+06 & 1.856e+06 & 1.734e+06 & 1.751e+06 & 1.820e+06 & \textbf{1.634e+06}\\
u1817 & 2.753e+06 & 2.263e+06 & 2.801e+06 & 1.922e+06 & 2.121e+06 & 2.487e+06 & 2.106e+06 & \textbf{1.920e+06}\\
\hline
\end{tabular}
%}
\end{table*}

\begin{table*}
\caption{Mean CPU time} % epm_201904010939 epm_201903182029 epm_201903151606
\centering
\label{tbl:cpu_time}
%\resizebox{\linewidth}{!}{
\begin{tabular}{l c c c c c c c c }
\hline
\multirow{2}{*}{Instance} & \multirow{2}{*}{ILS} & \multirow{2}{*}{GH} & \multirow{2}{*}{SSA} & \multicolumn{5}{c}{LSILS} \\
\cline{5-9}
 & & & & Setting1 & Setting2 & Setting3 & Setting4 & Setting5 \\
% & & & &$\lambda=0.02$ & $\lambda=0.04$ & $\lambda=0.06$ & $\lambda=0\to0.01\to\cdots\to0.04$ & $\lambda=0\to0.01\to\cdots\to0.09$ \\
\hline
rd400 & 48.04s & 81.37s & 68.88s & 42.75s & 34.58s & \textbf{28.55s} & 79.98s & 65.87s\\
p654 & \textbf{7.96s} & 13.57s & 10.85s & 8.11s & 8.14s & 8.36s & 16.38s & 16.32s\\
u724 & 76.40s & 138.50s & 119.41s & 60.86s & 41.52s & \textbf{28.71s} & 115.37s & 85.00s\\
pcb1173 & 119.94s & 229.56s & 201.21s & 56.18s & 26.49s & \textbf{18.59s} & 117.19s & 80.67s\\
rl1304 & 73.89s & 165.36s & 127.96s & 61.47s & 43.28s & \textbf{31.10s} & 109.97s & 85.77s\\
vm1748 & 116.19s & 229.42s & 196.85s & 78.50s & 49.04s & \textbf{31.64s} & 151.44s & 103.60s\\
u1817 & 123.23s & 230.96s & 212.64s & 53.15s & 24.99s & \textbf{17.79s} & 122.85s & 79.61s\\
\hline
\end{tabular}
%}
\end{table*}

\subsection{Performance Comparison based on the LK Local Search}

In previous sections, we only test the performance of the proposed framework based on the 3-Opt local search and the test instances is relatively small with city number $n<2000$. We conduct another experiment to test the performance of the proposed framework based on the most powerful TSP local search method, the Lin-Kernighan (LK) local search~\cite{lin1973effective}, on ten middle-size and large-size TSP instances. The resultant algorithm is named as LSILS-LK. %The LK local search is a well-designed local search heuristic for the TSP, which is based a fine-grain edge exchange strategy and in each move the number of exchanged edges is not fixed.

In our experiment, the implementation of the LK local search is from the Concorde software package~\footnote{\url{http://www.math.uwaterloo.ca/tsp/concorde/}}. In the LK local search, the edge exchange is restricted in a sub-graph of the original TSP graph. In the sub-graph, each vertex (city) only connects with its 20 nearest vertexes (cities). Based on the LK local search and the double bridge perturbation, we compare LSILS-LK against ILS with LK local search (ILS-LK), GH with LK local search (GH-LK) and SSA with LK local search (SSA-LK). In ILS-LK, GH-LK and SSA-LK, the LK local search is used as the local search and double bridge perturbation is used as the perturbation method.

Further, to test the perturbation strategy to the performance of LSILS-LK, we include another two algorithm instantiations, i.e. LSILS-LK with three double bridge perturbation (LSILS-LK-3DBP) and ILS-LK with three double bridge perturbation (ILS-LK-3DBP). The difference between LSILS-LK-3DBP and LSILS-LK is that in LSILS-LK-3DBP the double bridge perturbation is replaced by three successive double bridge perturbations. Here we do not apply the 3DBP strategy to GH-LK and SSA-LK, because except the first few iterations, these two algorithms are exactly the same to ILS-LK.

Six middle-size and large-size instances from the TSPLIB: \{pcb3038, fnl4461, pla7397, rl11849, usa13509, d18512\} and four randomly generate  instances: \{rand5000, rand10000, rand15000, rand20000\} are used as benchmarks. Among the selected TSPLIB instances, pcb3038, pla7397 and rl11849 come from printed circuit board, while fnl4461, usa13509 and d18512 come from real world maps. Since it is very hard to count the number of function evaluations in the LK local search, we use the CPU runtime as the stopping criterion for all the algorithms. The maximal runtimes in seconds allowed for optimizing the test instances are \{pcb3038: 600s, fnl4461: 900s, pla7397: 1500s, rl11849: 2400s, usa13509: 2700s, d18512: 3700s, rand5000:1000s, rand10000:2000s, rand15000:3000s, rand20000:4000s\}, respectively. For LSILS-LK and LSILS-LK-3DBP, we use Setting4 and Setting5 since with these settings the proposed framework has shown its superiority in previous study.

The time point to increase $\lambda$ is based on the CPU time. For example, for LSILS-LK running on pcb3038 with Setting5, $\lambda$ will change from 0 to 0.01 at the 60 second, from 0.01 to 0.02 at the 120 second and so on. Further, to save computational effort, in the implementations of LSILS-LK and LSILS-LK-3DBP the transformed TSP $g$ (which guides the LK local search) is only updated at the time points when $\lambda$ increases (i.e., each time $\lambda$ increases, $g$ will be updated based on the new $\lambda$ value and the current best solution $x_{f_o,best}$). Each algorithm is executed 50 runs on each test instance from different random initial solutions. The other experimental settings are the same to the previous experiments.

Fig.~\ref{fig:excess_LK} shows the mean excess values of the best-so-far solutions found by the compared algorithms against time. Since the globally optimal function values of the randomly generated instances are unknown, we use the best found function value $\times 0.99$ as the globally optimal function value to calculate the excess on the random instances. From Fig.~\ref{fig:excess_LK} we can see that, on all the test instances, the best performance is achieved by LSILS-LK-3DBP with Setting5.

Fig.~\ref{fig:excess_LK} also shows that the curves of LSILS-LK and LSILS-LK-3DBP are regularly jagged. We observed that a jag happens when the $\lambda$ increases. This phenomenon confirms again that the HC transformation with better local optima can result in better smoothing effect and hence a higher $\lambda$ value is preferred. We can also observe that, although three double bridge perturbation reduces the performance of ILS-LK, in LSILS-LK, when the perturbation strength becomes larger, the algorithm performance significantly improves. It means that LSILS-LK can work better with three double bridge perturbation strategy. A possible reason is that after smoothing, the number of local optima in the TSP landscape reduces and the size of the attractive basin increases. Hence a larger jump perturbation strategy can be more helpful for LSILS-LK to escape from current local optimum.

We further conduct some experiments for LSILS-LK-3DBP on some medium- and large size TSPs with increased runtime budget. Please see supplementary material for details. The results show that LSILS-LK-3DBP can further improve its performance in case extra budget is available.

In summary, we may conclude that the proposed HC transformation based on local optima can indeed improve the global search ability of existing TSP algorithms. The good performance of LSILS-LK and LSILS-LK-3DBP is due to the HC transformation, which can reduce the number of local optima in the transformed landscape meanwhile preserve the useful information in the optimum of the original TSP.
%\textcolor{red}{Any sentences about supplementary material?}

\begin{figure*}%[H]
  %\raggedleft
  \subfigure[pcb3038]{
    \label{fig:excess_pcb3038} %% label for first subfigure
    \includegraphics[height=0.24\linewidth]{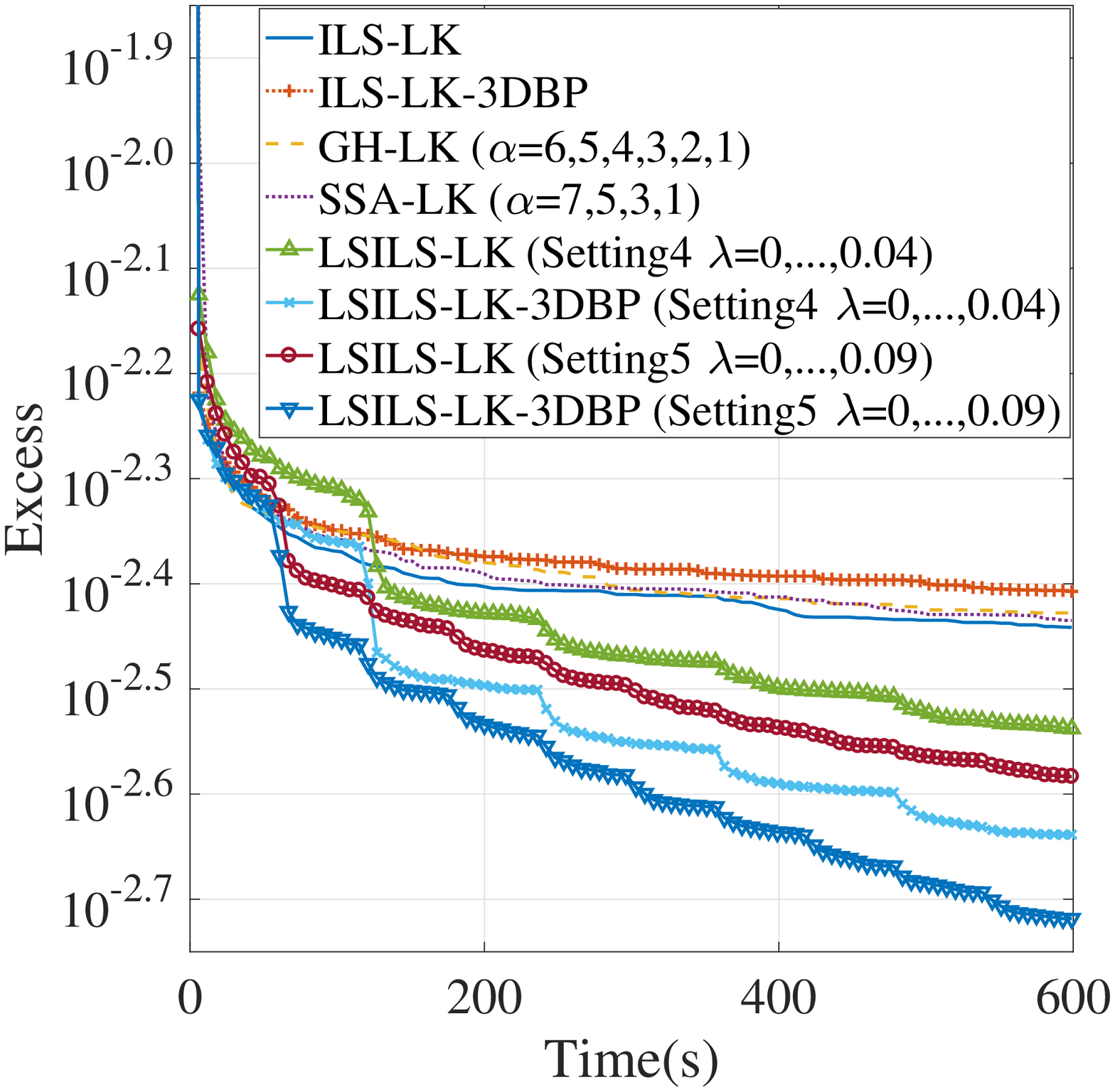}}
    \hspace{-0.12in}
  \subfigure[fnl4461]{
    \label{fig:excess_fnl4461} %% label for first subfigure
    \includegraphics[height=0.24\linewidth]{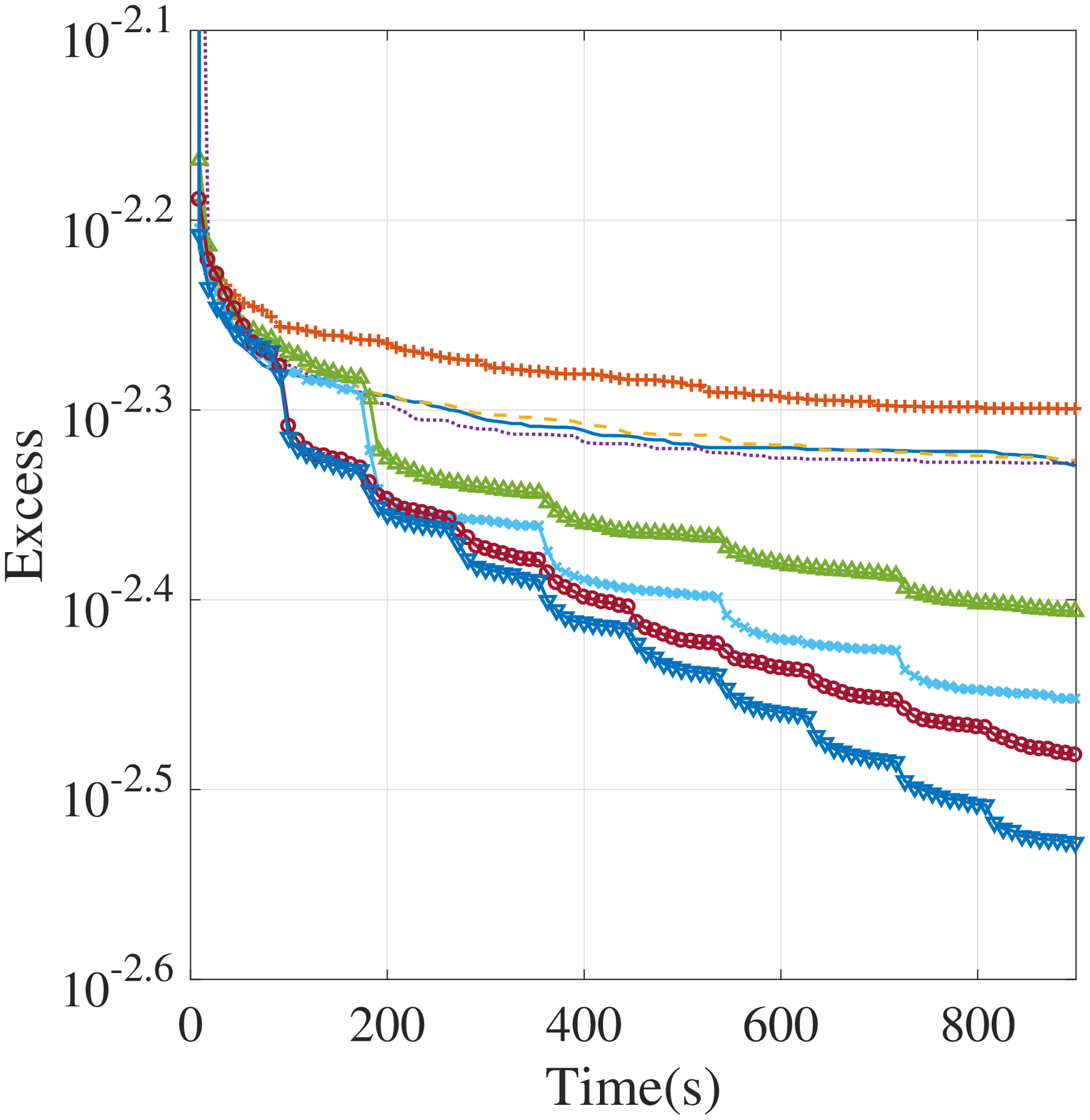}}
    \hspace{-0.12in}
  \subfigure[pla7397]{
    \label{fig:excess_pla7397} %% label for first subfigure
    \includegraphics[height=0.24\linewidth]{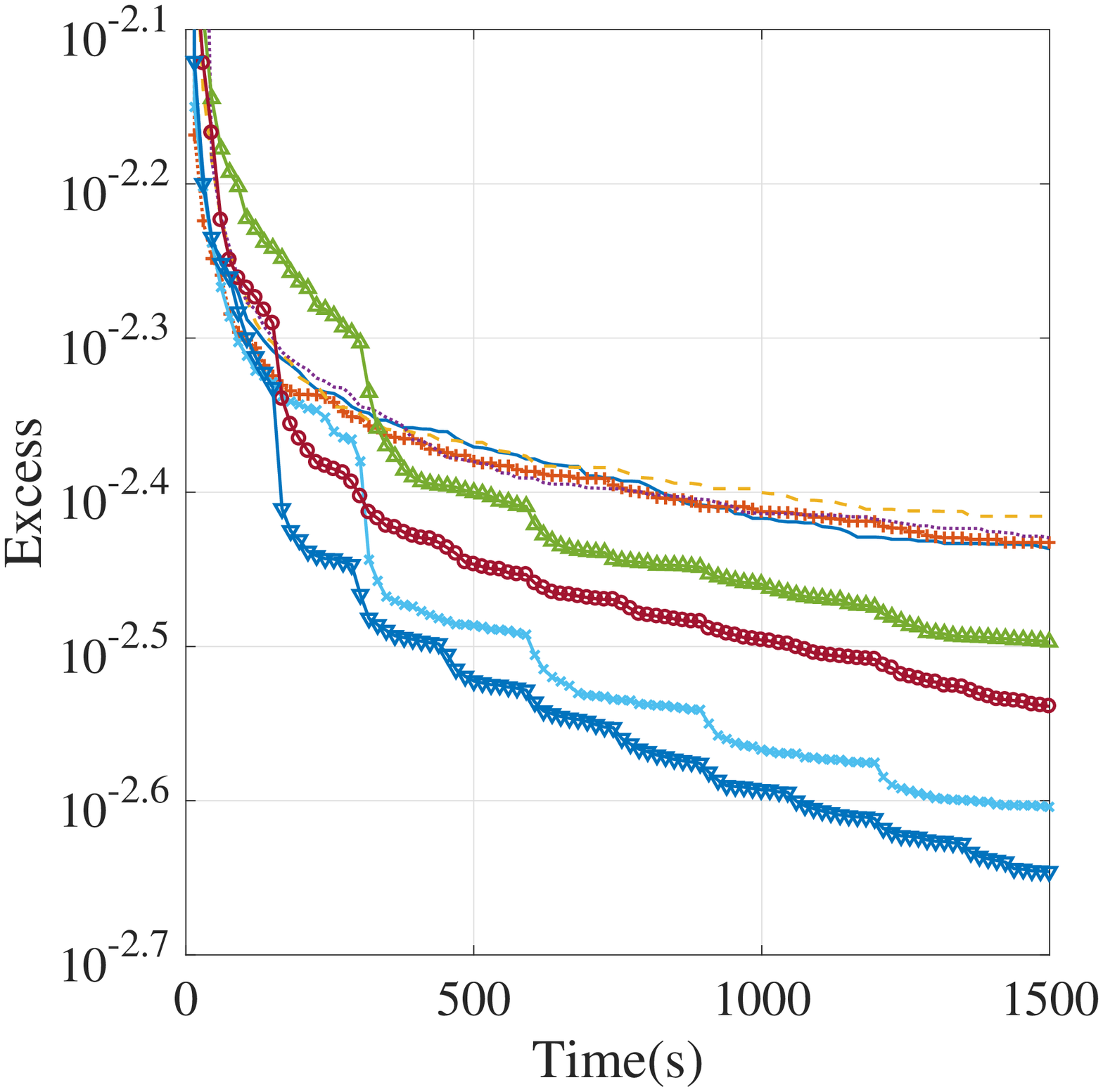}}
    \hspace{-0.12in}
  \subfigure[rl11849]{
    \label{fig:excess_rl11849} %% label for first subfigure
    \includegraphics[height=0.24\linewidth]{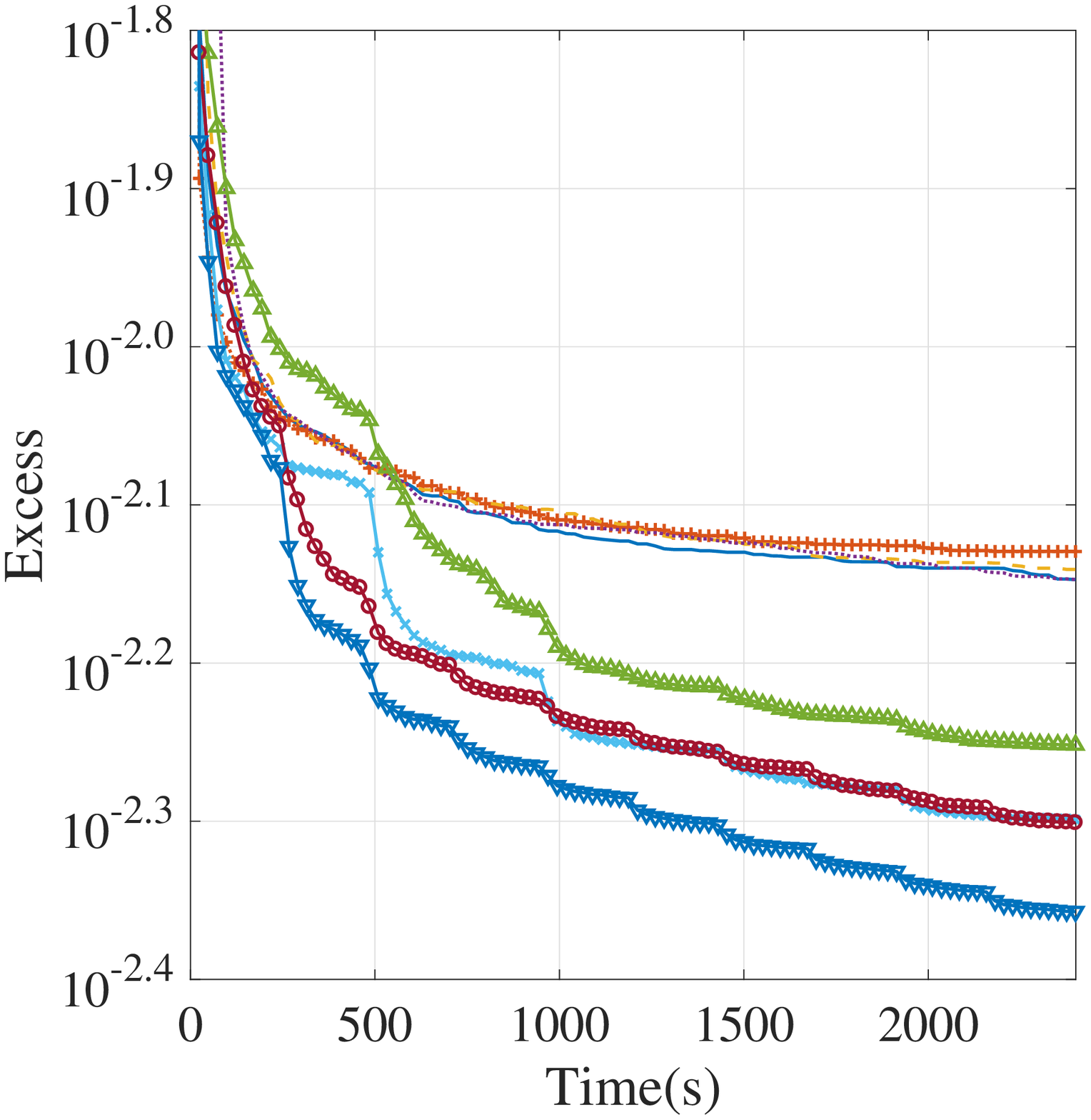}}\\
    \hspace{-0.12in}
  \subfigure[usa13509]{
    \label{fig:excess_usa13509} %% label for first subfigure
    \includegraphics[height=0.24\linewidth]{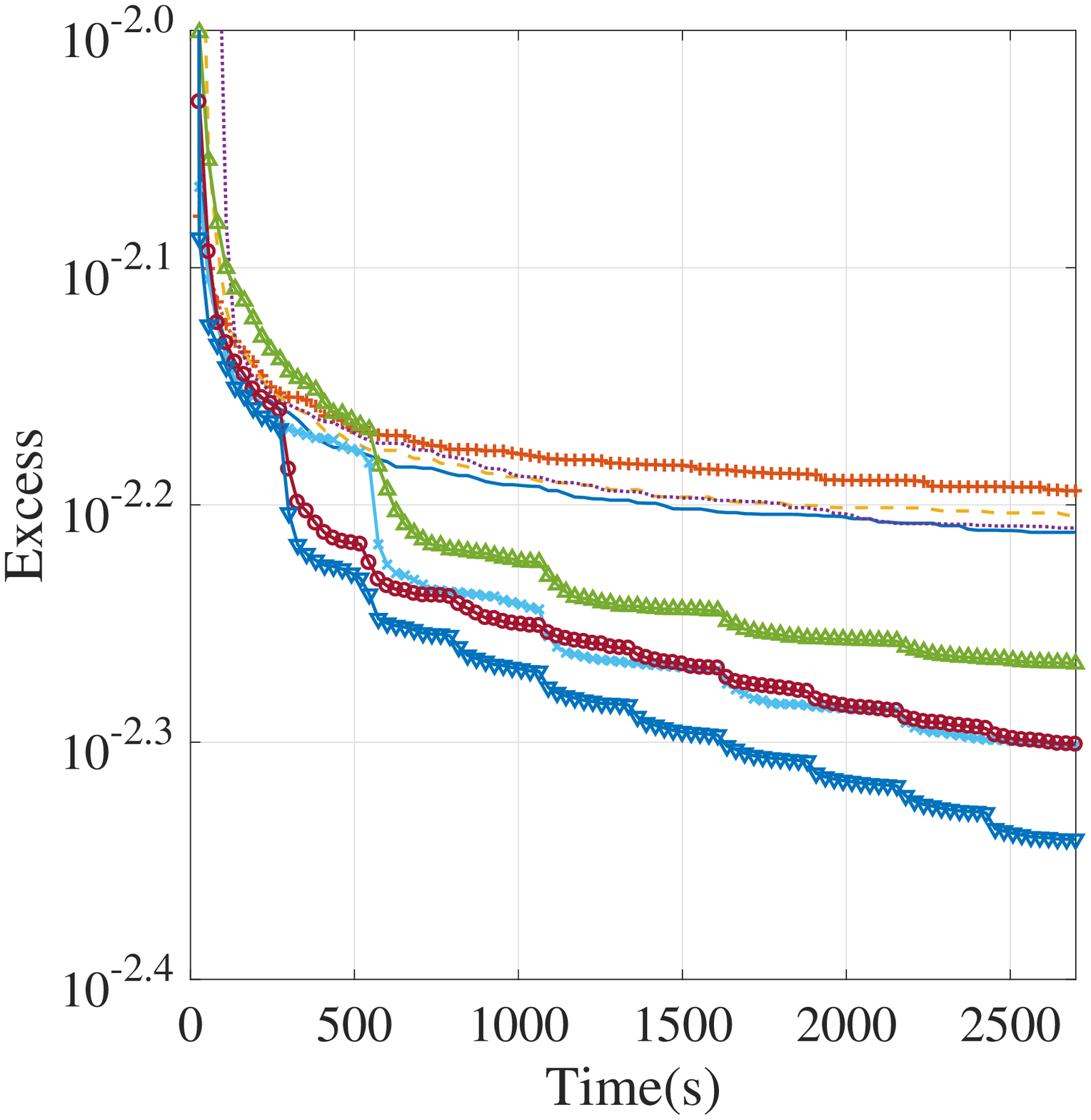}}
  \subfigure[d18512]{
    \label{fig:excess_d18512} %% label for first subfigure
    \includegraphics[height=0.24\linewidth]{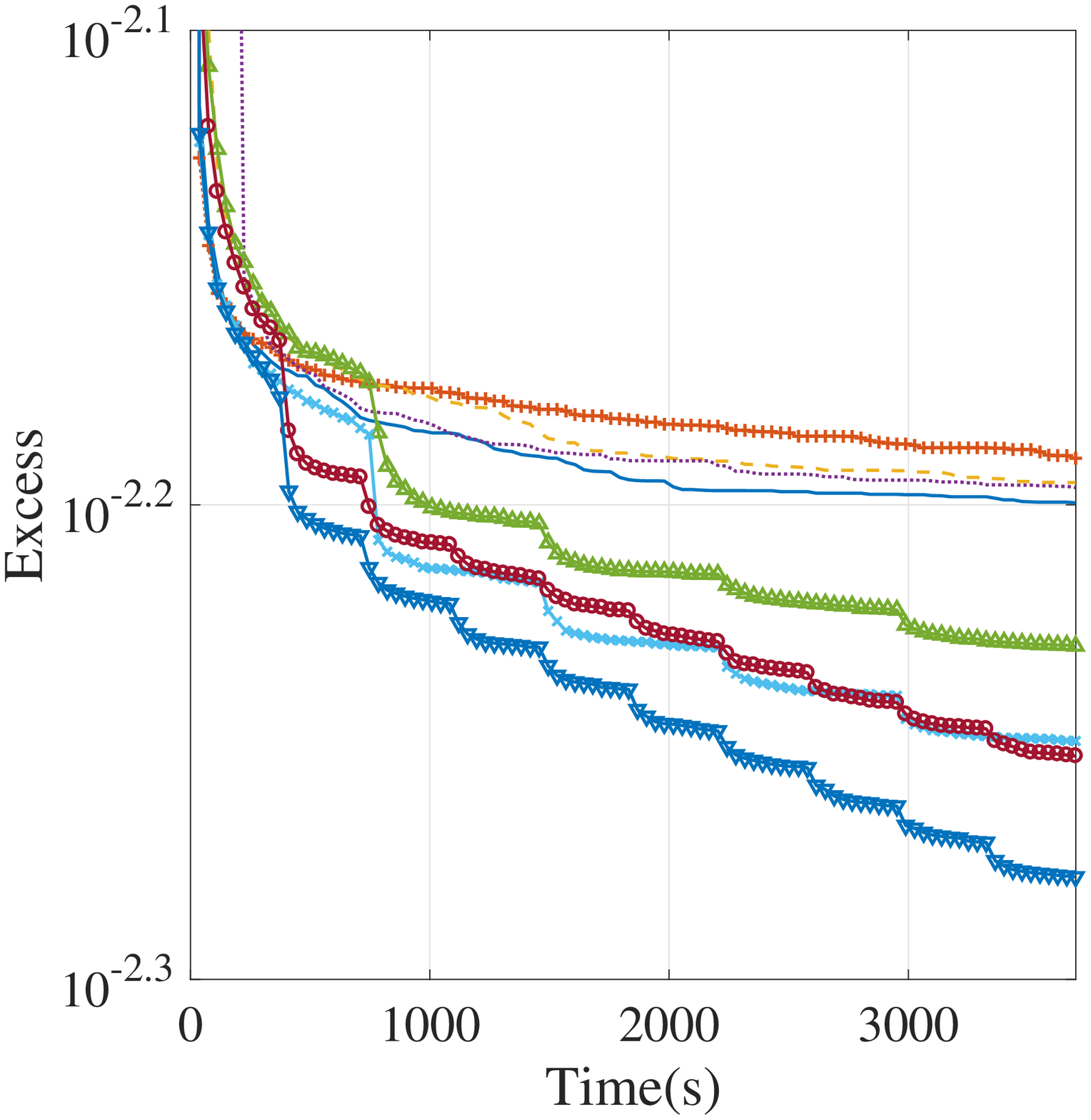}}
    \hspace{-0.12in}
  \subfigure[rand5000]{
    \label{fig:excess_rand5000} %% label for first subfigure
    \includegraphics[height=0.24\linewidth]{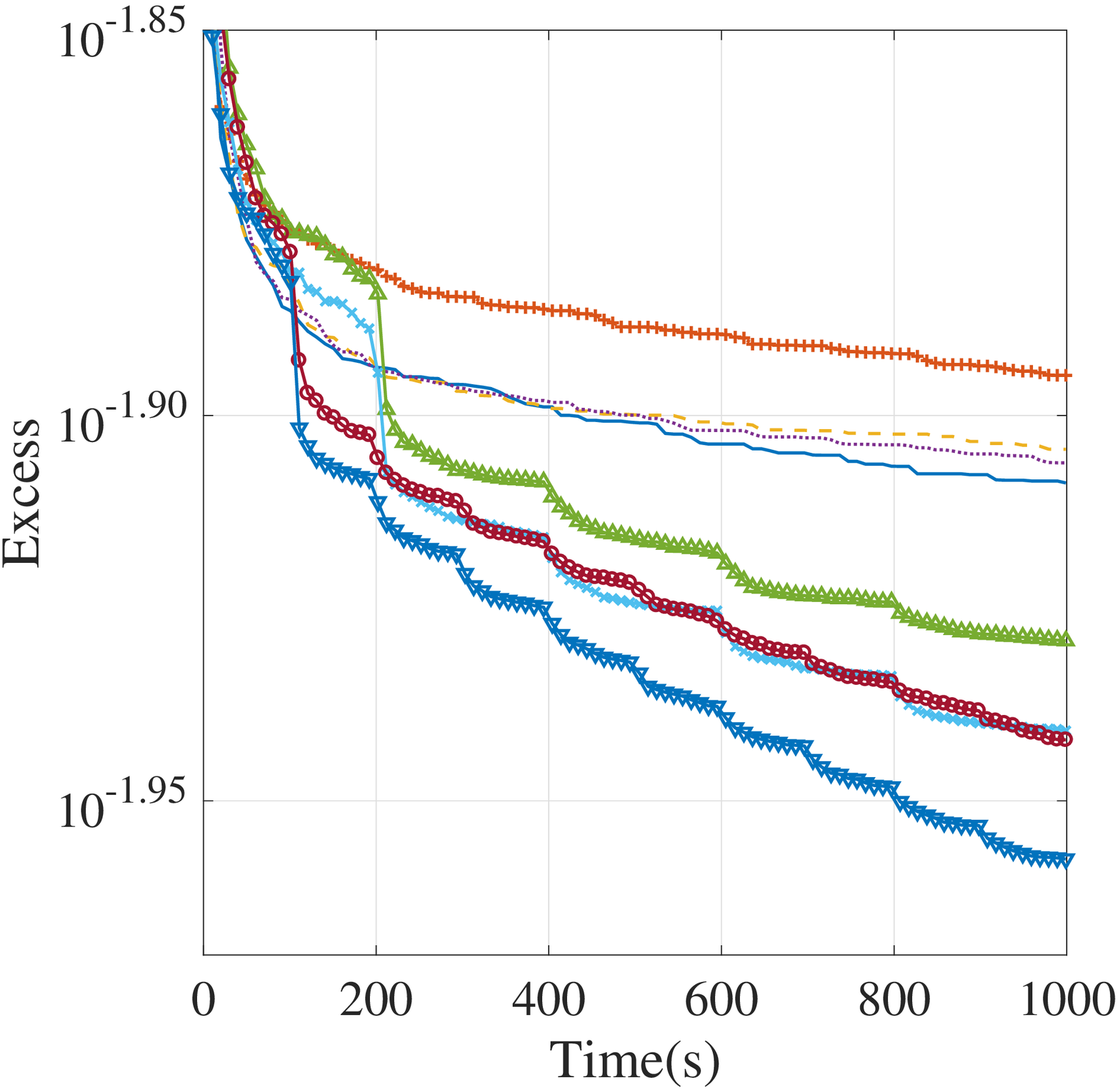}}
    \hspace{-0.12in}
  \subfigure[rand10000]{
    \label{fig:excess_rand10000} %% label for first subfigure
    \includegraphics[height=0.24\linewidth]{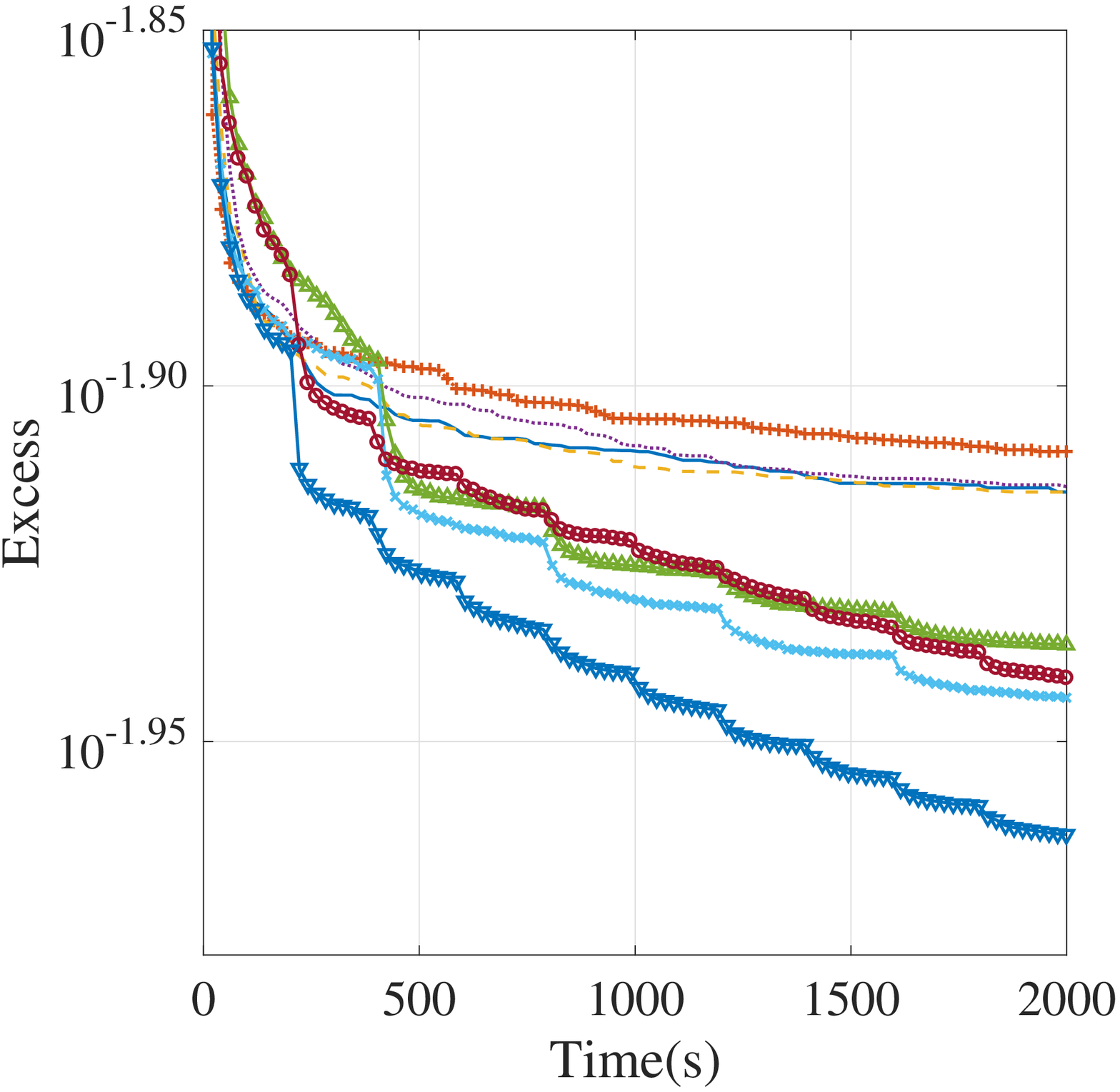}}\\
    \hspace{-0.12in}
  \subfigure[rand15000]{
    \label{fig:excess_rand15000} %% label for first subfigure
    \includegraphics[height=0.24\linewidth]{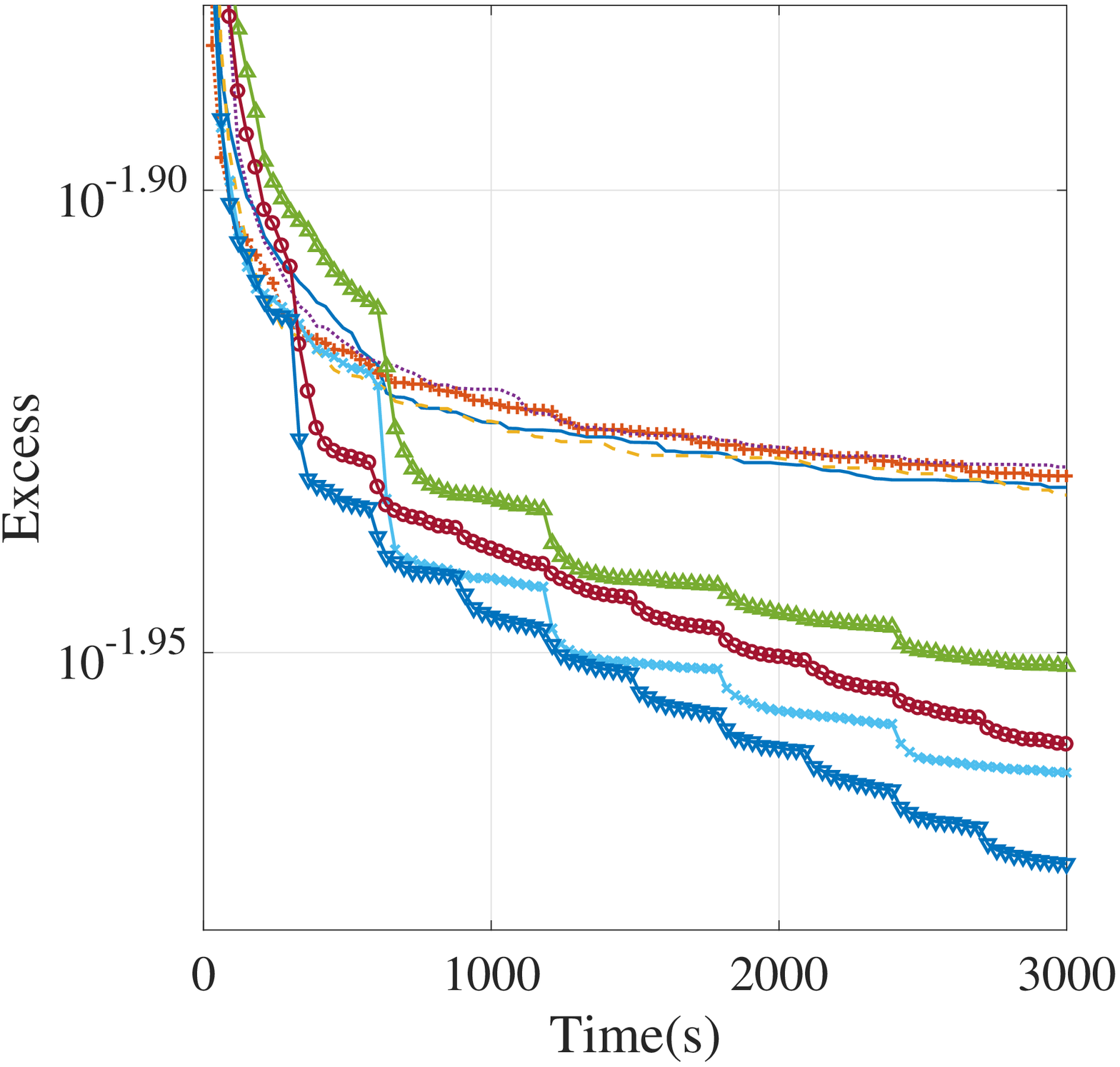}}
    \hspace{-0.12in}
  \subfigure[rand20000]{
    \label{fig:excess_rand20000} %% label for first subfigure
    \includegraphics[height=0.24\linewidth]{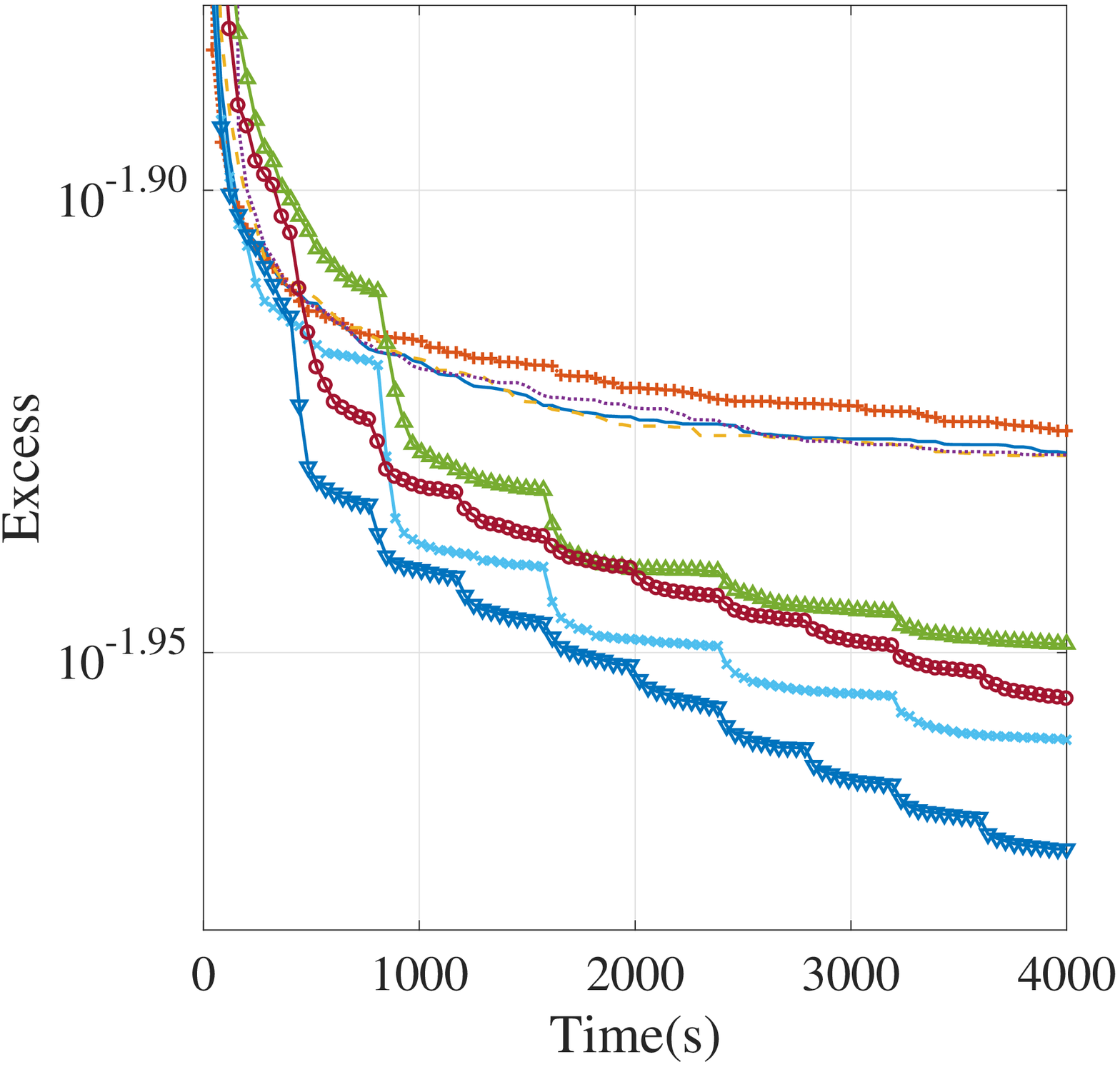}}
  \caption{Comparison results between LSILS-LK, ILS-LK, GH-LK and SSA-LK based on the LK local search on middle-size and large-size TSP instances in terms of the excess metric.}\label{fig:excess_LK}% epm_201909102230, epm_201909120839 epm_201908041014 epm_201909021141
\end{figure*}

\section{Conclusions} \label{sec:conclu}

In this paper, a new TSP landscape smoothing method called the Homotopic Convex (HC) transformation was proposed. The HC transformation is defined as the convex combination of the original TSP and a convex-hull TSP. For a given TSP, the convex-hull TSP is constructed based on a known optimum. We proved that the convex-hull TSP is unimodal for any $k$-Opt local search ($k \geq 2$). Our empirical study showed that controlled by the convex coefficient, the proposed HC transformation can smooth the landscape in terms that in the transformed TSP, the number of local optima is reduced and the fitness distance correlation of the TSP landscape is increased. Systematic experiments were conducted to verify our statements, in which the HC transformation was used to transform 12 TSP instances with different $\lambda$ values and the characteristics of the landscape of the transformed TSPs were analyzed. In addition, to investigate the relationship between the smoothing effect of the HC transformation and the fitness of the local optimum it based on, we empirically analyzed the effect of the HC transformation on the TSP instance berlin52 based on 89 local optima and the global optimum. The experimental results showed that the HC transformation do have the claimed advantages and using a high-quality local optimum can help the HC transformation achieve a smoothing effect similar to the smoothing effect using the global optimum

Based on the observations, we proposed a landscape smoothing based iterative algorithmic framework, in which the HC transformation is combined with a local search algorithm. An instantiation of the algorithmic framework with 3-Opt local search and double bridge perturbation, named as Landscape Smoothing Iterated Local Search (LSILS), was studied on some widely-used TSP test instances. The experimental results showed that LSILS significantly outperforms ILS and two existing TSP smoothing algorithms on most test instances. In addition, we embedded the LK local search and three double bridge perturbation strategy within the framework, named as LSILS-LK and LSILS-LK-3DBP respectively and tested the resultant algorithm on some middle-size and large-size TSP instances (with up to 20,000 cities). The experimental results showed that LSILS-LK performs the best on all the middle-size and large-size TSP instances against the ILS-LK and other algorithms and the three double bridge perturbation strategy can further improve the performance of LSILS-LK.  In addition, we found that LSILS-LK-3DBP performs significantly better than LSILS-LK, which implies that a larger jump strategy is helpful to further improve LSILS. In conclusion, we may conclude that the  the proposed HC transformation is very promising to improve the global search ability of existing TSP algorithms.

In the future, we will apply the proposed HC transformation to the asymmetric TSP and other combinatorial optimization problems. It is not easy since the HC transformation is based on a unimodal toy problem with known optimum. For other combinatorial optimization problems, finding a way to build this unimodal toy problem is very challenging. We will try some problems that are similar to the TSP, for example, the Vehicle Routing Problem (VRP). Another possible research avenue is to use the HC transformation to improve other kinds of TSP heuristics, for example, ACO and GA.

\appendices
% use section* for acknowledgement
%\section*{Acknowledgment}
%The authors gratefully acknowledge...

%======================================================================================================
% trigger a \newpage just before the given reference
% number - used to balance the columns on the last page
% adjust value as needed - may need to be readjusted if
% the document is modified later
%\IEEEtriggeratref{8}
% The "triggered" command can be changed if desired:
%\IEEEtriggercmd{\enlargethispage{-5in}}

% references section

% can use a bibliography generated by BibTeX as a .bbl file
% BibTeX documentation can be easily obtained at:
% http://mirror.ctan.org/biblio/bibtex/contrib/doc/
% The IEEEtran BibTeX style support page is at:
% http://www.michaelshell.org/tex/ieeetran/bibtex/
\bibliographystyle{IEEEtran}
% argument is your BibTeX string definitions and bibliography database(s)
%\bibliography{Ref_Shi}
%
% <OR> manually copy in the resultant .bbl file
% set second argument of \begin to the number of references
% (used to reserve space for the reference number labels box)
% Generated by IEEEtran.bst, version: 1.13 (2008/09/30)

%%%%%%%%%%%%%%%%%%%%%%%%%%%%%%%%%%%%%%%%%%%%%%%%%%%%%%%%%%%%%%%%%%%%%%%%%%%%%%%%%%%%%%%%%%%%

%%%%%%%%%%%%%%%%%%%%%%%%%%%%%%%%%%%%%%%%%%%%%%%%%%%%%%%%%%%%%%%%%%%%%%%%%%%%%%%%%%%%%%%%%%%%%%%%%%%%%

% biography section
%
% If you have an EPS/PDF photo (graphicx package needed) extra braces are
% needed around the contents of the optional argument to biography to prevent
% the LaTeX parser from getting confused when it sees the complicated
% \includegraphics command within an optional argument. (You could create
% your own custom macro containing the \includegraphics command to make things
% simpler here.)
\begin{IEEEbiography}[{\includegraphics[width=1in,height=1.25in,clip,keepaspectratio]{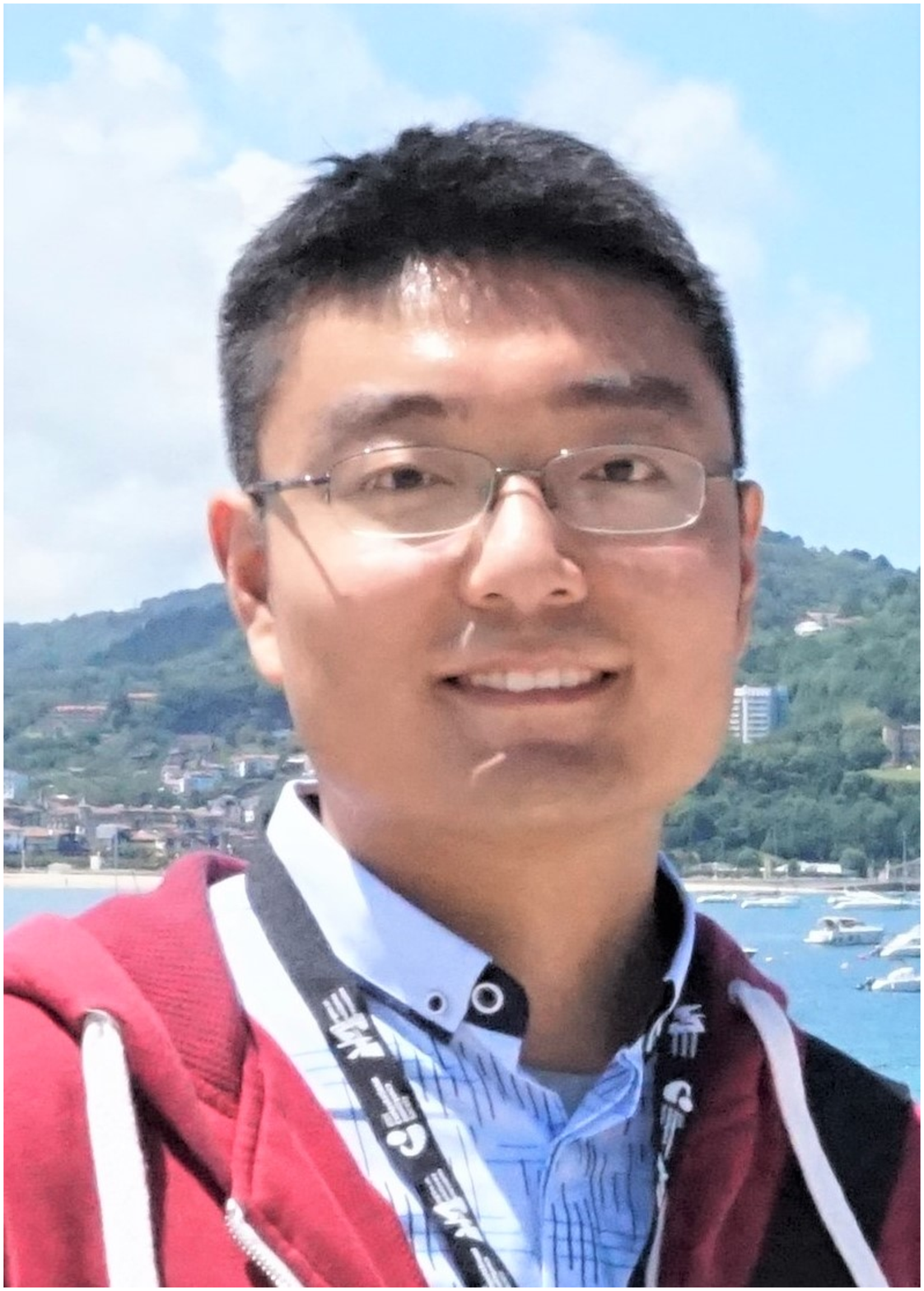}}]{Jialong Shi}
received the B.Sc. and M.Sc. degrees in information engineering from Xi'an Jiaotong University, Xi'an, China, in 2012 and 2014, respectively, and the Ph.D. degree in computer science from City University of Hong Kong, Hong Kong SAR, China, in 2018. He is currently a postdoctoral fellow at the School of Mathematics and Statistics, Xi'an Jiaotong University. His main research interests include
metaheuristics, parallel computation, multiobjective optimization, machine learning and their applications.
\end{IEEEbiography}

\begin{IEEEbiography}[{\includegraphics[width=1in,height =1.25in,clip,keepaspectratio]{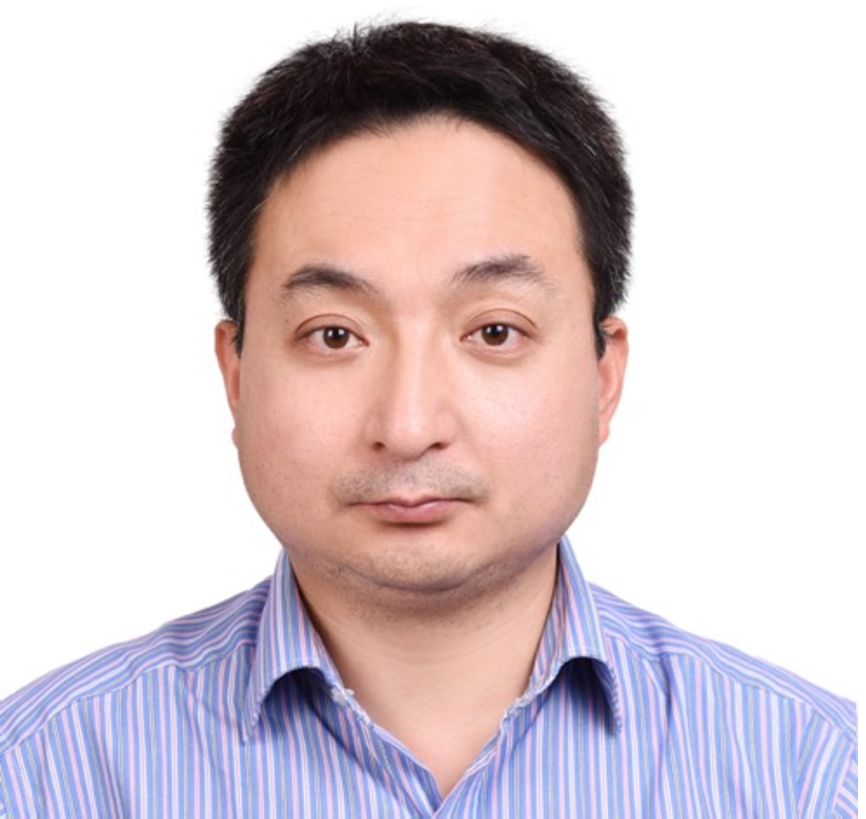}}]{Jianyong Sun} received the B.Sc. and M.Sc. degree in Mathematics from Xi'an Jiaotong University, Xi'an, China, in 1997 and 1999, respectively, and the Ph.D. degree in Computer Science from University of Essex, Colchester, U.K., in 2006. He is a full Professor in the School of Mathematics and Statistics, Xi'an Jiaotong University. His research interests include evolutionary computation and optimization, statistical machine learning, and their applications in bioinformatics, astro-informatics and image processing. \end{IEEEbiography}

\begin{IEEEbiography}[{\includegraphics[width=1in,height=1.25in,clip,keepaspectratio]{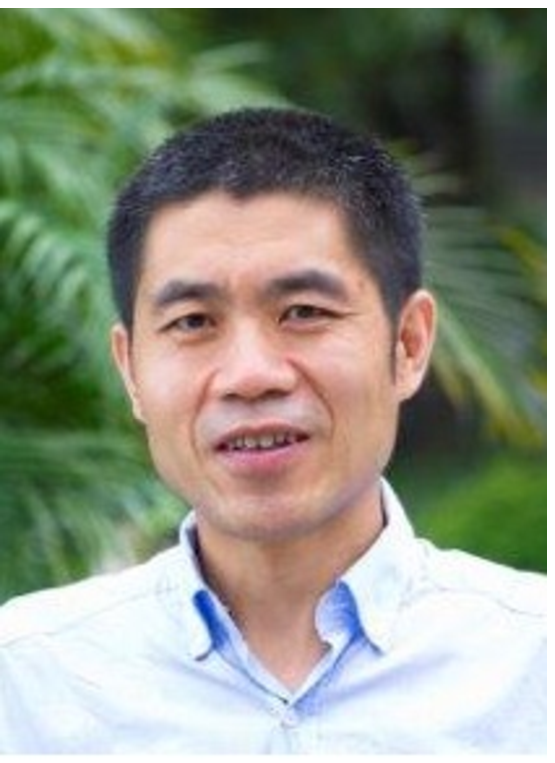}}]{Qingfu Zhang} received the BSc degree in mathematics from Shanxi University, China in 1984, the MSc degree in applied mathematics and the PhD degree in information engineering from Xidian University, China, in 1991 and 1994, respectively.

He is a Chair Professor of Computational Intelligence at the Department of Computer Science, City University of Hong Kong. His main research interests include evolutionary computation, optimization and machine learning.

Dr. Zhang is an Associate Editor of the IEEE Transactions on Evolutionary Computation and the IEEE Transactions on Cybernetics. He is a Web of Science highly cited researcher in Computer Science for four consecutive years from 2016.

\end{IEEEbiography}

\begin{IEEEbiography}[{\includegraphics[width=1in,height =1.25in,clip,keepaspectratio]{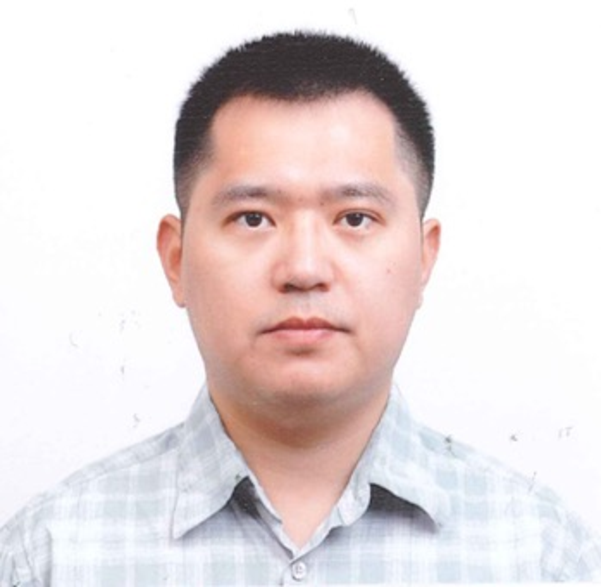}}]{Kai Ye} obtained B.S. and M.S. from Wuhan University and PhD from Leiden University in 2008. After one year postdoc training at European Bioinformatics Institute, he joined Leiden University Medical Center in 2009 as an assistant professor. In 2012, he moved to Washington University in St. Louis. He received the China 1000 Young Talent Program Award in 2016 and is a full Professor in the School of Electronics and Information Engineering at Xi'an Jiaotong University. He has published over 50 peer-reviewed papers. Dr. Ye's research interests include sequential pattern mining, computational methodology on biomedical big data and applications on precision medicine.
\end{IEEEbiography}

% if you will not have a photo at all:
%\begin{IEEEbiographynophoto}{John Doe}
%Biography text here.
%\end{IEEEbiographynophoto}

% insert where needed to balance the two columns on the last page with
% biographies
%\newpage

%\begin{IEEEbiographynophoto}{Jane Doe}
%Biography text here.
%\end{IEEEbiographynophoto}

% You can push biographies down or up by placing
% a \vfill before or after them. The appropriate
% use of \vfill depends on what kind of text is
% on the last page and whether or not the columns
% are being equalized.

%\vfill

% Can be used to pull up biographies so that the bottom of the last one
% is flush with the other column.
%\enlargethispage{-5in}

% that's all folks

\end{document}